\documentclass{article}
\PassOptionsToPackage{authoryear, round, sort, compress}{natbib}

\usepackage[final]{neurips_2023}

\usepackage[utf8]{inputenc} %
\usepackage[T1]{fontenc}    %
\usepackage[colorlinks=true, linkcolor=red, urlcolor=blue, citecolor=Blue, anchorcolor=blue, backref=page]{hyperref}       %
\renewcommand*{\backref}[1]{}
\renewcommand*{\backrefalt}[4]{%
    \ifcase #1%
          \or [Cited on page~#2.]%
          \else [Cited on pages~#2.]%
    \fi%
    }
\usepackage{booktabs}       %
\usepackage{amsfonts}       %
\usepackage{nicefrac}       %
\usepackage{microtype}      %
\usepackage[dvipsnames]{xcolor}         %
\usepackage{amssymb}
\usepackage{bm}
\usepackage{comment}

\usepackage{natbib} %
\usepackage{mathtools} 

\usepackage{tikz}
\usepackage{subfig}
\usepackage{amsthm}
\usepackage{thmtools, thm-restate}
\usepackage{cleveref}
\usepackage{url}
\usepackage{wrapfig}
\usepackage{mdframed}
\usepackage{notation1}
\usepackage[shortlabels]{enumitem}

\makeatletter
\newcommand\footnoteref[1]{\protected@xdef\@thefnmark{\ref{#1}}\@footnotemark}
\makeatother

\crefname{figure}{Fig.}{Figs.}
\crefname{definition}{Defn.}{Defns.}
\crefname{corollary}{Corollary}{Corollaries}
\crefname{lemma}{Lemma}{Lemmas}
\crefname{proposition}{Prop.}{Props.}
\crefname{theorem}{Thm.}{Thms.}
\crefname{remark}{Remark}{Remarks}
\crefname{principle}{Principle}{Principles}
\crefname{lemma}{Lemma}{Lemmas}
\crefname{table}{Tab.}{Tabs.}
\crefname{section}{\S}{\S\S}
\crefname{subsection}{\S}{\S\S}
\crefname{subsubsection}{\S}{\S\S}
\crefname{assumption}{Asm.}{Asms.}
\crefname{appendix}{App.}{Apps.}
\bibliographystyle{abbrvnat}

\title{Extending Independent Mechanism Analysis to Manifolds}
\title{Independent Mechanism Analysis on Manifolds}
\title{Independent Mechanism Analysis \\ and the Manifold Hypothesis%
}

\author{
Shubhangi Ghosh\,$^{1,}$\thanks{S.G. mostly worked on this project while at ETH, Z\"urich \& the MPI for Intelligent Systems, T\"ubingen.}
\And Luigi Gresele\,$^{2}$
\And Julius von K\"ugelgen\,$^{2,3}$
\AND Michel Besserve\,$^{2}$ \quad\quad
Bernhard Sch\"olkopf\,$^{2}$\\[.75em] %
$^1$ Columbia University, USA\\
\quad 
$^2$ Max Planck Institute for Intelligent Systems, T\"ubingen, Germany\\
$^3$ University of Cambridge, United Kingdom \\[.25em]
\texttt{shubhangi.ghosh@columbia.edu}\\
\texttt{\{luigi.gresele,jvk,besserve,bs\}@tue.mpg.de}
}

\begin{document}

\maketitle

\begin{abstract}  
    \looseness-1 
    Independent Mechanism Analysis (IMA) seeks to address non-identifiability in nonlinear Independent Component Analysis (ICA) by assuming that the Jacobian of the mixing function has orthogonal columns. 
    As typical in ICA, previous work focused on the case with an equal number of latent components and observed mixtures. %
    Here, we extend IMA to settings with a larger number of mixtures that reside on a manifold embedded in a
    higher-dimensional 
    space---in line with the {\em manifold hypothesis} in representation learning.
    For this setting, we show that IMA still circumvents several non-identifiability issues, suggesting that it can also be a beneficial principle for higher-dimensional observations when the manifold hypothesis holds.
  Further, we prove that
  the IMA principle is approximately satisfied  with high probability (increasing with the number of observed mixtures) when the directions along which the latent components influence the observations are chosen independently at random.
  This provides a new and rigorous statistical interpretation of IMA.
\end{abstract}

\section{Introduction}
\looseness-1 
Nonlinear Independent Component Analysis (ICA) 
provides a principled approach to representation learning~\citep{uni, gresele2020incomplete, khemakhem2020variational}.
It %
postulates that the observed variables are 
nonlinear mixtures of independent latent components, 
and focuses on whether it is possible to reconstruct the latent components from
the mixtures---formalized through the notion of {\em identifiability}. 
When the mixing is nonlinear, the model is provably non-identifiable 
without additional assumptions~\citep{hyvarinen1999nonlinear},
i.e., the latent variables cannot be recovered.
Independent Mechanism Analysis%
~\citep[IMA;][]{Greseleetal21} seeks to address this problem by restricting the class of considered mixing functions. Specifically, IMA postulates that the columns of the Jacobian of the mixing function, 
which describe how each latent component {\em influences} the observed mixtures,
are orthogonal. This can be viewed as %
encoding a non-statistical notion of {independence} among these influences which is
inspired by the principle of Independent Causal Mechanisms (ICM)~\citep{peters2017elements,janzing2010causal}. 

While identifiability of IMA remains an open question, 
\citet{Greseleetal21} showed that IMA can circumvent certain non-identifiability issues arising in nonlinear ICA %
by 
ruling out well-known counterexamples or {\em spurious solutions}~\citep{hyvarinen1999nonlinear, darmois1951construction, locatello2019challenging}. \citet{buchholz2022function} then proved that IMA is, in fact, {\em locally identifiable}. Further, \citet{reizinger2022embrace} showed that IMA may also provide a way to explain the empirical success of Variational Autoencoders~\citep[VAEs;][]{kingma2014autoencoding} in disentangled representation learning.
\looseness-1 
However, all the aforementioned 
works on IMA consider a setting with an equal number of latent components and observed mixtures, the one most typically studied in ICA~\citep{ICAbook, hyvarinen2023identifiability}.\footnote{For {linear} ICA, identifiability has also been established for the case of more mixtures than sources~\citep{eriksson2004identifiability}. For  nonlinear ICA,  \citet{khemakhem2020variational} extended existing identifiability results relying on additional supervision in the form of an auxiliary variable~\citep{hyvarinen2016unsupervised, hyvarinen2017nonlinear, hyvarinen2019nonlinear, gresele2020incomplete, halva2020hidden} to the high-dimensional
observation setting.%
}
As a result, they do not directly apply to
cases in which the observed data is high-dimensional and the latents low-dimensional, as is often the case in representation learning---e.g., for images or biomedical data. 
In this work, we address this shortcoming of previous theory and generalize IMA to higher-dimensional observations.
In particular, we adopt the {\em manifold hypothesis}~\citep{becker1992self, vincent2002manifold, bengio2013representation} which posits that {many high-dimensional data sets that occur in the real world actually lie along low-dimensional manifolds inside that high-dimensional space}~\citep{cayton2005algorithms, fefferman2016testing}.\footnote{In this work, we mostly assume that observations lie exactly {\em on} a low-dimensional manifold, not {\em close to it}.} 
In this spirit, we extend the analysis of IMA to the setting in which observations lie on a low-dimensional Riemannian manifold, with dimension equal to that of the latent space, embedded in a higher-dimensional observation space.

We show that IMA still helps circumvent non-identifiability issues in this scenario, in the sense that it rules out several kinds of spurious solutions when the generative model satisfies the manifold hypothesis. This suggests that IMA may also be
useful for more realistic representation learning settings involving dimensionality reduction. This insight is consistent with work by
\citet{cunningham2022principal} which also provides empirical evidence 
illustrating
the benefits of an
orthogonality constraint akin to IMA
for unsupervised representation learning with dimensionality reduction---albeit from a different perspective than the one based on nonlinear ICA and identifiability which we adopt here.

\looseness-1
According to~\citet{Greseleetal21}, IMA
can intuitively be  interpreted %
as 
{%
``Nature choosing the direction of the
influence of each source component in the observation space independently and from an isotropic
prior''}. 
Based on the manifold hypothesis, we provide a quantitative argument that formalizes
this statement: when the observations lie on a low-dimensional manifold in the higher-dimensional ambient space, we show that the IMA principle is approximately satisfied with high probability
if the influence directions are sampled independently and isotropically in the high-dimensional space,
with increasing probability as its dimensionality grows. The argument is based on a concentration inequality---{\em Levy's Lemma}~\citep[see, e.g.,][Lemma 1]{janzing2010telling}---and relies on a construction which generates 
smoothened piecewise-affine functions. These functions also play an important role in the theoretical analysis of deep neural networks \citep[e.g.,][]{montufar204number}.
Our work thus shows that, under the manifold hypothesis, the IMA principle can be considered the consequence of a {\em genericity} assumption on the data generating process~\citep{freeman1994generic, janzing2010telling,besserve2018group}. 

\textbf{Structure and Main Contributions:}
\begin{itemize}[itemsep=0em,topsep=0em,leftmargin=0.75em]
    \item \cref{sec:background} briefly reviews independent component analysis and Independent Mechanism Analysis (IMA).\looseness=-1
    \item In~\cref{sec:man_hyp}, we introduce an extension of the
    IMA principle under the {\em manifold hypothesis}.
    \item In~\cref{sec:spurious} we then show that certain common counterexamples to identifiability %
    are ruled out by our extension of IMA to manifolds. 
    \item In~\cref{sec:genericity}, we show that,
     when the manifold hypothesis holds,
    the IMA principle follows from a {\em genericity} assumption on the data-generating process.
\end{itemize}

\section{Background}
\label{sec:background}
Independent Comonent Analysis (ICA)~\citep{comon1994independent, ICAbook} %
assumes a data-generating process
where {\em  observed mixtures}
\(\mathbf{x} \in \mathbb{R}^d\) are generated by a smooth and invertible {\em mixing function} \(\mathbf{f}: \mathbb{R}^d \to \mathbb{R}^d\) belonging to a function class $\Fcal$, which takes as input a vector \(\mathbf{s} \in \mathbb{R}^d\) sampled from a distribution with independent components, i.e., %
\begin{equation}
    \label{eq:dgp}
    \mathbf{x} = \mathbf{f}(\mathbf{s}), \quad \quad \mathbf{s} \sim p(\mathbf{s})=\prod^d_{i=1} p_i(s_i)\,.
\end{equation}

\looseness-1 
{\em Can we recover the independent components based only on the observed mixtures?} Unfortunately, when \(\mathbf{f}\) is nonlinear, the model is non-identifiable without additional constraints~\citep{darmois1951construction, hyvarinen1999nonlinear}---i.e., the independent components cannot be unambiguously recovered from the observed mixtures. This can be shown through suitable {\em spurious solutions}, which transform the  mixtures $\xb$ into independent components which {\em may themselves be mixtures of the true ones}.
\begin{definition}[Darmois construction~\citep{darmois1951construction, hyvarinen1999nonlinear}]
\label{def:darmois}
The Darmois construction $\mathbf{g^D}: \mathbb{R}^{\textrm{d}} \rightarrow (0,1)^{d}$ transforms a given distribution to the uniform distribution by recursively applying the conditional Cumulative Distribution Function (CDF) transform,
\begin{equation}
    \label{eq:darmois}
    g_i^{\textrm{D}}(\mathbf{x}_{1:i}) = \int_{-\infty}^{x_i} p(x'_i|\mathbf{x}_{1:i-1})dx'_i\,.
\end{equation}
\end{definition}

\begin{definition}[Rotated-Gaussian measure preserving Automorphism (MPA)~\citep{locatello2019challenging, khemakhem2020variational}]
\label{def:rgmpa}
The rotated-Gaussian MPA transforms a given density into a Gaussian density by the CDF-transform, applies as orthonormal rotation, and inverts the preceding transformation. Let \(\Fb_\sb(\sb)\) and \(\bm\Phi(\zb)\) denote the CDFs of the latent source distribution and the multivariate Gaussian distribution respectively. For an orthogonal matrix, $\Rb\in O(d)$, the ``rotated-Gaussian'' MPA $\ab^{\Rb}(p_\sb)$ is defined as,
\begin{equation}
\label{eq:measure_preserving_automorphism_Gaussian}
\textstyle
    \ab^{\Rb}(p_\sb) =\Fb_\sb^{-1} \circ \bm\Phi \circ \Rb \circ \bm\Phi^{-1} \circ \Fb_\sb\,.
\end{equation}%
\end{definition}

Towards the goal of achieving identifiability, Independent Mechanism Analysis (IMA)~\citep{Greseleetal21} 
constraints the mixing function class $\Fcal$. IMA postulates that the influence directions of individual latent sources in the mixing process, given by \(\frac{\partial \mathbf{f}}{\partial s_i}\), are orthogonal---i.e.,
\begin{equation}
    \label{eq:ima-old}
     \log |\mathbf{J_{f}}(\mathbf{s})| = \sum_{i=1}^{d} \log \norm{\frac{\partial \mathbf{f}}{\partial s_i} (\mathbf{s})}\,.
\end{equation}   
While identifiability of IMA is still an open question, it provably rules out both the Darmois construction %
and the rotated-Gaussian MPA%
~\citep{Greseleetal21}; empirically, IMA allows recovery of the ground truth sources when the data-generating process satisfies the IMA principle~\citep{Greseleetal21}, as well as under mild model misspecification~\citep{sliwa2022probing}. Moreover, IMA was proved to be {\em locally identifiable}~\citep{buchholz2022function}.

\section{IMA under the Manifold Hypothesis}
\label{sec:man_hyp}
\looseness-1 In the following, we revisit the generative process in~\eqref{eq:dgp} as follows: the observed mixtures lie on a \(d\)-dimensional Riemannian manifold (a smooth manifold with a \(d\)-dimensional tangent space), \(\XX \subseteq \RR^m, m \ge d\), which is embedded in the \(m\)-dimensional Euclidean space. 
The mixing function \({\fb: \RR^d \to \XX}\) is a \emph{diffeomorphism}\footnote{A diffeomorphism is an invertible function between two manifolds such that both the function and its inverse are continuously differentiable.} from the latent space to the observation manifold:
the observations therefore lie on a low-dimensional manifold within the high-dimensional ambient space, in line with the {\em manifold hypothesis}~\citep{becker1992self, vincent2002manifold, bengio2013representation}. To study IMA in this setting, the main definitions in~\citep{Greseleetal21} need to be adapted,
since they were originally tailored to the setting where latent and observation spaces have the same dimension.

We start by extending the IMA principle~\citep[Principle 4.1]{Greseleetal21}. We say that when the manifold hypothesis holds, the IMA principle implies the following equality:
\begin{align}
    \label{ima:condition}
    \sum^d_{i=1} \text{log}\left \|\frac{\partial \fb}{\partial s_i}(\mathbf{s}) \right \| = \frac{1}{2}\text{log}\left| \Jb^{\top}_\fb(\mathbf{s})\Jb_\fb(\mathbf{s})\right| \quad \forall \;\sb \in \RR^d,
\end{align}
where \(\mathbf{J_{f}}(\mathbf{s})\) is the Jacobian of \(\fb\) at \(\sb\). Equation~\eqref{ima:condition} therefore states that the area element on the Riemannian manifold \(\XX\) at \(\xb = \fb(\sb)\), given by \(\sqrt{\left |\Jb^{\top}_{\fb}(\sb)\Jb_{\fb}(\sb) \right |}\), equals the product of the norms of the {\em influences} \(\|\frac{\partial \fb}{\partial s_i}\|\) that span that element. It is therefore an orthogonality condition, similar to the one expressed in~\cref{eq:ima-old} for $m=d$: note however that~\cref{ima:condition} is meaningful for any $m \geq d$. %

Based on~\eqref{ima:condition}, we redefine the {\em local IMA contrast}~\citep[Def. 4.2]{Greseleetal21}, which quantifies the violation of the IMA principle at a given point \(\xb = \fb(\sb)\), and state two of its useful properties.
\begin{definition}[Local IMA contrast] 
The local IMA contrast, \(c_{\textsc{ima}}(\fb, \mathbf{s})\), of \(\fb\) at point \(\mathbf{s}\), is defined as
\begin{align}
    c_{\textsc{ima}}(\fb, \mathbf{s}) = \sum^d_{i=1} \log\left \|\frac{\partial \fb}{\partial s_i}(\mathbf{s}) \right \| - \frac{1}{2}\log\left| \Jb^{\top}_\fb(\mathbf{s})\Jb_\fb(\mathbf{s})\right|. \label{contrast}
\end{align}
\end{definition}

\begin{restatable}{proposition}{cima}
  The local IMA contrast satisfies the following properties:
\begin{enumerate}[(i)]
    \item \(c_{\textsc{ima}}(\fb, \mathbf{s}) \ge 0\) with equality iff. all columns \(\frac{\partial \fb}{\partial u_i}(\mathbf{s})\) of \(\Jb_\fb(\mathbf{s})\) are orthogonal.
    \item \(c_{\textsc{ima}}(\fb, \mathbf{s})\) is invariant to left multiplication of \(\Jb_\fb(\mathbf{s})\) by an orthogonal matrix and to right multiplication by permutation and diagonal matrices.
\end{enumerate}\label{prop:cima}
\end{restatable}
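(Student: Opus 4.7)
The plan is to reduce both parts to elementary properties of the Gram matrix $\mathbf{G}(\sb) := \Jb_\fb(\sb)^\top \Jb_\fb(\sb) \in \RR^{d \times d}$. Since $\fb$ is a diffeomorphism onto the $d$-dimensional manifold $\XX$, the Jacobian $\Jb_\fb(\sb)$ has full column rank $d$, so $\mathbf{G}(\sb)$ is symmetric and positive definite. Its diagonal entries are exactly the squared column norms, $\mathbf{G}_{ii}(\sb) = \|\partial \fb/\partial s_i\|^2$, so the contrast in \eqref{contrast} rewrites as
\begin{equation*}
c_{\textsc{ima}}(\fb, \sb) \;=\; \tfrac{1}{2}\log \frac{\prod_{i=1}^d \mathbf{G}_{ii}(\sb)}{\det \mathbf{G}(\sb)}.
\end{equation*}

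For part (i), I would invoke Hadamard's inequality for positive definite matrices, which states $\det \mathbf{G} \le \prod_i \mathbf{G}_{ii}$ with equality if and only if $\mathbf{G}$ is diagonal. Since $\mathbf{G}$ is diagonal exactly when the columns of $\Jb_\fb(\sb)$ are pairwise orthogonal, both the nonnegativity claim and the stated equality condition follow at once.

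For part (ii), the three invariances reduce to short algebraic checks. \emph{(a)} Left multiplication by orthogonal $\mathbf{O} \in \RR^{m \times m}$ preserves $\mathbf{G}$, since $(\mathbf{O}\Jb_\fb)^\top(\mathbf{O}\Jb_\fb) = \Jb_\fb^\top \Jb_\fb$, and it preserves every column norm; both terms in the contrast are thus unchanged. \emph{(b)} Right multiplication by a permutation $\mathbf{P}$ sends $\mathbf{G} \mapsto \mathbf{P}^\top \mathbf{G} \mathbf{P}$, leaving $\det \mathbf{G}$ unchanged while permuting the column norms, hence leaving the sum of their logs unchanged. \emph{(c)} Right multiplication by an invertible diagonal $\mathbf{D} = \mathrm{diag}(d_1,\ldots,d_d)$ sends $\det \mathbf{G} \mapsto (\prod_i d_i^2)\det \mathbf{G}$ and each column norm to $|d_i|\,\|\partial \fb/\partial s_i\|$, so both terms of the contrast pick up the same additive contribution $\sum_i \log|d_i|$, which cancels in the difference.

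The only nontrivial ingredient is Hadamard's inequality; the one preliminary worth flagging is that $\mathbf{G}$ must be positive definite, which follows from the full-rank Jacobian of a diffeomorphism onto a $d$-dimensional Riemannian manifold. Everything else is direct substitution, so I do not anticipate a real obstacle beyond carefully stating the rank condition implied by the manifold hypothesis.
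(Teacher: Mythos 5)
Your proof is correct and takes essentially the same route as the paper: both arguments express the contrast in terms of the Gram matrix $\Jb_\fb^\top \Jb_\fb$, invoke Hadamard's inequality for positive definite matrices to obtain nonnegativity and the diagonality equality condition in part (i), and verify part (ii) by direct algebraic substitution of the Gram matrix and column norms under the three transformations. The only cosmetic difference is that the paper phrases part (i) via the shorthand $D^l_{KL}$ from Gresele et al.\ and proves the Hadamard bound explicitly through a Cholesky decomposition, whereas you write the contrast directly as $\tfrac{1}{2}\log\bigl(\prod_i \Gb_{ii} / \det\Gb\bigr)$ and cite the inequality as known; the substance is identical, and your flagging that the diffeomorphism assumption guarantees full column rank (hence positive definiteness of $\Gb$) matches the paper's implicit reliance on that fact.
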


Property {\em (i)} is a geometric condition: given the vectors that span a parallelepiped, the largest volume is obtained when the spanning vectors are orthogonal. Property {\em (ii)} states that permutation and rescaling of the latent factors, or any orthonormal basis transformation applied to the columns of the Jacobian \(\Jb_\fb(\sb)\), do not affect the value of $c_{\textsc{ima}}$. 
Next, we redefine the {\em global IMA contrast}~\citep[Def. 4.5]{Greseleetal21}.

\begin{definition}[Global IMA contrast]
The global IMA contrast, \(C_{\textsc{ima}}\), is defined as the expected value of the local IMA contrast with respect to the source distribution, \(p_\sb\).
\begin{equation}
    C_{\textsc{ima}}(\fb, p_{\mathbf{s}}) = \mathbb{E}_{\mathbf{s} \sim p_{\mathbf{s}}}[c_{\textsc{ima}}(\fb, \mathbf{s})] = \int c_{\textsc{ima}}(\fb, \mathbf{s})p_{\mathbf{s}}(\mathbf{s})d\mathbf{s}. \label{globalima}
\end{equation}    
\end{definition}

\begin{restatable}{proposition}{Cima}
\label{global-ima-prop}
The global IMA contrast (\(C_{\textsc{ima}}(\fb, p_{\mathbf{s}})\) satisfies:
\begin{enumerate}
    \item \(C_{\textsc{ima}}(\fb, p_{\mathbf{s}}) \ge 0\) with equality iff. \(\Jb_{\fb}(\mathbf{s})\) has orthogonal columns almost surely wrt \(\pb_{\mathbf{s}}\).
    \item \(C_{\textsc{ima}}(\fb, p_{\mathbf{s}}) = C_{\textsc{ima}}(\Tilde{\fb}, p_{\mathbf{\Tilde{s}}})\) for any \(\Tilde{\fb} = \fb \circ \hb^{-1} \circ \Pb^{-1}\) and \(\mathbf{\Tilde{s}} = \Pb\hb(\mathbf{s})\) where \(\Pb \in \mathbb{R}^{d \times d}\) is a permutation matrix and \(\hb(\mathbf{s}) = (h_1(s_1), h_2(s_2), ..., h_d(s_d))\) is an invertible element-wise function.     
\end{enumerate}
\end{restatable}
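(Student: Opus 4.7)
Both parts reduce almost mechanically to the pointwise properties of $c_{\textsc{ima}}$ from Proposition \ref{prop:cima} once lifted to an expectation. For part 1, integrating the pointwise inequality $c_{\textsc{ima}}(\fb, \sb) \geq 0$ from Proposition \ref{prop:cima}(i) against $p_\sb$ immediately gives $C_{\textsc{ima}}(\fb, p_\sb) \geq 0$. For the equality case, I would invoke the elementary fact that a nonnegative integrand has zero expectation iff it vanishes almost surely with respect to the measure, and then translate the pointwise equality condition in Proposition \ref{prop:cima}(i) into the stated orthogonality-a.s.\ claim on the columns of $\Jb_\fb(\sb)$.

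For part 2, the first step is to compute $\Jb_{\tilde{\fb}}(\tilde{\sb})$ by the chain rule. Writing $\sb = \hb^{-1}(\Pb^{-1}\tilde{\sb})$ and exploiting the element-wise action of $\hb$, the Jacobian of $\hb^{-1}\circ\Pb^{-1}$ at $\tilde{\sb}$ factors as $\Db(\tilde{\sb})\,\Pb^{-1}$, where $\Db(\tilde{\sb})$ is a diagonal matrix whose entries are the derivatives of the $h_i^{-1}$ evaluated at the corresponding coordinates. Hence $\Jb_{\tilde{\fb}}(\tilde{\sb}) = \Jb_\fb(\sb)\,\Db(\tilde{\sb})\,\Pb^{-1}$, and Proposition \ref{prop:cima}(ii), which asserts invariance of $c_{\textsc{ima}}$ under right multiplication by diagonal and permutation matrices, yields the pointwise equality $c_{\textsc{ima}}(\tilde{\fb}, \tilde{\sb}) = c_{\textsc{ima}}(\fb, \sb)$.

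To conclude, I would apply the change of variables $\tilde{\sb} = \Pb\hb(\sb)$ inside the integral defining $C_{\textsc{ima}}(\tilde{\fb}, p_{\tilde{\sb}})$. Since the Jacobian determinant is $\prod_i |h_i'(s_i)|$, the pushforward formula gives $p_{\tilde{\sb}}(\tilde{\sb})\,d\tilde{\sb} = p_\sb(\sb)\,d\sb$; combining this with the pointwise equality of local contrasts yields $C_{\textsc{ima}}(\tilde{\fb}, p_{\tilde{\sb}}) = C_{\textsc{ima}}(\fb, p_\sb)$. I do not foresee any substantive obstacle: the main care lies in bookkeeping evaluation points, so that the invariance from Proposition \ref{prop:cima}(ii) is applied at arguments $\sb$ and $\tilde{\sb}$ related by the change of variables.
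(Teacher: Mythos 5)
Your proposal is correct and follows essentially the same route as the paper: integrate the pointwise nonnegativity from Proposition~\ref{prop:cima}(i) for part 1, and for part 2 compute the Jacobian via the chain rule as $\Jb_\fb(\sb)\Db\Pb^{-1}$, invoke Proposition~\ref{prop:cima}(ii), and finish with the change-of-variables identity $p_{\tilde{\sb}}(\tilde{\sb})\,d\tilde{\sb} = p_\sb(\sb)\,d\sb$.
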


Property {\em (i)} states that $C_\textsc{ima}$ can be used to verify whether the IMA condition holds almost surely with respect to the latent distribution, \(p_\sb\). Property {\em (ii)} states that the IMA constrast for high-dimensional observations is blind to permutation and element-wise transformation of the sources.

\section{Ruling out ``spurious solutions'' under the Manifold Hypothesis}
\label{sec:spurious}
Measure preserving automorphisms (MPAs) applied in the latent space can be used to construct spurious solutions, by composition with the true mixing function~\citep{xi2023indeterminacy}. Below, we show that the global IMA contrast defined above %
rules out spurious solutions based on different MPAs.
All proofs of the results in this section can be found in~\cref{app:b}.

\paragraph{Gaussian-rotated MPA,~\cref{def:rgmpa}.} We prove that IMA rules out the MPA in~\cref{def:rgmpa} for the case in which the mixing function is a \emph{conformal}\footnote{For a formal definition of a \emph{conformal} map, refer to~\cref{app:b}} (angle-preserving) map---a special case of the IMA function class which, for the $m=d$ case, was proved to be identifiable in~\citep{buchholz2022function}.

\begin{restatable}{theorem}{rgmpa}
\label{rgmpa-theorem}
Consider \((\fb, p_{\sb})\) such that \(C_\IMA(\fb, p_\sb) = 0\), and moreover \(\fb: \RR^d \to \XX\) is a conformal map. Given \(\Rb \in \Ob(n)\), assume additionally that \(\exists\) at least one non-Gaussian \(s_i\) whose associated canonical basis vector \(\eb_i\) is not transformed by \(\Rb^{-1} = \Rb^{\top}\) into another canonical basis vector \(\eb_j\). Then, \(C_{\IMA}(\fb \circ a^{\Rb}(p_{\sb}), p_{\sb}) > 0\). 
\end{restatable}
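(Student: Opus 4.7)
The strategy is to exploit conformality of $\fb$ to reduce the manifold setting to the square-Jacobian ($m=d$) case already settled in \citet{Greseleetal21}. Let $\tilde\fb := \fb \circ a^\Rb(p_\sb)$. By the chain rule, $\Jb_{\tilde\fb}(\sb) = \Jb_\fb(a^\Rb(\sb))\,\Jb_{a^\Rb}(\sb)$, where $\Jb_\fb$ is an $m \times d$ matrix and $\Jb_{a^\Rb}$ is $d \times d$. Conformality of $\fb$ provides a positive scalar function $\lambda$ on $\RR^d$ with $\Jb_\fb^\top \Jb_\fb = \lambda^2\, \mathbf{I}_d$, and this is the only property of $\fb$ I would use henceforth.

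I would next show that $\lambda$ cancels out of the local IMA contrast. Writing $\lambda := \lambda(a^\Rb(\sb))$ for brevity, the column norms and Gramian of $\Jb_{\tilde\fb}$ factor as $\|\partial_i \tilde\fb(\sb)\| = \lambda\, \|(\Jb_{a^\Rb}(\sb))_i\|$ and $\Jb_{\tilde\fb}^\top \Jb_{\tilde\fb}(\sb) = \lambda^2\, \Jb_{a^\Rb}^\top(\sb)\Jb_{a^\Rb}(\sb)$. Substituting into~\eqref{contrast}, the $d \log \lambda$ contributions cancel between $\sum_i \log \|\partial_i \tilde\fb\|$ and $\tfrac{1}{2}\log|\Jb_{\tilde\fb}^\top \Jb_{\tilde\fb}|$, leaving
\[c_\IMA(\tilde\fb, \sb) \;=\; \sum_{i=1}^d \log\bigl\|(\Jb_{a^\Rb}(\sb))_i\bigr\| \;-\; \log\bigl|\det \Jb_{a^\Rb}(\sb)\bigr|,\]
which is exactly the original square-Jacobian local IMA contrast of $a^\Rb$ at $\sb$, with $\fb$ eliminated entirely. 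Taking expectations under $p_\sb$ then reduces the claim $C_\IMA(\tilde\fb, p_\sb) > 0$ to positivity of the original ($m=d$) IMA contrast of the Rotated-Gaussian MPA $a^\Rb$ under $p_\sb$, which is precisely the result proved in \citet{Greseleetal21} under the non-Gaussian eigenvector condition stated in the theorem; invoking it closes the argument.

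The hard part is the decoupling step: the cancellation works only because $\Jb_\fb^\top \Jb_\fb$ is a scalar multiple of the identity, not merely diagonal. For a general IMA $\fb$ whose Jacobian has orthogonal columns of possibly unequal norms, the column-specific scalings appearing in $\sum_i \log \|\partial_i \tilde\fb\|$ would no longer aggregate into a single $d \log \lambda$ that cancels against the half-log-determinant, so residual geometry of $\fb$ would remain coupled to $a^\Rb$ inside $c_\IMA(\tilde\fb, \sb)$. This is presumably why the theorem restricts attention to the conformal subclass, and extending the result to the full IMA class on manifolds would likely require a genuinely different approach.
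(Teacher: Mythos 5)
Your proposal is correct, and it takes a genuinely different (and cleaner) route than the paper's proof. The paper first uses the invariance of $C_\IMA$ to element-wise transformations to strip off $\sigma := \bm\Phi^{-1}\circ\Fb_\sb$, reducing the claim to $C_\IMA(\fb\circ\sigma^{-1}\circ\Rb, p_\zb) > 0$, and then argues by contradiction: assuming the Gram matrix $\Jb^\top\Jb$ is diagonal everywhere (continuity upgrades a.s.\ to everywhere), conformality plus the diagonal Jacobian of $\sigma^{-1}$ force it into the form $\Rb^\top\Db(\zb)\Rb$ whose $(i,j)$ off-diagonal entry is $\lambda^2\sum_k(\partial\sigma_k^{-1}/\partial z_k)^2 r_{ki}r_{kj}$; the conformal factor $\lambda^2>0$ is divided out only at that final line, and the sum is then shown to vary with $z_1$ by non-Gaussianity (strict monotonicity of $\partial\sigma_1^{-1}/\partial z_1$) plus the rotation hypothesis, a contradiction. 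This is a from-scratch re-derivation of the $m=d$ argument of \citet{Greseleetal21} with the conformal scalar carried through and eventually cancelled. Your route instead cancels $\lambda$ at the outset: conformality gives $\Jb_\fb^\top\Jb_\fb = \lambda^2\Ib_d$, so the $d\log\lambda$ terms cancel exactly between the column-norm sum and $\tfrac{1}{2}\log|\Jb^\top\Jb|$, yielding $c_\IMA(\tilde\fb,\sb) = \sum_i\log\|(\Jb_{a^\Rb})_{:,i}\| - \log|\det\Jb_{a^\Rb}|$, the ordinary square local IMA contrast of $a^\Rb$ alone; taking expectations then reduces the statement to the $\fb=\mathrm{id}$ instance of Theorem~4.8 in \citet{Greseleetal21}, which you invoke as a black box. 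Two small things worth noting: (i) your invocation relies on the identity being in the IMA/conformal class and $p_\sb$ being a product density with the stated non-Gaussianity, which is fine but should be checked explicitly; (ii) the hypothesis $C_\IMA(\fb,p_\sb)=0$ is actually implied by conformality and is not independently used in either argument. Your closing remark about why the theorem restricts to conformal (rather than general IMA) maps is apt and makes explicit what the paper's inline cancellation only implicitly relies on.
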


\paragraph{Measure preserving automorphism based on the Darmois construction} The Darmois construction (\cref{def:darmois}) does not directly yield a spurious solution when the dimension of the observed space does not match one of the latent space.\footnote{This is because the CDF transform cannot define a map between a distribution on a higher-dimensional ambient space (observations) to the uniform distribution on a lower-dimensional space (latent sources).} Instead, we construct a spurious solution by applying the Darmois construction to an orthonormal rotation of the latent distribution. We show that the IMA contrast defined in our work can distinguish between such a counterexample and the ground truth (up to tolerable ambiguities like permutation and element-wise transformations). 
\begin{restatable}{theorem}{darmois}
\label{thm:darmois}
Let \((\fb, p_{\sb}) \in \Mcal_{\IMA}\) where \(\fb: \RR^d \to \XX\) is a bijective map and the sources \(\sb\) are such that at most one factor \(s_i\) is Gaussian. Further, we assume that \(\fb\) is a conformal map. Consider an orthonormal transformation \(\Ob \in \RR^{d \times d}\) applied on \(\sb\), \(\Tilde{\xb} = \Ob\sb \in \RR^d\). We further consider that the orthonormal transfomation given by \(\Ob\) is not trivial, i.e. it does not correspond to a permutation or an element-wise scaling. The observations \(\xb \in \XX\) and the transformed variables \(\Tilde{\xb}\) have a bijective relationship. Then any Darmois solution \((\Tilde{\fb}^D, p_\ub)\) based on applying \(\gb^D\) to \(\Tilde{\xb}\) satisfies \(C_{\IMA}(\Tilde{\fb}^D, p_\ub) > 0\). Here, \(\Tilde{\fb}^D = \fb \circ \Ob^{\top} \circ {\gb^D}^{-1}\). Thus a solution satisfying \(C_{\IMA}(\fb, p_{\sb})\) can be distinguished from \((\Tilde{\fb}^D, p_\ub)\) based on the contrast \(C_{\IMA}\).%
\end{restatable}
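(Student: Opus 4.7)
The plan is to first compute the Jacobian of $\tilde{\fb}^D$ by the chain rule and exploit conformality of $\fb$ to collapse the high-dimensional IMA contrast to the ordinary Hadamard defect of the Darmois map's Jacobian. Setting $\sb = \Ob^\top {\gb^D}^{-1}(\ub)$, the chain rule gives $\Jb_{\tilde{\fb}^D}(\ub) = \Jb_\fb(\sb)\,\Ob^\top\,\Jb_{{\gb^D}^{-1}}(\ub)$. Conformality yields $\Jb_\fb(\sb)^\top \Jb_\fb(\sb) = \lambda(\sb)^2\,I_d$ for a scalar $\lambda$, and $\Ob$ is orthogonal, so both $\|\partial_{u_i}\tilde{\fb}^D\|$ and $\tfrac12\log|\Jb_{\tilde{\fb}^D}^\top\Jb_{\tilde{\fb}^D}|$ pick up an identical additive $d\log\lambda(\sb)$ contribution that cancels in~\cref{contrast}. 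One is left with
\begin{equation*}
c_{\IMA}(\tilde{\fb}^D, \ub) \;=\; \sum_{i=1}^{d} \log\bigl\|\Jb_{{\gb^D}^{-1}}(\ub)\,\eb_i\bigr\| \;-\; \log\bigl|\det \Jb_{{\gb^D}^{-1}}(\ub)\bigr|,
\end{equation*}
i.e.\ the ``square'' IMA contrast of the Darmois map alone, completely independent of the embedding $\fb$.

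I would then argue by contradiction and suppose $C_{\IMA}(\tilde{\fb}^D, p_\ub) = 0$. By~\cref{global-ima-prop}(1) this forces $\Jb_{{\gb^D}^{-1}}(\ub)$ to have pairwise orthogonal columns for $p_\ub$-almost every $\ub$. But each coordinate $g_i^D$ depends only on $\tilde{x}_{1:i}$, so $\Jb_{\gb^D}$, and hence its inverse $\Jb_{{\gb^D}^{-1}}$, is lower triangular. An invertible lower triangular matrix with pairwise orthogonal columns is diagonal: column $d$ has a single nonzero entry in position $d$, so orthogonality with each earlier column $j<d$ forces the $(d,j)$-th entry to vanish; a backward induction on the column index eliminates every below-diagonal entry. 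Hence $\Jb_{{\gb^D}^{-1}}(\ub)$ is diagonal almost surely, and by smoothness of the conditional densities entering $\gb^D$ this in turn means that each conditional CDF $P(\tilde{X}_i\leq \cdot \mid \tilde{X}_{1:i-1})$ coincides with the marginal CDF of $\tilde{X}_i$. Iterating, the coordinates of $\tilde{\xb}=\Ob\sb$ are jointly independent.

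The final step invokes the Skitovich--Darmois theorem on the pairs of linear forms $\tilde{x}_k = \sum_j O_{kj}\,s_j$ and $\tilde{x}_\ell = \sum_j O_{\ell j}\,s_j$: their independence forces every source $s_j$ with $O_{kj}O_{\ell j}\neq 0$ to be Gaussian. Since at most one $s_j$ is Gaussian by assumption, for every non-Gaussian index $j$ the $j$-th column of $\Ob$ has at most one nonzero entry, which by orthonormality must equal $\pm 1$. Orthonormality then pins the remaining (possibly Gaussian) column, up to sign, to the one unused standard basis vector, so $\Ob$ must be a signed permutation---a composition of a coordinate permutation with an element-wise sign flip, which is precisely the ``trivial'' orthogonal transformation excluded by hypothesis. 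This contradiction yields $C_{\IMA}(\tilde{\fb}^D, p_\ub) > 0$. The main obstacle I expect is the passage from ``$\Jb_{{\gb^D}^{-1}}$ diagonal $p_\ub$-a.s.'' to genuine statistical independence of the coordinates of $\tilde{\xb}$: this needs a continuity or full-support argument on the Darmois derivatives before Skitovich--Darmois can be invoked; the remaining steps are an algebraic cancellation and a standard application of Hadamard's inequality.
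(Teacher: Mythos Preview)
Your proposal is correct and follows essentially the same route as the paper: chain rule plus conformality reduce the contrast to that of the triangular Darmois Jacobian alone, orthogonality of its columns then forces it to be diagonal (hence ${\gb^D}^{-1}$ element-wise), yielding independence of the coordinates of $\tilde{\xb}=\Ob\sb$ and a contradiction with the non-triviality of $\Ob$. The only notable difference is that you explicitly invoke the Skitovich--Darmois theorem (using the ``at most one Gaussian'' hypothesis) for the final implication, which the paper's proof asserts without naming.
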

\cref{rgmpa-theorem} and~\cref{thm:darmois} therefore suggest that IMA may be beneficial for identifiable representation learning even when the manifold hypothesis holds, extending previous findings for $m=d$.

\section{Genericity of IMA under the Manifold Hypothesis}
\label{sec:genericity}

In this section, we
provide a formal interpretation and justification of IMA as the consequence of a {\em genericity} assumption---i.e., that the IMA principle is typically satisfied when {``Nature [chooses] the direction of the
influence of each source component in the observation space independently''}~\citep{Greseleetal21}.
We do so by defining a process to sample mixing functions from a lower-dimensional source space to a higher-dimensional observation space, and show that the IMA principle in equation~\eqref{ima:condition} is typically approximately satisfied if the influences are sampled independently from an isotropic prior. While this may not be the only way to sample mixing functions which are typically close to the IMA function class, our proposed construction provides the first rigorous statistical argument that justifies the non-statistical notion of independence expressed in~\eqref{ima:condition}.

We construct mixing functions \(\fb: \RR^d \to \RR^m\) with \(m \gg d\) such that, locally, the Jacobian of $\fb$ has columns  $\Jb_{\fb,i}(\sb) \coloneqq \Jb_{\fb}(\sb)[:,i]$ that are sampled independently and isotropically, i.e., %
from a spherically invariant distribution~\(p_\rb\) over \(\RR^m\):%
\begin{equation*}
 {\Jb_{\fb,1}(\sb), \Jb_{\fb,2}(\sb), \ldots, \Jb_{\fb,d}(\sb) \overset{\text{i.i.d}}{\sim} p_\rb}\,.
\end{equation*}
\looseness-1  The \(i\)-th column of the Jacobian, $\Jb_{\fb,i}(\sb)$,
represents the influence of the \(i\)-th source on the observations: this sampling procedure formalizes the intuition that every source %
influences the mixtures independently. %
We then proceed to show that typical samples from this process satisfy the IMA principle with high probability. Note that orthogonality of the Jacobian columns is not enforced from the outset: rather, it emerges as a property typically (approximately) satisfied by samples from this process.

In~\cref{subsec:local}, we prove an upper bound on  
the global IMA contrast \(C_{\textsc{ima}}(\fb, p_\sb)\), %
satisfied with high probability %
by linear maps. %
Next, in~\cref{subsec:global}, we show how to generate
nonlinear maps with locally independent influences,
 and prove a bound 
 for \(C_{\textsc{ima}}(\fb, p_\sb)\) of maps sampled from this procedure.
 For detailed proofs and additional technical details on the results in this section, see~\cref{app:c}.

\subsection{Bound on the local IMA contrast}
\label{subsec:local}
\begin{restatable}{theorem}{localimabound}
\label{local:ima:thm}
Consider linear maps, \(\fb(\sb)=\Jb\sb\), where the columns of \({\Jb \in \RR^{m \times d}}\) are sampled from a spherically symmetric distribution \(p_\rb\) over \(\RR^m\); \(\Jb_{1}, \Jb_{2}, ..., \Jb_{d} \overset{i.i.d}{\sim} p_\rb\). \looseness=-1 For such maps, the IMA contrast satisfies for \(m \gg d\) and \(\delta > 0\):
\begin{equation*}
    \Pr\left[C_{\textsc{ima}}(\fb, p_\sb) \le \delta\right] \geq 1 - \min \left\{1, \exp\left(2\log d - \kappa(m -1)\frac{\delta^2}{d^2}\right)\right \} \,.
\end{equation*}
\end{restatable}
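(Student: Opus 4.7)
The plan is to reduce the claim to a concentration-of-measure statement on products of spheres, and then apply Levy's Lemma to the inner products between normalized Jacobian columns. Since $\fb$ is linear, $\Jb_{\fb}(\sb) = \Jb$ is constant in $\sb$, so $C_{\textsc{ima}}(\fb, p_\sb) = c_{\textsc{ima}}(\fb, \sb)$ and it suffices to obtain a high-probability upper bound on $c_{\textsc{ima}}$ over the draw of $\Jb$. By the scale invariance recorded in part (ii) of \cref{prop:cima}, I can replace each column $\Jb_i$ by the unit vector $\mathbf{u}_i := \Jb_i/\|\Jb_i\|$, and spherical symmetry of $p_\rb$ then makes $\mathbf{u}_1,\ldots,\mathbf{u}_d$ i.i.d.\ uniform on the sphere $S^{m-1}$. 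Writing $\mathbf{U} = [\mathbf{u}_1 \mid \cdots \mid \mathbf{u}_d]$, the contrast becomes $c_{\textsc{ima}} = -\tfrac12 \log\lvert \mathbf{U}^\top \mathbf{U}\rvert$, where $\mathbf{U}^\top \mathbf{U} = I + A$ with $A$ zero on the diagonal and $A_{ij} = \mathbf{u}_i^\top \mathbf{u}_j$ otherwise.

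Next, I would bound $c_{\textsc{ima}}$ in terms of $T := \max_{i \neq j}|A_{ij}|$. By the Gershgorin circle theorem, every eigenvalue of $A$ lies in $[-(d-1)T,(d-1)T]$; provided $(d-1)T$ is below a universal threshold, a Taylor expansion of $\log(1+\mu)$ around zero together with $\operatorname{tr}(A) = 0$ yields $c_{\textsc{ima}} \le \tfrac12 \operatorname{tr}(A^2) = \sum_{i<j} A_{ij}^2 \le \tfrac{d(d-1)}{2}\,T^2$. Hence, to guarantee $c_{\textsc{ima}} \le \delta$ it suffices to control $T$ at a scale of order $\sqrt{\delta}/d$. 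Levy's Lemma supplies that control: for any fixed unit vector $\mathbf{v}$, the map $\mathbf{u} \mapsto \mathbf{u}^\top \mathbf{v}$ is $1$-Lipschitz on $S^{m-1}$ with mean zero under the uniform measure, so $\Pr[|\mathbf{u}^\top\mathbf{v}| \ge t] \le 2\exp(-\kappa(m-1)t^2)$. Conditioning on $\mathbf{u}_j$ and then taking a union bound over the $\binom{d}{2}$ pairs yields $\Pr[T \ge t] \le \exp\bigl(2\log d - \kappa(m-1)t^2\bigr)$. Combining with the previous step produces a tail bound of the form claimed, with the $\min\{1,\cdot\}$ absorbing the regimes in which the Taylor-based bound is vacuous.

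\textbf{Anticipated obstacle.} The delicate point is matching the stated exponent $\kappa(m-1)\delta^2/d^2$ precisely. The naive route sketched above yields a tail linear in $\delta$ (through the chain $t^2 \sim \delta/d^2$), whereas the claim is quadratic in $\delta$. Closing this gap likely requires either (a) treating $c_{\textsc{ima}}$ directly as a Lipschitz function on the product sphere $(S^{m-1})^d$, controlling its Lipschitz constant at order $d$, and applying Levy's Lemma on the product manifold after separately bounding $\mathbb{E}[c_{\textsc{ima}}]$, or (b) a sharper argument that exploits cancellations in $\operatorname{tr}(A^2)$ beyond the worst-case pairwise bound. I expect this refinement to be the main technical hurdle; once it is in place, the remaining steps amount to assembling standard concentration ingredients and taking the minimum with $1$ to obtain a universally valid bound.
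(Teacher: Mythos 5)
Your route is essentially the paper's. You normalize each column to the unit sphere (invoking spherical symmetry of $p_{\rb}$ and the scale invariance of $c_{\textsc{ima}}$), apply Levy's Lemma (Lemma~\ref{levy}) to the $1$-Lipschitz maps $\mathbf{u} \mapsto \mathbf{u}^\top\mathbf{v}$ with zero mean, and union-bound over the $O(d^2)$ ordered pairs; that is exactly the structure of the paper's argument, which relies on Lemmas~\ref{sph:thm}, \ref{cong} and \ref{levy} in the same way. The single genuine divergence is the deterministic step that converts the event $T := \max_{i\neq j}|A_{ij}| \le t$ into a bound on $c_{\textsc{ima}} = -\tfrac12\log|\mathbf{W}^\top\mathbf{W}|$: the paper cites a Hadamard-type determinant inequality (Lemma~\ref{det:bound}), which gives $|\mathbf{W}^\top\mathbf{W}| \ge 1 - dt$ and hence $c_{\textsc{ima}} \le -\tfrac12\log(1-dt)$, whereas you go through Gershgorin plus a second-order Taylor expansion of $\log(1+\mu)$ using $\mathrm{tr}(A)=0$. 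Your bound needs $(d-1)t$ below a fixed threshold so that the Taylor remainder is under control, while the determinant inequality applies for all $t < 1/d$, and thus covers the full range $\delta \le 1/2$ that the proof actually requires. Apart from that range restriction, the two deterministic bounds are both of order $d^2 t^2$ and lead to the same conclusion.

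The ``anticipated obstacle'' is not a gap and needs no closing. Because your deterministic bound is quadratic in $t$, setting $t^2 \asymp \delta/d^2$ already forces $c_{\textsc{ima}} \le \delta$, and Levy's Lemma then delivers a tail of the form $1 - \exp\bigl(2\log d - c\,\kappa(m-1)\delta/d^2\bigr)$ for an absolute constant $c$. Since $\delta \ge \delta^2$ whenever $\delta \le 1$, this is \emph{stronger} than the theorem's claim $1 - \exp\bigl(2\log d - \kappa(m-1)\delta^2/d^2\bigr)$, not weaker. The $\delta^2$ in the exponent of the theorem is an artifact of a deliberately simple substitution: the paper's Remark gives the bound $c_{\textsc{ima}} \le \tfrac12\bigl(-\log(1-(d-1)\epsilon)-(d-1)\log(1+\epsilon)\bigr)$ with failure probability $\exp(2\log d - \kappa(m-1)\epsilon^2)$, and the theorem is obtained by taking $\epsilon = \delta/d$ together with the elementary inequality $-\tfrac12\log(1-\delta)\le\delta$ for $\delta\le 1/2$. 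No optimization over $\epsilon$ is attempted. Consequently neither of your two proposed refinements, treating $c_{\textsc{ima}}$ as Lipschitz on the product of spheres, or a sharper cancellation analysis of $\mathrm{tr}(A^2)$, is required: the argument you already have implies the stated bound.
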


This theorem is based on a concentration result for isotropic priors given by {\em Levy's lemma}~\citep[Lemma 1]{janzing2010telling}. Levy's lemma shows that a smooth function of an isotropically sampled direction concentrated around its mean with probability growing exponentially in the dimension of the sample space. We observe that in our sampling process, each sampled \emph{influence direction} is orthogonal to the other sampled directions in expectation, i.e. the pairwise inner products of the sampled directions is equal to zero in expectation. We employ Levy's lemma to derive a concentration result on the inner products to obtain the result in~\cref{local:ima:thm}.

\subsection{Bound on the global IMA contrast}
\label{subsec:global}
We now describe a sampling procedure, and derive bounds for \(C_{\textsc{ima}}(\fb, p_\sb)\), for
non-linear maps \(\fb: \RR^d \to \RR^m, m \gg d\). We retain the principle that locally the columns of the Jacobian of \(\fb\) are sampled from a spherically invariant distribution. %
Our procedure therefore samples piece-wise linear functions. We restrict the latent domain to be bounded, in particular the \(d\)-dimensional unit cube, \([0, 1]^d\), and consider a grid-like partition on the same. On each grid unit, we sample a linear function by the previously described sampling process in~\cref{subsec:local}. %

\subsubsection{Defining non-linear functions as composition of affine
functions}

\begin{definition}
    \label{def:sampling}
    On the source domain \(\sb \in [0, 1]^d\), define an axis-aligned square grid partition, with width \(\delta \in \RR\); the number of grid parts along a dimension, \(k \in [d]\) is therefore equal to \(p = \ceil{\frac{1}{\delta}}{} + 1\).\footnote{\(\ceil{.}{}\) is the ceiling function.} Consider matrices \(\Jb^{(1)}, \Jb^{(2)}, ..., \Jb^{(p)} \in \RR^{m \times d}\) with columns sampled from a spherically symmetric distribution, \(p_\rb\); \; \(\Jb^{(i)}_{1}, \Jb^{(i)}_{2}, ..., \Jb^{(i)}_{d} \overset{i.i.d}{\sim} p_\rb \forall i \in [p]\). The sampled function, \(\fb: [0, 1]^d \to \RR^m\), is specified as as a sum of \emph{coordinate-wise functions} \(\fb(\sb) = \sum^d_{k=1}\fb_k(s_k)\), where
    \begin{align}
            \label{coord:wise:grid}
            \fb_k(s_k) \coloneqq  \sum^p_{t=1}(\Jb^{(t)}_{:, k}(s_k - (t-1)\delta) + \sum^{t-1}_{i=1}\Jb^{(i)}_{:,k}\delta)1_{s_k \in ((t-1)\delta, t\delta]}\,.
        \end{align}
\end{definition}

Observe that the Jacobian of the sampled function, \(\Jb_\fb(\sb)\), has independent and isotropic Jacobian columns almost everywhere by construction; therefore we expect that the local IMA contrast, \(c_{\textsc{ima}}(\fb, \sb)\) is small almost everywhere. To derive a bound on the global IMA contrast, \(C_{\textsc{ima}}(\fb, p_\sb)\), we require the Jacobian, \(\Jb_\fb(\sb)\), to be defined everywhere---i.e., \(\fb\) should be continuous, injective and smooth. We briefly explain that the resulting \(\fb\)
is indeed continuous and injective, see~\cref{app:c} for details. We then consider a smooth approximation of the sampled function. %

\paragraph{Continuity of sampled functions.} 
Note that coordinate-wise functions, \(f_k: [0,1]\to\RR^m, k \in [d]; \fb(\sb) = \sum^d_{k=1}\fb_k(s_k)\) are piece-wise linear and continuous by definition. The sampled function \(\fb\) is therefore continuous as it is a sum of continuous functions.\looseness=-1

\paragraph{Injectivity of sampled functions.}
We show that the subspaces spanned by the images of coordinate-wise functions \(f_i: [0, 1] \to \RR^m\) of \(\fb\) are linearly independent, said to be in \emph{direct sum}, and that the coordinate-wise functions are injective. We then show that the injectivity of coordinate-wise functions that are in direct sum entails the injectivity of \(\fb(\sb) = \sum^d_{k=1}\fb_k(s_k)\).

\paragraph{Smooth approximation of sampled functions.} 
We discuss a smooth approximation to the sampled functions from~\cref{def:sampling} by means of a sinusoidal approximation to the step function.

\begin{definition}[Smooth step function]
\label{smooth:step}
We define the smooth step function as \(\tilde{1}_{\epsilon}: \RR \to \RR\) as
\begin{equation*}
    \tilde{1}_{\epsilon}(s) = \begin{cases} 0\,, & s \le -\epsilon\,,\\
\frac{1}{2}\sin \left (\frac{\pi s}{2\epsilon} \right )  + \frac{1}{2}\,, & -\epsilon < s \le \epsilon\,,\\
1\,, & s > \epsilon\,.
\end{cases}
\end{equation*}
\end{definition}

\begin{definition}[Smooth approximation to grid-wise linear functions]
 \label{def:smooth_grid}
    We define the smooth approximation of \(\fb: [0, 1]^d \to \RR^m\) as \(\tilde{\fb}_\epsilon(\sb) = \sum_{k=1}^d\tilde{\fb}_{\epsilon, k}(s_k)\) for \(0 < \epsilon \ll \delta\) where
    \begin{align*}
        \tilde{\fb}_{\epsilon, k}(s_k) \coloneqq \sum^p_{t=1}\left (\Jb^{(t)}_{:, k}(s_k - (t-1)\delta) +  \sum^{t-1}_{i=1}\Jb^{(i)}_{:,k}\delta \right )\,.(\tilde{1}_\epsilon(s_k - (t-1)\delta ) - \tilde{1}_\epsilon(s_k - t\delta))
    \end{align*}
\end{definition}

We show that \(\tilde{\fb}_\epsilon(\sb)\) obtained in~\cref{def:smooth_grid} is continuous and injective. For a detailed exposition on this section, refer to~\cref{app:c}.
\begin{theorem}[Properties of smoothened functions]
\label{thm:smooth}
Functions \(\tilde{\fb}_{\epsilon}: [0, 1]^d \to \RR^m\) defined in~\cref{def:smooth_grid} are continuously differentiable in \(\RR^d\), in addition to being continuous and injective, are continuously differentiable for \(0 < \epsilon \ll \delta\) arbitrarily small.
\end{theorem}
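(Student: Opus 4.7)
The plan is to verify the three asserted properties in order: $C^1$ smoothness, continuity (immediate), and injectivity (the main work). Throughout I assume $0 < \epsilon \ll \delta$, so that the smoothing windows $(t\delta-\epsilon, t\delta+\epsilon)$ are disjoint and at most two summands of~\cref{def:smooth_grid} are active at any point.

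For $C^1$ regularity, first check that $\tilde{1}_\epsilon$ itself is $C^1$ on $\RR$: at $s=\pm\epsilon$ the value of the sinusoidal piece agrees with that of the adjacent constant pieces, and its derivative $\tfrac{\pi}{4\epsilon}\cos(\pi s/(2\epsilon))$ vanishes at $s=\pm\epsilon$, matching the zero slopes of the constant pieces. Each coordinate-wise map $\tilde{\fb}_{\epsilon,k}$ is then a finite sum of products of affine-in-$s_k$ factors with the $C^1$ bumps $\tilde{1}_\epsilon(s_k-(t-1)\delta) - \tilde{1}_\epsilon(s_k-t\delta)$, hence $C^1$; summing over $k$ gives $\tilde{\fb}_\epsilon \in C^1([0,1]^d, \RR^m)$, and continuity is immediate.

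For injectivity I would mirror the decomposition already used for the non-smoothed $\fb$ in the paper: establish (A) each $\tilde{\fb}_{\epsilon,k}:[0,1]\to\RR^m$ is injective, and (B) the images $\tilde{\fb}_{\epsilon,k}([0,1])$ lie in subspaces $V_k := \mathrm{span}\{\Jb^{(t)}_{:,k}\}_{t\in[p]}$ that are in direct sum. Under (A)-(B), any collision $\tilde{\fb}_\epsilon(\sb) = \tilde{\fb}_\epsilon(\sb')$ yields $\sum_k (\tilde{\fb}_{\epsilon,k}(s_k) - \tilde{\fb}_{\epsilon,k}(s'_k)) = 0$ with each summand in its own $V_k$, so each summand vanishes and (A) then gives $s_k=s'_k$. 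Property (B) follows almost surely from isotropy of $p_\rb$ and $m \gg pd$, which renders the $pd$ sampled columns linearly independent in $\RR^m$. For (A), outside the transition windows $\tilde{\fb}_{\epsilon,k}$ coincides with the piecewise-linear $\fb_k$, whose segments are a.s.\ injective and pairwise non-intersecting inside $V_k$ by the same linear-independence argument already employed in the paper. Inside a window around $t\delta$, using that only summands $t$ and $t+1$ are active, a direct calculation yields
\begin{equation*}
    \tilde{\fb}_{\epsilon,k}(s_k) \;=\; A_{t+1} + u\bigl[(1-\sigma(u))\Jb^{(t)}_{:,k} + \sigma(u)\Jb^{(t+1)}_{:,k}\bigr],
\end{equation*}
with $u = s_k - t\delta$, $A_{t+1} = \sum_{i\le t}\delta\Jb^{(i)}_{:,k}$, and $\sigma(u) = \tfrac12 + \tfrac12\sin(\pi u/(2\epsilon))$. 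The image in this window lies in the 2-plane spanned by $\Jb^{(t)}_{:,k}$ and $\Jb^{(t+1)}_{:,k}$ with coordinates $(u(1-\sigma(u)), u\sigma(u))$; since these coordinates sum to $u$, the map is injective in $u$, and by construction the window's endpoints match the adjacent linear pieces so the full curve concatenates cleanly inside $V_k$ without self-intersection.

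The one genuinely new ingredient beyond the non-smoothed case is injectivity of the sinusoidal interpolation inside a transition window; the identity $u(1-\sigma(u)) + u\sigma(u) = u$ resolves it immediately, and the condition $\epsilon \ll \delta$ is what ensures each window involves only two consecutive linear pieces, isolating the argument to one 2-plane at a time.
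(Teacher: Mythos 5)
Your proof is correct and, for the hardest step---injectivity of each coordinate-wise smoothed function $\tilde{\fb}_{\epsilon,k}$---it takes a genuinely different, and in fact more robust, route than the paper. The paper's proof (in the appendix, Lemma~\ref{lemma:smooth:grid}) classifies $s_k\in[0,1]$ by the number of nonzero entries of the coefficient vector $\tb(s_k)$, then argues that the top nonzero coordinate $t_r(s_k)=(s_k-(r-1)\delta)\,\tilde{1}_\epsilon(s_k-(r-1)\delta)$ is monotonic on each region $\PP^{(r)}_{[0,1]}$. That monotonicity claim is actually false: setting $v=s_k-(r-1)\delta$, the map $v\mapsto v\,\tilde{1}_\epsilon(v)$ vanishes at both $v=-\epsilon$ and $v=0$ and is strictly negative in between (on $(-\epsilon,0)$ one has $v<0$ while $\tilde{1}_\epsilon(v)>0$), so the coordinate in question is not monotone through the smoothing window. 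Your observation $u(1-\sigma(u))+u\sigma(u)=u$ is exactly the identity that repairs the argument, and it globalizes: by telescoping, the full coordinate sum $\sum_{t}[\tb(s_k)]_t$ equals $s_k$ identically on $[0,1]$, not merely inside one window, which yields global injectivity of $\tilde{\fb}_{\epsilon,k}$ at a stroke and also makes the part you leave implicit (``the full curve concatenates cleanly without self-intersection'') a one-line consequence rather than a case check. I would state that telescoping identity explicitly. Your treatment of continuity, of $C^1$ regularity (checking that $\tilde{1}_\epsilon$ is $C^1$ with matching values and zero derivatives at $s=\pm\epsilon$, then propagating through finite sums and products), and of the direct-sum reduction via Lemma~\ref{direct:sum:subspace:grid} and Lemma~\ref{inj:coord:to:full} matches the paper's approach.
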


We can now prove the following bound on the global IMA contrast \(C_{\textsc{ima}}(\fb, p_\sb)\) for the class of nonlinear functions specified in~\cref{def:smooth_grid}.

\begin{theorem}
\label{global:ima:thm_grid}
Consider the map \(\tilde{\fb}_\epsilon: [0, 1]^d \to \RR^m\) sampled randomly from the procedure ~\ref{def:smooth_grid}. 
Then the map \(\tilde{\fb}_\epsilon: \RR^d \to \RR^m\) for \(0 < \epsilon \ll \delta\) and any finite probability density \(p_\sb\) defined over \([0, 1]^d\) satisfies the following bound on the global IMA contrast \(C_{\textsc{ima}}(\tilde{\fb}_\epsilon, p_\sb)\) for \(m \gg d\) and \(\delta > 0\):
\begin{equation*}
    \Pr\left[C_{\textsc{ima}}(\tilde{\fb}_\epsilon, p_\sb) \le \delta\right] \ge 1 - \min \left\{1, \exp\left(2\log d - \kappa(m -1)\frac{\delta^2}{d^2}\right)\right\}
\end{equation*}
\end{theorem}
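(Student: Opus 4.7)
The plan is to reduce the global bound to the local bound from Theorem~\ref{local:ima:thm} by exploiting the fact that $\tilde{\fb}_\epsilon$ agrees with the piecewise-linear $\fb$ of Definition~\ref{def:sampling} away from an $\epsilon$-thin boundary layer, and that on each grid cell the Jacobian columns form an i.i.d.\ isotropic sample. Concretely, since $\tilde{1}_\epsilon$ equals $0$ for $s \le -\epsilon$ and $1$ for $s \ge \epsilon$, every $\sb$ in the interior set
\[
U_\epsilon := \bigcup_{(t_1,\ldots,t_d) \in [p]^d} \prod_{k=1}^{d}\bigl((t_k-1)\delta + \epsilon,\; t_k\delta - \epsilon\bigr)
\]
has $\Jb_{\tilde{\fb}_\epsilon}(\sb) = [\Jb^{(t_1)}_{:,1} \mid \cdots \mid \Jb^{(t_d)}_{:,d}]$, whose $d$ columns are i.i.d.\ draws from $p_\rb$. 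Applying Theorem~\ref{local:ima:thm} to this linear Jacobian yields, at every $\sb \in U_\epsilon$, the pointwise bound $\Pr[c_{\textsc{ima}}(\tilde{\fb}_\epsilon, \sb) \le \delta] \ge 1 - \min\{1, \exp(2\log d - \kappa(m-1)\delta^2/d^2)\}$.

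Next, I would lift this pointwise bound to the integral $C_{\textsc{ima}}(\tilde{\fb}_\epsilon, p_\sb) = \int c_{\textsc{ima}}(\tilde{\fb}_\epsilon, \sb) p_\sb(\sb) d\sb$ by splitting the integration domain into $U_\epsilon$ and its complement. On $U_\epsilon$ the local contrast is constant per cell, and the ``good'' event for any fixed cell (that its $d$ columns are sufficiently near-orthogonal to force $c_{\textsc{ima}} \le \delta$) depends only on the norms and pairwise inner products of those specific columns. I would therefore invoke Levy's lemma jointly on the $dp$ distinct sampled directions so as to simultaneously control all $\binom{dp}{2}$ pairwise inner products below the threshold that guarantees $c_{\textsc{ima}} \le \delta$ in every cell, forcing the $U_\epsilon$-part of $C_{\textsc{ima}}$ below $\delta$ on one high-probability event. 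The complement $[0,1]^d \setminus U_\epsilon$ has Lebesgue measure $O(d\epsilon/\delta)$; since $c_{\textsc{ima}}(\tilde{\fb}_\epsilon, \cdot)$ is continuous by Theorem~\ref{thm:smooth} and $[0,1]^d$ is compact, this region contributes at most $O(\epsilon)$ to the integral and is absorbed into $\delta$ via the hypothesis $\epsilon \ll \delta$.

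The main obstacle is the second step: a naive union bound over the $p^d$ grid cells would introduce an unwanted $d\log p$ penalty in the exponent, destroying the clean form of the stated bound. The remedy I would pursue is to union-bound at the level of the $O((dp)^2)$ pairwise inner products of sampled directions---since $c_{\textsc{ima}}$ is a function only of these inner products and the column norms, control at this finer level implies control across all cells simultaneously---and to absorb the resulting $\log(dp)$ into the leading $\log d$ term under the regime $m \gg d$ with $p$ held fixed. A further care point is to verify that the interaction between the smooth-step derivative $\tilde{1}'_\epsilon$ within the transition bands and the norm/inner-product structure is benign; this follows from the uniform boundedness of $\tilde{\fb}_\epsilon$ and its derivatives on the compact domain together with the vanishing $p_\sb$-mass of those bands.
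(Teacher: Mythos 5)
Your decomposition is the same one the paper uses: partition $[0,1]^d$ into the interior set $U_\epsilon$ of grid cells (minus the $\epsilon$-thin transition bands) and the complement $\BB$; argue that on $U_\epsilon$ the Jacobian $\Jb_{\tilde{\fb}_\epsilon}(\sb)$ coincides cell-by-cell with the piecewise-linear sample and has $d$ i.i.d.\ isotropic columns, so Theorem~\ref{local:ima:thm} applies; argue that $\BB$ has $p_\sb$-measure $\Theta(\epsilon)$ and that $c_{\textsc{ima}}$ remains finite there because the Jacobian stays full column-rank; conclude via $C_{\textsc{ima}} \lesssim \max_\sb c_{\textsc{ima}} + \Theta(\epsilon)$. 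So in terms of overall structure this is the paper's proof.

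Where you genuinely add something is the union-bound issue. The paper's proof simply writes $\max_{\sb}c_{\textsc{ima}}(\tilde{\fb}_\epsilon,\sb)\le\delta$ ``with probability $\ge 1 - \min\{1,\exp(2\log d - \kappa(m-1)\delta^2/d^2)\}$ by Theorem~\ref{local:ima:thm}'', without observing that Theorem~\ref{local:ima:thm} controls $c_{\textsc{ima}}$ only for a \emph{single} Jacobian draw, whereas the maximum ranges over $p^d$ cells (all with distinct column combinations). You correctly flag that a naive union bound over cells would introduce a $d\log p$ penalty, and you propose the right remedy: union-bound at the level of the $\Theta(d^2p^2)$ pairwise inner products among the $dp$ sampled columns, since $c_{\textsc{ima}}$ in every cell is a function of a subset of these. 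This is exactly the argument needed to make the paper's final step rigorous.

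However, your claim that the resulting $\log(dp)$ ``absorbs into the leading $\log d$ term under the regime $m\gg d$ with $p$ held fixed'' does not hold up. After the union bound the exponent becomes $2\log(dp) - \kappa(m-1)\delta^2/d^2$, which is strictly larger than the stated $2\log d - \kappa(m-1)\delta^2/d^2$ by an additive $2\log p$; this additive term does not vanish and cannot be hidden in $\kappa$ (which multiplies the other term, not adds). So your route, carried out rigorously, recovers a slightly weaker bound than the theorem statement, with the exponent $2\log(dp)$ or equivalently a $p^2$ prefactor inside the $\min$. The paper's proof does not derive the stated bound either--it simply asserts it--so your version is at least honest about the obstacle. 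A further soft spot shared by both: the $\Theta(\epsilon)$ bound on $\int_\BB c_{\textsc{ima}}\,p_\sb$ hides a draw-dependent constant (the uniform bound on $c_{\textsc{ima}}$ over $\BB$), so ``absorbed into $\delta$'' should really be ``absorbed into $\delta$ for $\epsilon$ sufficiently small relative to the given draw,'' which is what the paper's $\approx$ is quietly doing.
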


\Cref{global:ima:thm_grid} shows that for smooth piecewise-linear functions \(\tilde{\fb}_\epsilon: [0, 1]^d \to \RR^m\) sampled according to~\cref{def:smooth_grid}, the IMA principle~(\cref{ima:condition}) is {\em typically approximately satisfied}: i.e., the probability of the columns of \(\Jb_{\tilde{\fb}_\epsilon}(\sb)\) being close to orthogonal increases as the dimension of the observation space grows. 
We achieve this result by using the previously derived bound on the IMA contrast %
for linear functions (\cref{local:ima:thm}), and applying it locally on the interior of the grid. We then show that the local IMA contrast, \(c_{\textsc{ima}}(\fb, \mathbf{s})\), is finite on the grid boundary and can be neglected since the volume of the boundary is small.

\Cref{global:ima:thm_grid} thus enables us to view the IMA principle as the consequence of a genericity assumption for the sampling process in~\cref{def:smooth_grid}. This is because the functions sampled in accordance with~\cref{def:smooth_grid} do not satisfy the IMA principle by construction: instead, it is the typical draws from the sampling procedure that approximately satisfy the principle when the observation space is high-dimensional.

\section{Conclusion}

We extended IMA theory under the {\em manifold hypothesis},
revisiting the definitions from previous works, and show that  
IMA provably circumvents non-identifiability issues even in this setting. 
Our results pave the way for an application of IMA to realistic representation learning involving dimensionality reduction.
We also showed that the IMA principle can be seen as the consequence of a {\em genericity} assumption when the manifold hypothesis holds: this clarifies the interpretation of IMA. In particular, when the latent data-generating factors influence the observed mixture \emph{independently}, the orthogonality condition given by IMA \emph{typically} holds.\looseness=-1

\section*{Acknowledgments}

We thank Georgios Arvanitis, Dominik Janzing, Simon Buchholz, Patrik Reizinger and Joanna Sliwa for interesting discussions, and the anonymous reviewers for useful feedback.

This work was supported by the T\"ubingen AI Center and by the Deutsche Forschungsgemeinschaft (DFG, German Research Foundation) under Germany’s Excellence Strategy, EXC number 2064/1, Project number 390727645. 
L.G.\ was supported by the VideoPredict project, FKZ: 01IS21088.
\bibliography{refs}

\clearpage
\appendix
\begin{center}
{\centering \LARGE APPENDIX}
\vspace{0.8cm}
\sloppy
\end{center}

\section*{Overview}
\begin{itemize}
    \item Appendix~\ref{app:b} provides proofs of technical results in Section~\ref{sec:man_hyp}.
    \item Appendix~\ref{app:c} provides a detailed exposition on Section~\ref{sec:genericity}.
\end{itemize}

\section{Proof of technical results in Section~\ref{sec:man_hyp}}
\label{app:b}

\subsection{Preliminaries}
\label{app:a}
We provide some preliminary definitions that we refer to in the main text and the remainder of the Appendix.
\begin{definition}[Diffeomorphism, Chapter 1~\citep{milnor1997topology}]
\label{def:diff}
A diffeomorphism is an invertible function, \(\fb\), which maps one differentiable manifold onto another such that both the function and its inverse are smooth.
\end{definition}
\begin{definition}[Spherically symmetric distribution]
\label{sphdist}
A distribution \(p_\rb\) on the \(m\)-dimensional Lebesgue measure is said to be spherically symmetric if \(\forall \xb \in \RR^m, \xb \overset{p_\rb}{\sim} \Ob\xb\), where \(\Ob \in \RR^{m \times m}\) is an orthonormal matrix.
\end{definition}

\subsection{Properties of the local IMA contrast under the Manifold Hypothesis}

\cima*
\begin{proof}\(\newline\)
\begin{enumerate}[(i)]
    \item  \(c_{\IMA}(\fb, \mathbf{s}) = \frac{1}{2} D^{l}_{KL}(\Jb^{\top}_\fb(\mathbf{s})\Jb_\fb(\mathbf{s})) \ge 0\) with equality iff. \(\Jb^{\top}_\fb(\mathbf{s})\Jb_\fb(\mathbf{s})\) is a diagonal matrix. \(\Jb^{\top}_\fb(\mathbf{s})\Jb_\fb(\mathbf{s})\) is diagonal iff. 
    the columns of \(\Jb_\fb(\mathbf{s}), \frac{\partial \fb}{\partial u_i}(\mathbf{s})\) are orthogonal.\\
    
    Thus, we have shown that \(c_{\IMA}(\fb, \mathbf{s}) \ge 0\) with equality iff all columns \(\frac{\partial \fb}{\partial u_i}(\mathbf{s})\) of \(\Jb_\fb(\mathbf{s})\) are orthogonal. \\   
    \textbf{Remark:} 
    To show that \(D^{l}_{KL}(\Jb^{\top}_\fb(\mathbf{s})\Jb_\fb(\mathbf{s})) \ge 0\), Hadamard's determinant inequality is used in a different form than in \citep{Greseleetal21}. In particular, for positive definite matrices, in our case \(\Wb = \Jb^{\top}_\fb(\mathbf{s})\Jb_\fb(\mathbf{s})\), the determinant is upper-bounded by the product of its diagonal entries.    
    \begin{equation}
        \label{hadamard-rect}
        |\text{det}(\Wb)| \le \prod^d_{i=1}w_{ii}
    \end{equation}
    with equality iff. \(\Wb\) is diagonal.
    This is obtained by considering the Cholesky decomposition of \(\Wb = \Nb\Nb^{\top}\) which uniquely exists for any real positive definite matrix, where \(\Nb \in \mathbb{R}^{d \times d}\)
    \begin{equation*}
        |\text{det}(\Wb)| = |\text{det}(\Nb)|^2 \le \prod^d_{i=1}\|\nb_i\|^2 = \prod^d_{i=1}w_{ii}
    \end{equation*}
    where \(\nb_i\) are the columns of \(\Nb\).
    Equality holds iff. the columns of \(\Nb\) are orthogonal i.e. \(\Wb\) is diagonal.
    
   \item \textit{Invariance to left multiplication by orthogonal matrix}\\
    \(\Wb = \Jb_\fb(\mathbf{s}) \in \mathbb{R}^{m \times d}\), \(\Ob \in \mathbb{R}^{m \times m}\) is an orthogonal matrix. \(\Tilde{\Wb} = \Ob\Wb\).
\begin{align*}
    \frac{1}{2}D^{l}_{KL}(\Tilde{\Wb}^{\top}\Tilde{\Wb}) &= \frac{1}{2}D^{l}_{KL}(\Wb^{\top}\Ob^{\top}\Ob\Wb)\\
    &= \frac{1}{2}D^{l}_{KL}(\Wb^{\top}\Ib_m\Wb)\\
    &= \frac{1}{2}D^{l}_{KL}(\Wb^{\top}\Wb)
\end{align*}

This corresponds to a change of basis in the observation space.

\textit{Invariance to right multiplication by a permutation matrix}\\
Let \(\Tilde{\Wb} = \Wb\Pb\) where \(\Pb \in \mathbb{R}^{d \times d}\) is a permutation matrix. Then \(\Tilde{\Wb}\) is just \(\Wb\) with permuted columns. Clearly, the sum of log-column-norms does not change with the order of the summands. Further, log\(|\Tilde{\Wb}^{\top}\Tilde{\Wb}| = \text{log}|\Pb^{\top}\Wb^{\top}\Wb\Pb| = \text{log}|\Pb^{\top}| + \text{log}|\Wb^{\top}\Wb| +\text{log}|\Pb| = \text{log}|\Wb^{\top}\Wb|\) because the absolute value of the determinant of the permutation matrix is one.

\textit{Invariance to right multiplication by a diagonal matrix}\\
Let \(\Tilde{\Wb} = \Wb\Db\) where \(\Db \in \mathbb{R}^{d \times d}\) is a diagonal matrix.\\
For the first term, we know that the columns of \(\Tilde{\Wb}\) are scaled versions of the columns of \(\Wb\), i.e. \(\Tilde{\wb}_i = \wb_i, \|\Tilde{\wb}_i\| = |d_i|\|\wb_i\|\). For the second term, we use the decomposition of the determinant:
\begin{align*}
    \log |\Tilde{\Wb}^{\top}\Tilde{\Wb}| &= \log |\Db^{\top}\Wb^{\top}\Wb\Db|\\
    &= 2\log|\Db| + \log|\Wb^{\top}\Wb| \\
    &= \log|\Wb^{\top}\Wb| + 2\sum^d_{i=1}\log|d_i|\
\end{align*}
Taken together, we obtain:
\begin{align*}
    \sum^d_{i=1}\log\|\Tilde{\wb}_i\| - \frac{1}{2}\log|\Tilde{\Wb}^{\top}\Tilde{\Wb}| &= \sum^d_{i=1}\log|d_i|\|\wb_i\| - \frac{1}{2}\left ( \log|\Wb^{\top}\Wb| + 2\sum^d_{i=1}\log|d_i|\right)\\
    &= \sum^d_{i=1}\log\|\wb_i\| + \sum^d_{i=1}\log|d_i| - \frac{1}{2} \log|\Wb^{\top}\Wb|\\
    & \quad - \sum^d_{i=1}\log|d_i|\\
    &= \sum^d_{i=1}\log\|\wb_i\| - \frac{1}{2} \log|\Wb^{\top}\Wb|
\end{align*}
\end{enumerate}
\end{proof}

\subsection{Properties of the global IMA contrast under the Manifold Hypothesis}
\Cima*
In \citep{Greseleetal21}, for property (i) \(\Jb_{\fb}(\mathbf{s})\) can be expressed as \(\Ob(\mathbf{s})\Db(\mathbf{s})\) where \(\Ob(\mathbf{s}), \Db(\mathbf{s})\) are orthogonal and diagonal matrices respectively in the condition for equality. This is no longer possible in the case for high dimensional observations because the Jacobian \(\Jb_{\fb}(\mathbf{s})\) is no longer a square matrix.
\begin{proof}\(\newline\)
\begin{enumerate}[(i)]

    \item From property (i) of Proposition 4.4, we know that \(c_{\IMA}(\fb, \mathbf{s}) \ge 0\). Hence, \(C_{\IMA}(\fb, p_{\mathbf{s}}) \ge 0\) follows as a direct consequence of integrating the non-negative quantity \(c_{\IMA}(\fb, \mathbf{s}) \ge 0\).
    
    Equality is attained iff.\ \(c_{\IMA}(\fb, \mathbf{s}) = 0\) almost surely wrt \(p_{\mathbf{s}}\) which holds when \(\Jb_{\fb}(\mathbf{s})\) has orthogonal columns almost surely wrt \(p_{\mathbf{s}}\).
    
    \item  \(\Tilde{\fb} = \fb \circ \hb^{-1} \circ \Pb^{-1}\) and \(\mathbf{\Tilde{\sb}} = \Pb\hb(\mathbf{s})\) where \(\Pb \in \mathbb{R}^{d \times d}\) is a permutation matrix and \(\hb(\mathbf{s}) = (h_1(s_1), h_2(s_2), ..., h_d(s_d))\) is an invertible element-wise function. Consider
    \begin{equation}
        C_{\IMA}(\Tilde{\fb}, p_{\mathbf{\Tilde{s}}}) = \int c_{\IMA}(\Tilde{\fb}, \Tilde{\mathbf{s}})p_{\mathbf{\Tilde{s}}}(\mathbf{\Tilde{s}})d\mathbf{\Tilde{s}} = \int c_{\IMA}(\Tilde{\fb}, \Tilde{\mathbf{s}})p_{\mathbf{s}}(\mathbf{s})d\mathbf{s} \label{global}
    \end{equation}
    where for the second equality we have used \(p_{\mathbf{\Tilde{s}}}(\mathbf{\Tilde{s}})d\mathbf{\Tilde{s}} = p_{\mathbf{s}}(\mathbf{s})d\mathbf{s}\) since \(P \circ h\) is an invertible transformation. It thus suffices to show that 
    \begin{equation}
        c_{\IMA}(\Tilde{\fb}, \Tilde{\mathbf{s}}) = c_{\IMA}(\fb, \mathbf{s}) \label{conteq}
    \end{equation}
    at any point \(\Tilde{\mathbf{s}} = \Pb \circ \hb (\mathbf{s})\). To show this we write
    \begin{align*}
        \Jb_{\Tilde{\fb}}(\mathbf{\Tilde{s}}) &= \Jb_{\fb \circ \hb^{-1} \circ \Pb^{-1}}(\Pb\hb (\mathbf{s}))\\
        &= \Jb_{\fb \circ \hb^{-1}}(\Pb^{-1} \circ \Pb\hb (\mathbf{s}))\Jb_{\Pb^{-1}}(\Pb\hb (\mathbf{s}))\\
        &= \Jb_{\fb \circ \hb^{-1}}(\hb (\mathbf{s}))\Jb_{\Pb^{-1}}(\Pb\hb (\mathbf{s}))\\
        &= \Jb_\fb(\hb^{-1} \circ \hb (\mathbf{s}))\Jb_{\hb^{-1}}(\hb (\mathbf{s}))\Jb_{\Pb^{-1}}(\Pb\hb (\mathbf{s}))\\
        &= \Jb_\fb(\mathbf{s})\Db(\mathbf{s})\Pb^{-1}
    \end{align*}
    
    where we have used the chain rule for differentiability, \(\Jb_{\hb^{-1}}(\hb (\mathbf{s}))\) is a diagonal matrix \(\Db(\mathbf{s})\) and \(\Jb_{\Pb^{-1}} = \Pb^{-1}\) for any \(\mathbf{s}\). Note that \(\Pb^{-1}\) is also a permutation matrix.
    
    The equality in (\ref{conteq}) follows from applying (ii) from Proposition 4.4. Substituting (\ref{conteq}) into (\ref{global}), we finally obtain 
    \begin{equation*}
         C_{\IMA}(\Tilde{\fb}, \pb_{\mathbf{\Tilde{s}}}) = C_{\IMA}(\fb, \pb_{\mathbf{s}})
    \end{equation*}
\end{enumerate}
\end{proof}
\subsection{Ruling out ``spurious solutions" under the manifold hypothesis}
``Spurious solutions" to nonlinear ICA on an observation manifold are constructed by composing \emph{conformal} maps -- a subclass of the IMA function class -- with measure preserving automorphisms on the latent space. We define conformal maps between Riemannian manifolds below, and comment on the Jacobian of conformal maps.

\begin{definition}[Conformal map between Riemannian manifolds~\citep{ishii1957conharmonic, stepanov2017theorems}]

\label{conformal}

A diffeomorphism \(\fb: (\MM, \gb) \to (\bar{\MM}, \bar{\gb})\) between two Riemannian manifolds, \(\MM\) and \(\bar{\MM}\), equipped with the Riemannian metric tensors, \(\gb\) and \(\bar{\gb}\), is called \textit{conformal} if it preserves the angles between any pair curves. In this case, the metric tensors \(\gb\) and \(\bar{\gb}\) are related as \(\bar{\gb} = e^{2\sigma}\gb\) for some scalar function \(\sigma \in C^2\MM\). 

\end{definition}

We make an observation on conformal maps from the \(d\)-dimensional Euclidean space to a \(d\)-dimensional Riemannian manifold living in a higher \(m\)-dimensional Euclidean space (\(m \ge d\)). In overview, this observation derives from the definition of conformal maps on manifolds~\ref{conformal} that the columns of the Jacobian \(\Jb_\fb(\sb)\) are equal in norm for all values in the domain of \(\fb\), here \(\sb \in \RR^d\), which is equivalent to the condition that \(\Jb^{\top}_\fb(\sb)\Jb_\fb(\sb)\) is a scalar multiple of the identity matrix.

In our scenario, we consider a map between the Riemannian manifolds, \(\MM \equiv \RR^d\) to \(\bar{\MM} \equiv \Xcal \subset \RR^m\), where \(m \gg d\). Note that the \(d\)-dimensional Euclidean space is also a Riemannian manifold.

The tangent space of \(\MM \equiv \RR^d\) is set of the canonical basis vectors, \(e_1, e_2, ...,e_d\) at all points in \(\RR^d\). Hence, the metric tensor associated with \(\MM \equiv \RR^d\) is the identity matrix \(\gb \equiv \II_d\) at all points in \(\RR^d\). 

The metric tensor associated with the Riemannian manifold, \(\bar{\MM} \equiv \Xcal\) is written as \(\bar{\gb} \equiv \Jb^{\top}_\fb(\mathbf{s})\Jb_\fb(\mathbf{s})\) at the point \(\fb(\mathbf{s}) \in \Xcal\), where \(\sb \in \RR^d\) upon which the bijective map \(\fb: \RR^d \to \Xcal\) acts.

For the map, \(\fb\) to be conformal \ref{conformal}, we require that \(\bar{\gb} = e^{2\sigma}\gb\) for some scalar function \(\sigma \in C^2\MM\). This is equivalent to the condition that \(\Jb^{\top}_\fb(\sb)\Jb_\fb(\sb) = t(\sb)\II_d\), where \(t: \RR^d \to \RR^+\) is a positive scalar function. This is further equivalent to \(\Jb_\fb(\sb)\) having orthogonal columns with the same norm.

We define \(\lambda(\sb) = \sqrt{t(\sb)}\) as the conformal factor (equal to the norm of the column vectors of \(\Jb_\fb(\sb)\)).

\paragraph{Rotated-Gaussian measure preserving automorphism}

We now present the theorem which shows that the IMA contrast rules out rotated-Gaussian measure preserving automorphism solutions.

\rgmpa*

\begin{proof}
Recall the definition
\begin{equation*}
    a^\Rb(p_\mathbf{s}) = \fb_{\sb}^{-1} \circ \Phi \circ \Rb \circ \Phi^{-1} \circ \fb_{\sb}
\end{equation*}
For notational convenience, we denote \(\sigma = \Phi^{-1} \circ \fb_{\sb}\) and write
\begin{equation*}
    a^\Rb(p_\mathbf{s}) = \sigma^{-1} \circ \Rb \circ \sigma
\end{equation*}

Note that since both \(\fb_{\sb}\) and \(\Phi\) are element-wise transformations so is \(\sigma\).

First by using property \textit{(ii}) of Prop. \ref{global-ima-prop} (invariance of \(C_{\IMA}\) to element-wise transformations), we obtain
\begin{equation*}
    C_{\IMA}(\fb \circ a^{\Rb}(p_{\sb}), p_{\sb}) = C_{\IMA}(\fb \circ \sigma^{-1} \circ \Rb \circ \sigma, p_{\sb}) = C_{\IMA}(\fb \circ \sigma^{-1} \circ \Rb, p_{\zb}) 
\end{equation*}
with \(\zb = \sigma(\sb)\) such that \(p_{\zb}\) is an isotropic Gaussian distribution.

Suppose \textit{for a contradiction} that \(C_{\IMA}(\fb \circ \sigma^{-1} \circ \Rb, p_{\zb}) = 0\).

According to property \textit{(i)} of Prop. \ref{global-ima-prop}, this entails that the matrix 
\begin{equation}
    \label{chain-rule}
    \Jb_{\fb \circ \sigma^{-1} \circ \Rb}(\zb)^{\top}\Jb_{\fb \circ \sigma^{-1} \circ \Rb}(\zb) = \Rb^{\top}J_{\sigma^{-1}}(\zb)^{\top}\Jb_\fb(\sigma^{-1}(\zb))^{\top}\Jb_\fb(\sigma^{-1}(\zb))\Jb_{\sigma^{-1}}(\zb)\Rb
\end{equation}
is diagonal almost surely w.r.t \(p_\zb\). Moreover, smoothness of \(p_\sb\) and \(\fb\) implies the matrix expression of \ref{chain-rule} is a continuous function of \(\zb\). Thus \(\Jb_{\fb \circ \sigma^{-1} \circ \Rb}(\zb)^{\top}\Jb_{\fb \circ \sigma^{-1} \circ \Rb}(\zb)\) needs to be diagonal for all \(\zb \in \RR^d\).

Since \(\fb\) is a conformal map, 
\begin{equation*}
    \Jb_\fb(\sigma^{-1}(\zb))^{\top}\Jb_\fb(\sigma^{-1}(\zb))
\end{equation*}
is diagonal. Moreover, since \(\sigma\) is an element-wise transformation, \(\Jb_{\sigma^{-1}}(\zb)^{\top}\) and \(\Jb_{\sigma^{-1}}(\zb)\) are also diagonal. Taken together, this implies that 
\begin{equation}
    \label{diag}
    \Jb_{\sigma^{-1}}(\zb)^{\top}\Jb_\fb(\sigma^{-1}(\zb))^{\top}\Jb_\fb(\sigma^{-1}(\zb))J_{\sigma^{-1}}(\zb)
\end{equation}
is diagonal (i.e. \ref{chain-rule} is of the form \(\Rb^{\top}\Db(\zb)\Rb\) for some diagonal matrix \(\Db(\zb))\).

Without loss of generality, we assume the first dimension \(s_1\) of \(\sb\) is non-Gaussian and satisfies the assumptions relative to \(\Rb\) (axis not invariant nor sent to another canonical axis).

Now, since both the Gaussian CDF \(\Phi\) and the CDF \(\fb_\sb\) are smooth (the latter by the assumption that \(p_\sb\) is a smooth density), \(\sigma\) is a smooth function and thus has continuous partial derivatives.

By continuity of the partial derivative and the non-Gaussianity of \(s_1\), the first diagonal element \(\frac{\partial \sigma_1^{-1}}{\partial z_1}\) of \(\Jb_{\sigma^{-1}}\) must be strictly monotonic in a neighborhood of some \(z^0_1\). 

On the other hand, our assumptions related to \(\Rb\) entail that there are at least two non-vanishing coefficients in the first row of \(\Rb\). Let us call \(i \ne j\) such pairs of coordinates, i.e. \(r_{1i} \ne 0\) and \(r_{1j} \ne 0\).

Now consider the off-diagonal term \((i, j)\) of \(\Jb_{\fb \circ \sigma^{-1} \circ \Rb}(\zb)^{\top}\Jb_{\fb \circ \sigma^{-1} \circ \Rb}(\zb)\) which we assumed must be 0. Since the term in \ref{diag} is diagonal, this off-diagonal term is given by 
\begin{equation*}
    \sum^n_{k=1}\left ( \frac{\partial \sigma_k^{-1}}{d z_k}(z_k) \right)^2\left \|\frac{\partial \fb}{d s_k} \circ \sigma^{-1}(\zb)\right\|^2r_{ki}r_{kj} = \sum^n_{k=1}\left ( \frac{\partial \sigma_k^{-1}}{d z_k}(z_k) \right)^2\lambda( \sigma^{-1}(\zb)) ^2r_{ki}r_{kj} = 0 
\end{equation*}
By definition of conformal map between Riemannian manifolds \ref{conformal}, the square of the conformal factor is a strictly positive function.
\begin{equation*}
    \lambda( \sigma^{-1}(\zb)) ^2 > 0 \forall \zb
\end{equation*}
Thus, for all \(\zb\) we must have 
\begin{equation}
    \label{summation}
    \sum^n_{k=1}\left ( \frac{\partial \sigma_k^{-1}}{d z_k}(z_k) \right)^2r_{ki}r_{kj} = 0 
\end{equation}
Now consider the first term \(\left ( \frac{\partial \sigma_1^{-1}}{d z_1}(z_1) \right)^2r_{1i}r_{1j}\) in the sum.

Recall that \(r_{1i}r_{1j} \ne 0\), and that \(\frac{\partial \sigma_1^{-1}}{d z_1}(z_1)\) is strictly monotonic on a neighborhood of \(z^0_1\).

As a consequence, \(\left ( \frac{\partial \sigma_1^{-1}}{d z_1}(z_1) \right)^2r_{1i}r_{1j}\) is also strictly monotonic with respect to \(z_1\) on a neighborhood of \(z^0_1\) (where the other variables \((z_2, z_3, ..., z_n)\) are left constant), while the other terms in \ref{summation} are left constant because \(\sigma\) is an element-wise transformation. 

This leads to a contradiction as \ref{summation} (which should be satisfied for all \(\zb\)) cannot constantly stay zero as \(z_1\) varies within the neighborhood of \(z_1^0\).

Hence, our assumption that \(C_{\IMA}(f \circ \sigma^{-1} \circ \Rb, p_{\zb}) = 0\) cannot hold.

We conclude that \(C_{\IMA}(\fb \circ \sigma^{-1} \circ R, p_{\zb}) > 0\).

\end{proof}

\paragraph{Measure preserving automorphism based on the Darmois construction}\(\\\)
Following are helper lemmata to prove Theorem~\ref{thm:darmois}, which rules out a counterexample based on the Darmois construction.
\begin{lemma}
\label{obs:Jacobian_darmois}
Jacobian of the Darmois construction, \(\gb^D(\xb)\), in Definition~\ref{def:darmois} is lower triangular. 
\end{lemma}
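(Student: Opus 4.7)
The plan is to read the claim directly off the definition of the Darmois construction, since the statement is essentially an immediate consequence of the recursive structure of conditional CDFs.

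First, I would observe that by Definition~\ref{def:darmois}, the $i$-th component $g_i^D$ depends only on the first $i$ coordinates of $\mathbf{x}$, namely $\mathbf{x}_{1:i} = (x_1, \ldots, x_i)$. This is visible both from the notation $g_i^D(\mathbf{x}_{1:i})$ and from the defining integral $\int_{-\infty}^{x_i} p(x'_i \mid \mathbf{x}_{1:i-1})\, dx'_i$: the upper limit of integration depends on $x_i$, and the integrand depends on $x_1, \ldots, x_{i-1}$ through the conditional density, but neither involves $x_{i+1}, \ldots, x_d$.

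Next, I would compute the $(i,j)$ entry of the Jacobian as $(\mathbf{J}_{\mathbf{g}^D}(\mathbf{x}))_{ij} = \frac{\partial g_i^D}{\partial x_j}(\mathbf{x})$ and note that for any $j > i$, the function $g_i^D$ is constant in $x_j$, so this partial derivative vanishes. Therefore $(\mathbf{J}_{\mathbf{g}^D}(\mathbf{x}))_{ij} = 0$ whenever $j > i$, which is exactly the condition that $\mathbf{J}_{\mathbf{g}^D}(\mathbf{x})$ is lower triangular.

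There is no real obstacle here: the argument is just that differentiation respects the dependence structure baked into the recursive CDF construction, and I would expect the proof to occupy only a few lines. The only small care one might take is to note explicitly that the partial derivatives exist (so that the Jacobian is well-defined) under the standing smoothness assumption on the source density $p(\mathbf{s})$, which propagates to $p(\mathbf{x})$ via the smoothness of the mixing function $\mathbf{f}$.
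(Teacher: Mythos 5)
Your proof is correct and is the natural, direct argument: $g_i^D$ depends only on $\mathbf{x}_{1:i}$ by construction, so $\partial g_i^D/\partial x_j = 0$ for $j > i$, i.e., $\mathbf{J}_{\mathbf{g}^D}$ is lower triangular. Interestingly, the paper's own proof does not argue about $\mathbf{J}_{\mathbf{g}^D}$ directly; instead it passes to the \emph{inverse} $\mathbf{f}^D = (\mathbf{g}^D)^{-1}$, traces the dependency structure $x_i = x_i(z_1,\ldots,z_i)$ (the wording in the paper is somewhat garbled, but this is the intent), and concludes that $\mathbf{J}_{\mathbf{f}^D}$ is lower triangular. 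The two conclusions are equivalent because the inverse of a lower-triangular matrix with nonzero diagonal is lower triangular, and the nonzero diagonal is guaranteed here since each diagonal entry is a conditional density evaluated at an interior point. Your version matches the lemma as stated more literally; the paper's version is the form actually invoked in the proof of Theorem~\ref{thm:darmois}, where it is $\mathbf{J}_{(\mathbf{g}^D)^{-1}}$ that needs to be triangular. Either route suffices, and your closing remark about the standing smoothness assumption ensuring the partial derivatives exist is a useful point the paper leaves implicit.
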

\begin{proof}
On applying the recursive Darmois construction, we obtain latent variables \(\zb = \gb^D(\xb) \sim \text{Unif}([0, 1]^d)\). The Darmois construction is invertible since the (conditional) cumulative distribution function is injective. Consider the inverse of the Darmois construction, \(\fb^D\) such that \(\Xb = \fb^D(\zb)\). We observe from~\ref{def:darmois} that \(x_1\) is related to all the coordinates of \(\zb = (z_1, z_2, ... , z_d)\), \(z_2\) is related to \(\Zb_{\ge 2} = (z_2, z_3, ... , z_d)\) and so on. Hence, we take note of the observation that the Jacobian of \(\fb^D\) is lower-triangular.
\end{proof}
\begin{lemma}

\label{orth:lemma}
Consider a matrix \(\Tilde{\Ab} = \Ab\Ob\), where \(\Tilde{\Ab} \in \RR^{n \times d}, \Ab \in \RR^{n \times d}, \Ob \in \RR^{d \times d}\). \(\Ab\) is a tall matrix which has orthogonal columns with unit norm i.e. \(\Ab^{\top}\Ab = \II_d\), and \(\Ob\) is an orthonormal matrix. Then, \(\Tilde{\Ab}\) has orthogonal columns with unit norm i.e. \(\Tilde{\Ab}^{\top}\Tilde{\Ab} = \II_d\). 
\end{lemma}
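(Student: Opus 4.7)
The plan is to verify the identity $\tilde{A}^{\top}\tilde{A} = I_d$ by a direct matrix computation, using nothing beyond the two algebraic hypotheses. First, I would expand $\tilde{A}^{\top}\tilde{A} = (AO)^{\top}(AO)$ and apply the standard transpose identity $(AO)^{\top} = O^{\top}A^{\top}$, so that the Gram matrix of $\tilde{A}$ rewrites as $O^{\top}A^{\top}AO$.

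Next, I would substitute the hypothesis $A^{\top}A = I_d$ into the middle of this triple product, collapsing the expression to $O^{\top}I_d O = O^{\top}O$. Finally, I would invoke orthonormality of $O$, namely $O^{\top}O = I_d$, to conclude $\tilde{A}^{\top}\tilde{A} = I_d$, which is precisely the statement that the columns of $\tilde{A}$ are pairwise orthogonal with unit norm.

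I expect no real obstacle here, as the argument is a short, routine linear-algebra identity consisting of one associativity step, one substitution, and one application of the definition of an orthonormal matrix. The only point worth being explicit about is dimension matching: although $A \in \mathbb{R}^{n \times d}$ is tall (so $AA^{\top}$ is generally not the identity), the product $A^{\top}A$ is always a $d \times d$ matrix, which is exactly the shape compatible with multiplication by $O \in \mathbb{R}^{d \times d}$ on both sides. Hence the substitution step is well-defined, and tallness of $A$ plays no adversarial role in the argument.
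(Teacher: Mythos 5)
Your proof is correct and matches the paper's argument exactly: both expand $\Tilde{\Ab}^{\top}\Tilde{\Ab} = \Ob^{\top}\Ab^{\top}\Ab\Ob$, substitute $\Ab^{\top}\Ab = \II_d$, and conclude via $\Ob^{\top}\Ob = \II_d$. Your additional remark about dimension compatibility (that $\Ab^{\top}\Ab$ is $d\times d$ regardless of tallness) is a harmless clarification the paper leaves implicit.
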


\begin{proof}
\begin{align*}
    \Tilde{\Ab}^{\top}\Tilde{\Ab} &= (\Ab\Ob)^{\top}(\Ab\Ob)\\
                        &= \Ob^{\top}\Ab^{\top}\Ab\Ob\\
                        &= \Ob^{\top}\Ob = \II_d
\end{align*}
\end{proof}

\begin{lemma}
\label{tri:lemma}

Consider a matrix, \(\Tilde{\Ab} = \Ab\Tb\), where \(\Ab \in \RR^{n \times d}, \Tb \in \RR^{d \times d}\). \(\Ab\) has orthogonal columns with unit norm i.e. \(\Ab^{\top}\Ab = \II_d\), and \(\Tb\) is a lower-triangular matrix. \(\Tilde{\Ab}\) has orthogonal columns iff. \(\Tb\) is diagonal.

\end{lemma}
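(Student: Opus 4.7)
The plan is to reduce the statement to a purely algebraic fact about lower-triangular matrices. Since $\Ab^\top \Ab = \II_d$ by assumption, the Gram matrix of $\tilde{\Ab}$ simplifies to
\[
\tilde{\Ab}^\top \tilde{\Ab} \;=\; \Tb^\top \Ab^\top \Ab\, \Tb \;=\; \Tb^\top \Tb,
\]
so $\tilde{\Ab}$ has orthogonal columns if and only if $\Tb^\top \Tb$ is diagonal. The lemma therefore reduces to the claim that a lower-triangular $\Tb \in \RR^{d\times d}$ satisfies ``$\Tb^\top \Tb$ is diagonal'' iff $\Tb$ itself is diagonal.

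One direction is immediate: if $\Tb$ is diagonal, so is $\Tb^\top \Tb$. For the converse, I would argue by backward induction on the columns of $\Tb$, moving from column $d$ to column $1$. By lower-triangularity, column $d$ already has only its $d$-th entry possibly nonzero. Suppose inductively that for every $k > j$, column $k$ has vanished off the diagonal, i.e.\ $T_{ik}=0$ for $i\neq k$. Then the orthogonality relation between column $j$ and column $k$ collapses to $T_{kj}\, T_{kk} = 0$ for each $k>j$, and combined with $T_{kk}\neq 0$ this forces $T_{kj}=0$. Together with $T_{ij}=0$ for $i<j$ (from lower-triangularity), column $j$ is also supported only on its diagonal entry, closing the induction.

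The only subtlety, which should be pointed out before the induction, is the non-degeneracy hypothesis $T_{kk}\neq 0$. This is natural in the intended applications (where $\Tb$ arises as, or is composed of, Jacobians of diffeomorphisms, hence is invertible and has nonzero diagonal entries). An equivalent, more abstract way to package the argument is via uniqueness of the Cholesky factorization: the diagonal positive-definite matrix $\Tb^\top \Tb$ admits---up to signs of its diagonal entries---a unique upper-triangular square root, which must itself be diagonal; since $\Tb^\top$ is such a factor, $\Tb^\top$ and hence $\Tb$ are diagonal. I do not anticipate any genuine obstacle beyond stating the non-degeneracy assumption precisely so that the biconditional reads cleanly.
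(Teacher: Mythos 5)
Your proof is correct and takes a genuinely different route from the paper. The paper's argument, after the same reduction to ``$\Tb^\top\Tb$ diagonal iff $\Tb$ diagonal,'' compares determinants: it notes $\det(\Tb^\top\Tb) = \prod_i \|\Tb_{:,i}\|^2$ when $\Tb^\top\Tb$ is diagonal, and separately $\det(\Tb^\top\Tb) = \det(\Tb)^2 = \prod_i T_{ii}^2$ by triangularity, then concludes $\Tb$ is diagonal from the equality $\prod_i \|\Tb_{:,i}\|^2 = \prod_i T_{ii}^2$ (implicitly via the Hadamard-type observation that $\|\Tb_{:,i}\|^2 \ge T_{ii}^2$ termwise, so the product equality forces each column to be diagonal-supported). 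Your backward induction on columns reads off the same conclusion directly from the off-diagonal orthogonality relations $T_{kj}T_{kk}=0$, without invoking determinants at all; it is more elementary and also exposes exactly where the argument uses the hypothesis. Notably, you are right to flag the non-degeneracy condition $T_{kk}\neq 0$: the lemma as stated is false without it (e.g.\ $\Tb=\bigl(\begin{smallmatrix}0&0\\1&0\end{smallmatrix}\bigr)$ gives $\Tb^\top\Tb=\mathrm{diag}(1,0)$ but $\Tb$ is not diagonal), and the paper's determinant argument silently relies on the same assumption, since the product equality $\prod_i\|\Tb_{:,i}\|^2=\prod_i T_{ii}^2$ is also satisfied trivially when some $T_{ii}=0$. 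In the intended application ($\Tb$ a Jacobian of an invertible map), invertibility of $\Tb$ supplies the missing hypothesis, but stating it explicitly, as you do, makes the biconditional honest; the Cholesky-uniqueness packaging you mention is a third valid route with the same caveat.
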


\begin{proof}

\(\Tb\) \textit{is diagonal.} \(\implies\) \(\Tilde{\Ab}\) \textit{has orthogonal columns.}

\begin{align*}
    \Tilde{\Ab}^{\top}\Tilde{\Ab} &= (\Ab\Tb)^{\top}(\Ab\Tb)\\
                        &= \Tb^{\top}\Ab^{\top}\Ab\Tb\\
                        &= \Tb^{\top}\Tb = \Tb^2
\end{align*}
\(\Tilde{\Ab}^{\top}\Tilde{\Ab}\) is a diagonal matrix, hence \(\Tilde{\Ab}\) has orthogonal columns.\
\
\(\Tilde{\Ab}\) \textit{has orthogonal columns.}  \(\implies\) \(\Tb\) \textit{is diagonal.}

We know that \(\Db = \Tilde{\Ab}^{\top}\Tilde{\Ab}\) is diagonal, by definition of orthogonality of the columns of \(\Tilde{\Ab}\).

\begin{align*}
    D = \Tilde{\Ab}^{\top}\Tilde{\Ab} &= \Tb^\Tb\Ab^{\top}\Ab\Tb\\
                        &= \Tb^{\top}\Tb
\end{align*}

Consider the determinant of \(\Db\), \(|\Db|\), and the determinant of \(\Tb^{\top}\Tb\), \(|\Tb^{\top}\Tb|\). \(|\Db| = \prod_{i=1}^d D_{ii} = \prod_{i=1}^d \|\Tb_i\|^2\). Also, \(|\Tb^{\top}\Tb| = |\Tb|^2 = \prod^d_{i = 1} T_{ii}^2\), since the determinant of a triangular matrix is the product of its diagonal elements.
\begin{align*}
    \Db = \Tb^{\top}\Tb, \;|\Db| = |\Tb^{\top}\Tb|\\
    \prod_{i=1}^d \|\Tb_i\|^2 = \prod^d_{i = 1} T_{ii}^2\\
    \implies \Tb \text{ is diagonal.}
\end{align*}
\end{proof}

\begin{lemma}

\label{ele:lemma}
A smooth function \(\fb: \RR^d \to \RR^d\) whose Jacobian is diagonal everywhere is an element-wise function, \(f(\sb) = (f_1(s_1), f_2(s_2), ..., f_d(s_d))\).  
\end{lemma}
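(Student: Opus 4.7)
The plan is to unpack what ``Jacobian diagonal everywhere'' means coordinate-wise and then use a one-variable calculus argument. By hypothesis, for every $\sb \in \RR^d$ and every pair $i \neq j$, the $(i,j)$-entry of $\Jb_\fb(\sb)$ vanishes, i.e.
\begin{equation*}
\frac{\partial f_i}{\partial s_j}(\sb) = 0 \quad \text{for all } i \neq j \text{ and all } \sb \in \RR^d.
\end{equation*}
Smoothness of $\fb$ ensures these partial derivatives exist and are continuous, so the statement holds pointwise everywhere.

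Next, I would fix indices $i \neq j$ and fix arbitrary values $s_k^0$ for $k \neq j$. Consider the one-variable function $\varphi(t) \coloneqq f_i(s_1^0, \ldots, s_{j-1}^0, t, s_{j+1}^0, \ldots, s_d^0)$. By the chain rule, $\varphi'(t) = \frac{\partial f_i}{\partial s_j}(\sb)|_{s_j = t} = 0$ for every $t \in \RR$. Since $\RR$ is connected, $\varphi$ is constant on $\RR$. Hence $f_i$ is independent of $s_j$, for every $j \neq i$. Iterating this argument over all coordinates $j \neq i$ shows that $f_i$ depends only on $s_i$; that is, there exists a smooth map $f_i \colon \RR \to \RR$ such that $f_i(\sb) = f_i(s_i)$. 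Applying this conclusion for each $i \in \{1,\ldots,d\}$ yields $\fb(\sb) = (f_1(s_1), f_2(s_2), \ldots, f_d(s_d))$, as claimed.

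I do not expect any real obstacle here: the argument is a standard consequence of the mean value theorem applied one coordinate at a time, and smoothness (indeed, $C^1$ regularity) is more than enough for the conclusion. The only minor subtlety is that the domain is all of $\RR^d$, which is connected in each coordinate slice, so no patching between components is needed; if the domain were a disconnected open set, one would need to argue component-wise.
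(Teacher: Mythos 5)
Your proof is correct and takes essentially the same approach as the paper's: both observe that the vanishing off-diagonal partials $\partial f_i/\partial s_j = 0$ for $i \neq j$ force each $f_i$ to depend only on $s_i$. The paper phrases it as a contradiction (``if $f_i$ depends on $s_j$, some partial must be nonzero somewhere'') while you supply the direct calculus fact underlying that step (zero derivative on the connected line $\RR$ implies constancy), making your version slightly more explicit but not a different argument.
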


\begin{proof}
Let \(\fb\) be a smooth function with a diagonal Jacobian everywhere.

Consider the function \(f_i(\sb)\) for any \(i \in {1, 2, ..., d}\). Suppose \textit{for a contradiction} that \(f_i\) depends on \(s_j\) for some \(j \ne i\). Then there must be at least one point \(\sb*\) such that \(\frac{\partial f_i}{s_j}(\sb*) \ne 0\). However, this contradicts the assumption that \(\Jb_\fb\) is diagonal everywhere (since \(\frac{\partial f_i}{s_j}\) is an off-diagonal element for \(i \ne j\) ). Hence, \(f_i\) can only depend on \(s_i\) for all \(i\). i.e. \(f\) is an element-wise function.  
\end{proof}

\darmois*

\begin{proof}

The theorem follows the following bijective maps:
\begin{align*}
    \xb \in \Xcal \xleftrightarrow[(i)]{} \sb \in \RR^d \xleftrightarrow[(ii)]{} \Tilde{\xb} \in \RR^d \xleftrightarrow[(iii)]{}\Tilde{\sb} \in \RR^d
\end{align*}
The bijective maps are described as follows:
\begin{enumerate}[(i)]
    \item \(\xb = \fb(\sb), \sb = \fb^{-1}(\xb)\)
    \item \(\Tilde{\xb} = \Ob\sb, \Ob \in \RR^{d \times d} \)
    \item \(\Tilde{\sb} = \gb^D(\Tilde{\xb})\) by the Darmois construction \ref{def:darmois}
\end{enumerate}

\(\Tilde{\sb}\) is mixed with respect to \(\sb\) since \(\gb^D \ne \Ob^{\top}\) as the Jacobians cannot match, \(\Jb_{\gb^D} \ne \Jb_{\Ob^{\top}}\) unless \(\gb^D\) is an element-wise transformation. \(\Jb_{\gb^D}\) is a triangular matrix by \ref{obs:Jacobian_darmois}, and \(\Jb_{\Ob^{\top}} = \Ob^{\top}\) is an orthonormal matrix.

We want to check if the solution, \((\Tilde{\fb}^D, p_\ub)\) satisfies IMA, i.e. \(C_{\IMA}(\Tilde{\fb}^D, p_\ub) = 0\). This is satisfied if \(J_{\Tilde{\fb}^D}(\Tilde{\sb})\) has orthogonal columns almost surely.

\begin{align*}
    \Jb_{\Tilde{\fb}^D}(\Tilde{\sb}) &= \Jb_{\fb \circ \Ob \circ {\gb^D}^{-1}}(\Tilde{\sb})\\
    &= \Jb_{\fb}(\Ob \circ {\gb^D}^{-1}(\Tilde{\sb}))\Jb_{\Ob}({\gb^D}^{-1}(\Tilde{\sb}))\Jb_{{\gb^D}^{-1}}(\Tilde{\sb})\\
    &= \Jb_{\fb}(\Ob \circ {\gb^D}^{-1}(\Tilde{\sb}))\Ob\Jb_{{\gb^D}^{-1}}(\Tilde{\sb})
\end{align*}

Consider \(\Ab = \Jb_{\fb}(\Ob \circ {\gb^D}^{-1}(\Tilde{\sb})), \Tb = \Jb_{{\gb^D}^{-1}}(\Tilde{\sb})\). Since \(\fb\) is a conformal map, \(\Ab\) has orthogonal columns with the same norm. Without loss of generality, we consider that the norm of the columns of \(\Ab\) is one. \(\Tb\) is a lower triangular matrix by \ref{obs:Jacobian_darmois}.

\begin{align*}
    \Jb_{\Tilde{\fb}^D}(\Tilde{\sb}) &= \Ab \Ob \Tb \\
                                    &= \Tilde{\Ab} \Tb && \Tilde{\Ab} \text{ has orthogonal columns with unit norm by \ref{orth:lemma}}
\end{align*}

For \(\Jb_{\Tilde{\fb}^D}(\Tilde{\sb})\) to have orthogonal columns, \(\Tb\) is diagonal (by \ref{tri:lemma}). 

Thus, for \(C_{\IMA}(\Tilde{\fb}^D, p_\ub) = 0\), \(\Tb\) has to be almost surely diagonal w.r.t \(p_{\Tilde{\sb}}\). 

Consider an off-diagonal element of \(\Tb = \Jb_{{\gb^D}^{-1}}(\Tilde{\sb}) = \frac{\partial {g^D}^{-1}_i}{\Tilde{\sb}_j}\) for \(i \ne j\), and because continuous functions which are zero almost everywhere must be zero everywhere, we conclude that \(\frac{\partial {g^D}^{-1}_i}{\Tilde{s}_j} = 0\) everywhere for \(i \ne j\), i.e. the Jacobian \(\Jb_{{\gb^D}^{-1}}(\Tilde{\sb})\) is \textit{diagonal everywhere}.

Hence, we conclude from Lemma \ref{ele:lemma} that \({\gb^D}^{-1}\) must be an element-wise function, \({\gb^D}^{-1}(\Tilde{\sb}) = ({g^D}^{-1}_1(\Tilde{s}_1), {g^D}^{-1}_2(\Tilde{s}_2), ..., {g^D}^{-1}_d(\Tilde{s}_d) )\).

Since \(\Tilde{\sb}\) has independent components by construction, it follows that \(\Tilde{x}_i = ({g^D}^{-1}_i(\Tilde{s}_i)\) and \(\Tilde{x}_j = ({g^D}^{-1}_j(\Tilde{s}_j)\) are independent for any \(i \ne j\). This implies that \(\Ob\) is a trivial matrix, i.e. a permutation or element-wise scaling. This is a contradiction to our theorem assumption.

We conclude that \(\Jb_{{\gb^D}^{-1}}(\Tilde{\sb})\) cannot be diagonal almost everwhere, and hence, \(C_{\IMA}(\Tilde{\fb}^D, p_\ub) > 0\).

Thus a solution satisfying \(C_{\IMA}(\fb, p_{\sb})\) can be distinguished from \((\Tilde{\fb}^D, p_\ub)\) based on the contrast \(C_{\IMA}\).

\end{proof}

\section{Genericity of IMA under the manifold hypothesis}
\label{app:c}

In this section, we provide a detailed exposition of the genericity arugument for IMA under the manifold hypothesis, presented in Section~\ref{sec:genericity} of the main text. 

\subsection{Levy's Lemma}
Genericity claims typically rely on high-dimensional concentration results~\citep{janzing2010causal, janzing2010telling}. In our work, we heavily use Levy's lemma, which is concentration result on smooth functions of vectors sampled from spherically symmetric priors around their mean. 

\begin{lemma}[Lévy's Lemma~\citep{janzing2010telling}]
    \label{levy}
    Let g: \(\mathbb{U}_m \to \RR\) be a \(L\)-Lipschitz continuous function on the \(m\)-dimensional sphere. If a point \(\ub\) on \(\DD_m\) is randomly chosen according to an \(\Ob(m)\)-invariant prior, it satisfies
    \begin{equation*}
        |g(\ub) - \bar{g}| \le \epsilon
    \end{equation*}
    
    with probability at least \(1 - \exp (-\kappa(m - 1)\epsilon^2/L^2)\) for some constant \(\kappa\) where \(\bar{g}\) can be interpreted as the median or average of \(g(\ub)\).
\end{lemma}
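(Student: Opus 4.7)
The plan is to prove Lévy's Lemma as an application of the spherical isoperimetric inequality, following the classical concentration-of-measure approach. The core idea is that on the sphere \(\mathbb{U}_m\), sets of measure at least one half have neighborhoods that fill almost the entire sphere, with the ``missing mass'' decaying exponentially in the dimension; once this geometric fact is in hand, the Lipschitz hypothesis transfers it into concentration of \(g\) around its median.

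First I would recall (or cite) the spherical isoperimetric inequality of Lévy--Schmidt: among all Borel sets \(A \subseteq \mathbb{U}_m\) with a fixed normalized measure \(\mu(A)\), spherical caps minimize the measure of the \(t\)-neighborhood \(A_t = \{u : d(u,A) \le t\}\), where \(d\) is the geodesic distance. Specialising to \(\mu(A) \ge \tfrac{1}{2}\), one compares \(A_t\) with the \(t\)-thickening of a hemispherical cap and estimates the remaining spherical cap of radius \(\tfrac{\pi}{2}-t\) by integrating \(\sin^{m-1}\) against its polar angle. A routine bound \(\sin \theta \le e^{-\theta^2/2}\)-style estimate (or the explicit Gaussian surrogate) yields
\begin{equation*}
    \mu(A_t) \;\ge\; 1 - \exp\!\bigl(-\kappa(m-1)\,t^2\bigr)
\end{equation*}
for an absolute constant \(\kappa > 0\). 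This is the geometric engine of the argument.

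Next I would apply this to the level sets of \(g\). Let \(M\) denote a median of \(g\), and set \(A = \{u : g(u) \le M\}\) and \(B = \{u : g(u) \ge M\}\); by definition \(\mu(A), \mu(B) \ge \tfrac{1}{2}\). For any \(u \in A_t\) there is \(v \in A\) with \(d(u,v) \le t\), so by the \(L\)-Lipschitz hypothesis \(g(u) \le g(v) + L t \le M + L t\). Hence \(\{g > M + L t\} \subseteq \mathbb{U}_m \setminus A_t\), and the isoperimetric bound gives \(\mu(\{g > M + L t\}) \le \exp(-\kappa(m-1)t^2)\). The same argument with \(B\) yields \(\mu(\{g < M - L t\}) \le \exp(-\kappa(m-1)t^2)\). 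Setting \(t = \epsilon / L\) and taking a union bound produces
\begin{equation*}
    \Pr\bigl[\,|g(\ub) - M| \le \epsilon\,\bigr] \;\ge\; 1 - 2\exp\!\Bigl(-\kappa(m-1)\tfrac{\epsilon^2}{L^2}\Bigr),
\end{equation*}
which, after absorbing the factor \(2\) into \(\kappa\), matches the stated bound with \(\bar g\) interpreted as the median.

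To pass from the median to the mean, I would use the standard observation that for an \(L\)-Lipschitz function on the sphere the median and expectation differ by at most \(O(L/\sqrt{m-1})\): integrating the tail bound just derived gives \(\mathbb{E}|g(\ub)-M| \le C L/\sqrt{m-1}\), and the triangle inequality then converts concentration around \(M\) into concentration around \(\mathbb{E}[g(\ub)]\) at the cost of a slightly smaller constant \(\kappa\). The main obstacle in this plan is really a single outsourced ingredient, namely the spherical isoperimetric inequality itself; its classical proof via two-point symmetrization or Brunn--Minkowski on the sphere is delicate, but for our purposes it is standard and can safely be invoked as a black box, after which the Lipschitz-to-concentration transfer and the median-to-mean passage are both short and elementary.
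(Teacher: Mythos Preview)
Your proof sketch is correct and follows the classical route to L\'evy's concentration inequality: spherical isoperimetry (L\'evy--Schmidt) to control the measure of $t$-neighborhoods of half-spheres, then the Lipschitz hypothesis to pull level sets of $g$ into such neighborhoods, a union bound for the two-sided deviation, and finally the standard median-to-mean transfer via integrated tails. Each step is sound, and absorbing the factor $2$ into $\kappa$ is legitimate since the statement only asserts existence of \emph{some} constant.

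However, the paper does not prove this lemma at all. It is stated as a cited result from \citet{janzing2010telling} and used purely as a black box in the proof of Theorem~\ref{local:ima:thm}. So there is no ``paper's own proof'' to compare against: you have supplied a proof where the authors simply invoke the literature. If your goal is to match the paper, a one-line citation suffices; if your goal is a self-contained exposition, what you have written is the standard argument and is appropriate.
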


\subsection{Bound on the local IMA contrast}

Following are helper lemmata for proving Theorem~\ref{local:ima:thm}, which presents a high probability upper bound on the local IMA contrast, \(c_{\textsc{ima}}(\fb, \sb)\), on functions, \(\fb: \RR^d \to \RR^m\), sampled according to a statistical process which tries to emulate the IMA princicple, see section~\ref{subsec:local} in the main text. 

\begin{lemma}
\label{linind:sphind}
Consider a random matrix \(\Vb \in \RR^{m \times d}\) with columns \(\vb_1, \vb_2, ... \vb_d \overset{i.i.d}{\sim} p_\rb\) where \(p_\rb\) is a finite spherically symmetric distribution (\ref{sphdist}) on the Lebesgue measure over \(\RR^m\). Then \(\vb_1, \vb_2, ..., \vb_d\) are non-zero linearly independent with probability 1.
\end{lemma}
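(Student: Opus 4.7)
The plan is to exploit absolute continuity: the phrase ``finite spherically symmetric distribution on the Lebesgue measure over $\RR^m$'' I read as saying $p_\rb$ admits a density with respect to Lebesgue measure on $\RR^m$, so that any event contained in a Lebesgue-null set has probability zero under $p_\rb$. Spherical symmetry itself is not really needed for this statement, but it is consistent with having such a density (away possibly from the origin).

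First I would dispatch non-vanishing: since $\{\bm 0\}\subset\RR^m$ has Lebesgue measure zero, $\Pr[\vb_i = \bm 0] = 0$ for each $i$, and by a union bound over the $d$ columns we get $\Pr[\vb_i \neq \bm 0\text{ for all }i\in[d]] = 1$.

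Next I would establish linear independence by induction on $k\in\{1,\dots,d\}$, showing that with probability one $\vb_1,\ldots,\vb_k$ are linearly independent. The base case $k=1$ is covered by the non-vanishing step above. For the inductive step, assume $\vb_1,\ldots,\vb_{k-1}$ are linearly independent almost surely. Condition on the realization of these first $k-1$ columns (using that they are independent of $\vb_k$). Their span $V_{k-1} := \mathrm{span}(\vb_1,\ldots,\vb_{k-1})$ is then a deterministic linear subspace of $\RR^m$ of dimension exactly $k-1$. Because $k-1 < d \le m$, this subspace is a proper linear subspace of $\RR^m$, and hence has Lebesgue measure zero. Since $\vb_k \sim p_\rb$ is absolutely continuous with respect to Lebesgue measure,
\begin{equation*}
\Pr[\vb_k \in V_{k-1}\mid \vb_1,\ldots,\vb_{k-1}] = 0\quad\text{almost surely,}
\end{equation*}
and integrating over the conditioning variables yields $\Pr[\vb_k \in V_{k-1}] = 0$. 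Combining with the inductive hypothesis, $\vb_1,\ldots,\vb_k$ are linearly independent almost surely. After $d$ steps we reach the desired conclusion.

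There is no real obstacle here; the only subtle point is to interpret ``finite spherically symmetric distribution on the Lebesgue measure'' as absolute continuity with respect to $m$-dimensional Lebesgue measure, which is what gives us that proper linear subspaces are $p_\rb$-null. Once this is noted, the proof reduces to a standard induction plus union bound.
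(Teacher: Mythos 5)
Your proof is correct and takes essentially the same route as the paper's: induction on the number of columns, with each inductive step appealing to the fact that a proper linear subspace of $\RR^m$ is Lebesgue-null and hence $p_\rb$-null because the density is finite. Your write-up is a bit more careful (explicit conditioning on the first $k-1$ columns, noting $k-1 < d \le m$, union bound for non-vanishing) where the paper speaks loosely of probabilities being ``infinitesimally small,'' but the underlying argument is the same.
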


\begin{proof}
We prove the statement by induction. \(\vb \sim p_\rb \ne 0\) with probability 1 since the probability mass of \(p_\rb\) at \(\vb = 0\) is infinitesimally small.
By the induction hypothesis, \(\vb_1, \vb_2, ..., \vb_k \overset{i.i.d}{\sim} p_\rb\) are linearly independent, i.e. they span a \(k\)-dimensional subspace in \(\RR^m\), \(\DD_k\).
Consider \(\vb_{k + 1} \sim p_\rb\). The probability that \(\vb_{k+1} \in \DD_k\) is infinitesimally small. In fact, %
\(p_\rb\) is finite at all points and the volume of a \(k\)-dimensional linear subspace with respect to the Lebesgue measure defined on \(\RR^m\) is infinitesimally small. Thus, the vectors \(\vb_1, \vb_2, ..., \vb_k, \vb_{k+1}\) are linearly independent and span a \((k+1)\)-dimensional linear subspace in \(\RR^m\).
Hence, we conclude that if \(\vb_1, \vb_2, ... \vb_d \overset{i.i.d}{\sim} p_\rb\), \(\vb_1, \vb_2, ..., \vb_d\) are linearly independent with probability 1.
\end{proof}

\begin{lemma}[Spherically invariant distributions~\citep{lodzimierz1995normal}]
    \label{sph:thm}
    Suppose \(\Xb \) is an $m$-dimensional random vector with spherically symmetric distribution. Then, \(\Xb = R\Ub\), where the random variable \(\Ub \sim \text{Unif}(\UU_m)\) is uniformly distributed on the unit sphere \(\UU_m\subset\RR^m\), \(R \triangleq \|\Xb\| \ge 0\) is real valued, and random variables \((R, \Ub)\) are statistically independent.
\end{lemma}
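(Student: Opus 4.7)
The plan is to exploit the defining property of spherical symmetry, namely $\Xb \stackrel{d}{=} \Ob\Xb$ for every $\Ob \in O(m)$, and derive both claims (uniformity of $\Ub$ and independence of $(R,\Ub)$) as direct consequences of this invariance combined with uniqueness of the Haar probability measure on the sphere. First I would set $R := \|\Xb\|$ and, on the event $\{\Xb \neq 0\}$ (which has full probability whenever $p_{\Xb}$ has no atom at the origin, in line with the reasoning in Lemma~\ref{linind:sphind}), define $\Ub := \Xb/\|\Xb\|$. The decomposition $\Xb = R\Ub$ is then tautological, so the content of the lemma is entirely in the distributional statements about $R$ and $\Ub$.

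For uniformity of $\Ub$, I would observe that for any $\Ob \in O(m)$, $\Ob\Ub = \Ob\Xb/\|\Ob\Xb\|$ since orthogonal transformations preserve norm; using $\Ob\Xb \stackrel{d}{=} \Xb$ then yields $\Ob\Ub \stackrel{d}{=} \Ub$. Hence the law of $\Ub$ is an $O(m)$-invariant Borel probability measure on $\UU_m$, and by uniqueness of the Haar probability measure on the compact homogeneous space $\UU_m \cong O(m)/O(m-1)$ it must coincide with $\mathrm{Unif}(\UU_m)$. For independence, I would take measurable sets $A \subseteq [0,\infty)$ and $B \subseteq \UU_m$ and compute $\Pr(R \in A,\, \Ub \in B) = \Pr(\|\Ob\Xb\| \in A,\, \Ob\Xb/\|\Ob\Xb\| \in B) = \Pr(R \in A,\, \Ub \in \Ob^{-1}B)$ using spherical symmetry. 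This shows that the regular conditional law $\Pr(\Ub \in \cdot \mid R = r)$ is $O(m)$-invariant for $P_R$-almost every $r$, so by the same Haar-uniqueness argument it equals $\mathrm{Unif}(\UU_m)$ regardless of $r$; a conditional law that does not depend on the conditioning variable is precisely the statement that $R$ and $\Ub$ are independent.

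The main obstacle I anticipate is the measure-theoretic bookkeeping rather than any substantive difficulty: one needs a valid disintegration of the law of $\Xb$ along the map $\xb \mapsto \|\xb\|$ to make the regular conditional distribution $\Pr(\Ub \in \cdot \mid R = r)$ rigorous, and one has to separately handle the degenerate event $\{\Xb = 0\}$ (on which $\Ub$ is undefined). In the atomless case this event is null and the statement holds almost surely, while in general the claim should be read conditionally on $\{R > 0\}$. Once these technicalities are dispatched, the proof reduces to the two invariance observations above together with the uniqueness of the rotation-invariant probability measure on $\UU_m$.
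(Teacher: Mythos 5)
The paper does not prove this lemma; it is invoked as a known result and cited directly to \citet{lodzimierz1995normal}, so there is no in-paper argument to compare against. Your proof is correct and is essentially the standard one: the decomposition $\Xb = R\Ub$ is tautological on $\{\Xb\neq\mathbf{0}\}$, rotation-equivariance of $\xb\mapsto\xb/\|\xb\|$ plus $\Ob\Xb\stackrel{d}{=}\Xb$ gives $\Ob\Ub\stackrel{d}{=}\Ub$, and uniqueness of the $O(m)$-invariant probability measure on $\UU_m$ pins down $\Ub\sim\mathrm{Unif}(\UU_m)$; the same invariance then yields independence. One simplification you could make: you do not need disintegration or regular conditional distributions. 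The identity $\Pr(R\in A,\Ub\in B)=\Pr(R\in A,\Ub\in\Ob^{-1}B)$ for all $\Ob\in O(m)$ already shows that, for any fixed $A$ with $\Pr(R\in A)>0$, the normalized set function $B\mapsto\Pr(R\in A,\Ub\in B)/\Pr(R\in A)$ is an $O(m)$-invariant Borel probability on $\UU_m$, hence equals $\mathrm{Unif}(\UU_m)$ by the same Haar-uniqueness argument; this gives the product factorization $\Pr(R\in A,\Ub\in B)=\Pr(R\in A)\,\mathrm{Unif}(\UU_m)(B)$ directly, which is independence. Your remark about the atom at the origin is also apt; in the paper's usage this is moot because $p_\rb$ is assumed to be a (finite) density with respect to Lebesgue measure, so $\Pr(\Xb=\mathbf{0})=0$.
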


\begin{lemma}
    \label{cong}
    Consider \(\Wb_i, \Wb_j \sim \text{Unif}(\UU_m)\) where \(\UU_m\) is the unit sphere in \(\RR^m\). Then, \(\left < \Wb_i, \Wb_j\right > \triangleq \left < \wb_i, \Wb_j\right >\) for any \(\wb_i \in \UU_m\) where \(\triangleq\) represents being congruent in distribution.
\end{lemma}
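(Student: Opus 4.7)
The plan is to exploit the defining invariance of the uniform distribution on $\UU_m$: for any fixed orthogonal matrix $\Ob \in \Ob(m)$, if $\Wb \sim \mathrm{Unif}(\UU_m)$ then $\Ob\Wb$ is distributed identically to $\Wb$. I will use this together with independence of $\Wb_i$ and $\Wb_j$ to transfer the randomness of $\Wb_i$ onto $\Wb_j$ via a deterministic rotation, while preserving the inner product.

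First I would condition on $\Wb_i = \wb_i'$, an arbitrary point on $\UU_m$. Pick any orthogonal matrix $\Ob$ (depending only on $\wb_i$ and $\wb_i'$, hence independent of $\Wb_j$) such that $\Ob\wb_i = \wb_i'$; such an $\Ob$ exists because both vectors lie on the unit sphere and any two such vectors are related by some orthogonal transformation. Then
\begin{equation*}
\langle \wb_i', \Wb_j \rangle = \langle \Ob \wb_i, \Wb_j \rangle = \langle \wb_i, \Ob^{\top} \Wb_j \rangle.
\end{equation*}
By the rotational invariance of $\mathrm{Unif}(\UU_m)$ (and independence of $\Wb_j$ from $\Wb_i$), $\Ob^{\top} \Wb_j \sim \mathrm{Unif}(\UU_m)$, so $\langle \wb_i, \Ob^{\top}\Wb_j\rangle$ is equal in distribution to $\langle \wb_i, \Wb_j\rangle$. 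Therefore the conditional distribution of $\langle \Wb_i, \Wb_j\rangle$ given $\Wb_i = \wb_i'$ does not actually depend on $\wb_i'$, and coincides with the distribution of $\langle \wb_i, \Wb_j\rangle$.

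Finally I would integrate over $\wb_i'$ against the law of $\Wb_i$: since the conditional law is constant in $\wb_i'$, the unconditional law of $\langle \Wb_i, \Wb_j\rangle$ equals the law of $\langle \wb_i, \Wb_j\rangle$, which is the claimed distributional identity $\langle \Wb_i, \Wb_j\rangle \triangleq \langle \wb_i, \Wb_j\rangle$. The only subtlety, and the step I would be most careful about, is ensuring that the rotation $\Ob$ is chosen as a measurable function of $(\wb_i,\wb_i')$ only, so that its independence from $\Wb_j$ is preserved and the invariance of $\mathrm{Unif}(\UU_m)$ under $\Ob^{\top}$ can be applied; this is a standard construction (e.g., a Householder reflection sending $\wb_i$ to $\wb_i'$) and poses no real obstacle.
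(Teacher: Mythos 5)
Your proof is correct and follows essentially the same two-step strategy as the paper: first use rotational invariance of $\mathrm{Unif}(\UU_m)$ to show the conditional law of $\langle \Wb_i, \Wb_j\rangle$ given $\Wb_i = \wb_i'$ is the same for every $\wb_i'$, and then integrate out $\Wb_i$. Your added remark on choosing the rotation measurably in $(\wb_i,\wb_i')$ is a useful refinement, but the core argument matches the paper's.
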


\begin{proof}
First we show that \(\left < \wb_i, \Wb_j\right >\) has the same distribution for all \(\wb_i \in \UU_m\) i.e. \(\left < \wb_i, \Wb_j\right > \sim p_d \; \forall \wb_i \in \UU_m\).

For any orthonormal matrix \(\Ob \in \RR^{m \times m}\),
\begin{align*}
    \left < \wb_i, \Wb_j\right > &\triangleq \left < \wb_i, \Ob\Wb_j\right > && \Wb_j \sim \text{Unif}(\UU_m) \text{ which is a spherical invariant distribution}\\ 
    &\cong \left < \Ob^{\top}\wb_i, \Wb_j\right>
\end{align*}
Since any \(\wb_i, \wb_k \in \UU_m\) are related through an orthonormal transformation, \(\left < \wb_i, \Wb_j\right > \sim p_d \; \forall \wb_i \in \UU_m\).

To show \(\left < \Wb_i, \Wb_j\right > \cong \left < \wb_i, \Wb_j\right >\) for any \(\wb_i \in \UU_m\):

\begin{align*}
    \PP(\left < \Wb_i, \Wb_j\right > = c) &= \int_{\wb_i} \PP(\left < \Wb_i = \wb_i, \Wb_j\right > = c)\PP(\Wb_i = \wb_i)d\wb_i\\
    &= \PP(\left < \wb_i, \Wb_j\right > = c )\int_{\wb_i}\PP(\Wb_i = \wb_i)d\wb_i \quad \text{for any }\wb_i \in \UU_m\\
    & \qquad \qquad \qquad \qquad \qquad \qquad \qquad \qquad \text{By } \left < \wb_i, \Wb_j\right > \sim p_d \; \forall \wb_i \in \UU_m\\
    &= \PP(\left < \wb_i, \Wb_j\right > = c)
\end{align*}

Hence, \(\left < \Wb_i, \Wb_j\right > \triangleq \left < \wb_i, \Wb_j\right >\) for any \(\wb_i \in \UU_m\).

\end{proof}

\begin{lemma}[Lower bounds on determinants of matrices, Corollary 3 in \citep{brent2014bounds}]
\label{det:bound}
If \(\Ab = \II - \Eb \in \RR^{n \times n}\), \(|E_{ij}| \le \epsilon\) for \(1 \le i, j \le n\),\(E_{ii} = 0\) for \(1 \le i \le n\), and \((n - 1)\epsilon \le 1\), then 
\begin{equation*}
    |\Ab| \ge (1 - (n - 1)\epsilon)(1 + \epsilon)^{n - 1}
\end{equation*}
and the inequality is sharp.
A non-sharp lower bound is as follows:
\begin{equation}
    \label{non:sharp:lb}
    |\Ab| \ge 1 - n\epsilon
\end{equation}
Note that the non-sharp bound in (\ref{non:sharp:lb}) holds when the diagonal elements of \(\Eb\), \(E_{ii}\) are non-zero.

\end{lemma}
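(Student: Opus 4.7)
The plan is to prove the sharp bound $|\Ab| \ge (1-(n-1)\epsilon)(1+\epsilon)^{n-1}$ first by induction on $n$, then derive the non-sharp bound $|\Ab| \ge 1-n\epsilon$ as a consequence, and finally justify the extension to the case where the diagonal entries $E_{ii}$ are allowed to be non-zero.

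First I would dispose of the base cases. For $n=1$ the claim is trivial since $\Ab = (1)$. For $n=2$, a direct computation gives $\det\Ab = 1 - E_{12}E_{21}$, so $|\det\Ab| \ge 1 - \epsilon^{2} = (1-\epsilon)(1+\epsilon)$, matching the asserted bound. For the inductive step I would perform Laplace cofactor expansion of $\det\Ab$ along the first row,
\begin{equation*}
\det\Ab \;=\; \det\Mb_{11} \;-\; \sum_{j=2}^{n}(-1)^{j+1}E_{1j}\det\Mb_{1j}\,,
\end{equation*}
where $\Mb_{1j}$ denotes the $(1,j)$-minor. The leading minor $\Mb_{11}$ is again of the required form (identity minus an off-diagonal perturbation of magnitude $\le\epsilon$), so the inductive hypothesis yields $|\det\Mb_{11}|\ge(1-(n-2)\epsilon)(1+\epsilon)^{n-2}$. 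The remaining minors $\Mb_{1j}$ (for $j\ne 1$) are not of this exact form since one of their columns, namely the first column of $\Ab$ with its leading entry removed, has all entries bounded by $\epsilon$ rather than carrying a unit diagonal entry. A secondary argument, either multilinear expansion of $\det\Mb_{1j}$ along this perturbed column or a parameterized strengthening of the inductive hypothesis, shows that $|\det\Mb_{1j}|\le\epsilon(1+\epsilon)^{n-2}$ in the worst case. Combining these bounds under worst-case sign alignment gives
\begin{equation*}
|\det\Ab|\;\ge\;(1-(n-2)\epsilon)(1+\epsilon)^{n-2}\;-\;(n-1)\epsilon^{2}(1+\epsilon)^{n-2}\,,
\end{equation*}
and the algebraic identity $(1-(n-2)\epsilon-(n-1)\epsilon^{2})(1+\epsilon)^{n-2}=(1-(n-1)\epsilon)(1+\epsilon)^{n-1}$ closes the induction.

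Next, for the non-sharp bound $|\Ab|\ge 1-n\epsilon$, I would derive it directly from the sharp one by checking $(1-(n-1)\epsilon)(1+\epsilon)^{n-1}\ge 1-n\epsilon$ throughout the feasible region $(n-1)\epsilon\le 1$. Setting $h(\epsilon)=(1-(n-1)\epsilon)(1+\epsilon)^{n-1}-(1-n\epsilon)$, one verifies $h(0)=0$ and $h'(0)=n>0$, and shows non-negativity on $[0,1/(n-1)]$ either by expanding $h$ as a polynomial in $\epsilon$ and bounding the coefficients, or via a short monotonicity argument on $h'$. For the extension to non-zero $E_{ii}$, I would revisit the Leibniz expansion $\det\Ab=\sum_{\sigma}\operatorname{sgn}(\sigma)\prod_{i}\Ab_{i,\sigma(i)}$: the identity-permutation contribution is $\prod_{i}(1-E_{ii})$ whose absolute value is at least $(1-\epsilon)^{n}$, while the permutations with $k$ fixed points contribute at most $\binom{n}{k}D_{n-k}(1+\epsilon)^{k}\epsilon^{n-k}$ (with $D_{m}$ the derangement count); summing these and invoking $(n-1)\epsilon\le 1$ leaves sufficient slack to conclude $|\det\Ab|\ge 1-n\epsilon$.

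The main obstacle is the bound on the off-diagonal minors $\Mb_{1j}$ in the inductive step. These minors do not directly satisfy the hypothesis of the lemma, so recovering the sharp factor $(1+\epsilon)^{n-2}$ — rather than a looser Hadamard-type bound with an extra $\sqrt{n-1}$ — requires careful bookkeeping. My preferred route is to strengthen the induction hypothesis itself to a parameterized family: simultaneously bound the determinants of $\II-\Eb$ and of matrices obtained by replacing any single column of $\II-\Eb$ with an arbitrary column of entries bounded by $\epsilon$. This enlargement makes the Laplace recursion self-contained at the cost of proving two interlocked inequalities in parallel, and tracking the signs accurately to ensure the lower bound is indeed attained (which is what makes the sharp bound sharp).
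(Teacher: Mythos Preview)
The paper does not prove this lemma; it is quoted as Corollary~3 of \citet{brent2014bounds} and used as a black box in the proof of \cref{local:ima:thm}. There is no in-paper argument to compare against.

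On its own merits your inductive scheme is sound and the closing identity is correct. The obstacle you flag---the upper bound $\epsilon(1+\epsilon)^{n-2}$ on the off-diagonal first-row minors---is genuine and does not follow from Hadamard. One structural point that sharpens the difficulty: the minor obtained by deleting row~$1$ and column~$j$ carries not only a ``bad column'' (the one inherited from column~$1$ of~$\Ab$, all entries bounded by~$\epsilon$) but also a ``bad row'' (row~$j$, whose unit diagonal entry lay in the deleted column~$j$); your strengthened induction will therefore need to accommodate both defects simultaneously, not just one. Your derivation of the non-sharp bound from the sharp one is correct: writing $t=1+\epsilon$ and $p(t)=-(n-1)t^{n}+nt^{n-1}+nt-(n+1)$, one checks $p(1)=0$, $p\bigl(\tfrac{n}{n-1}\bigr)=\tfrac{1}{n-1}>0$, and that $p'(t)/n=1-(n-1)t^{n-2}(t-1)$ is strictly decreasing for $t\ge 1$, which forces $p$ to be unimodal and hence nonnegative on the relevant interval. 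The Leibniz/derangement argument for the non-zero-diagonal extension is the loosest part of the sketch; note at least that the bound is tight in that regime, since taking $E_{ij}\equiv\epsilon$ makes $\Ab$ a rank-one perturbation of~$\II$ with eigenvalues $1-n\epsilon$ and~$1$, giving $\det\Ab=1-n\epsilon$ exactly.
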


\localimabound*

\textbf{Remark:} A sharper lower bound for the local IMA contrast is as follows, %
\begin{equation*}   %
    c_{\textsc{ima}}(\fb, \sb) \le \frac{1}{2}(-\log (1 - (d - 1)\epsilon) - (d - 1)\log(1 + \epsilon))
\end{equation*}  
with (high) probability \(\ge 1 - \min \left\{1, \exp(2\log d - \kappa(m -1)\epsilon^2)\right \}\) for \(m \gg d\) and \(\epsilon > 0\).

\begin{proof}

Computing \(c_{\textsc{ima}}(\fb, \sb)\)(~\ref{contrast}) relies on \(\Jb_\fb(\sb)\) being full column-rank. This holds by Lemma~\ref{linind:sphind}. Further, \(|\Jb^{\top}_\fb(\sb)\Jb_\fb(\sb)|\) is finite as all the columns of \(\Jb_\fb(\sb)\), \(\Jb_{\fb,1}(\sb), \Jb_{\fb,2}(\sb), ..., \Jb_{\fb,d}(\sb)\) are non-zero.

By Lemma~\ref{sph:thm}, \(\Jb_\fb(\sb) = \Wb\Db\), where \(\Db = diag(\|\Jb_{\fb,1}(\sb)\|, \|\Jb_{\fb,2}(\sb)\|, ..., \|\Jb_{\fb,d}(\sb)\|)\) and the columns of \(\Wb\), \(\wb_1, \wb_2, ..., \wb_d \sim \text{Unif}(\UU_m)\) where \(\UU_m\) is the unit sphere in \(\RR^m\).

Hence, \(\Wb^{\top}\Wb = \II_d + \Eb\) where \(E_{ii} = 0\). Consider the off-diagonal elements \(E_{ij} = \left < \Wb_i, \Wb_j \right>\) for \(i \ne j\). \(\left < \Wb_i, \Wb_j\right >\) is congruent in distribution to \(\left < \wb_i, \Wb_j\right >\) for any \(\wb_i \in \UU_m\) by Lemma~\ref{cong}.

Consider \(g(\Wb_j) = \left < \wb_i, \Wb_j\right >\) for a given \(\wb_i \in \UU_m\). \(\EE_{\Wb_j}(g(\Wb_j)) = 0\) since \(\Wb_j\) comes from a spherically invariant distribution centered at 0. Further, \(g(.)\) is Lipschitz with \(L = 1\) since \(\|\wb_i\| = 1\).

By Lévy's Lemma~\ref{levy},
\begin{align*}
    \PP(|g(\Wb_j)| \le \epsilon) \ge 1 - \exp(-\kappa(m-1)\epsilon^2) & \text{ for arbitrarily small }\epsilon\,.
\end{align*}

Since \(E_{ij} \triangleq g(\Wb_j)\), 
\begin{align}
\label{levy:subs}
    \PP(|E_{ij}| \le \epsilon) \ge 1 - \exp(-\kappa(m-1)\epsilon^2) & \text{ for arbitrarily small }\epsilon\,.
\end{align}

\begin{align}
    \PP\left (\bigcap_{i, j \in [d], i \ne j}E_{ij} \le \epsilon\right) &= 1 - \PP\left(\bigcup_{i, j \in [d], i \ne j}E_{ij} \ge \epsilon\right) \nonumber\\
    &\ge 1 - \min \left \{1, \Sigma_{i, j \in [d], i \ne j}\PP(E_{ij} \ge \epsilon)\right \} & \text{By union bound of probability} \nonumber\\
    &\ge 1 - \min \left \{1, d^2e^{-\kappa(m-1)\epsilon^2}\right \} & \text{By \ref{levy:subs}}\\ 
    &= 1 - \min \left \{1, e^{2\log d-\kappa(m-1)\epsilon^2}\right \} \label{levy:union}
\end{align}

Hence, for \(m \gg d\), \(\bigcap_{i, j \in [d], i \ne j}E_{ij} \le \epsilon\) with \textit{high} probability.

We write the local IMA contrast \(c_{\textsc{ima}}(\fb, \sb)\) as a function of \(\Wb^{\top}\Wb\) and the column norms of the Jacobian, \(\Jb_\fb(\sb)\) so that we can bound it.

\end{proof}

\subsection{Bound on the global IMA contrast}

\paragraph{Defining non-linear functions as composition of \textit{two} affine functions}

We consider the initial scenario of partitioning the domain of the map, \(\fb: \RR^d \to \RR^m\), into two half-spaces \(\PP^{(0)}\) and \(\PP^{(1)}\) defined as follows. %
Given a non-zero vector $\wb \in \RR^d$ and \(c \in \RR\), 
\[  
\PP^{(0)} = \{\sb\in \RR^d ,\wb^{\top}\sb \le c\} \quad \text{and}\quad 
\PP^{(1)} = \RR^d\setminus \PP^{(0)} =\{\sb \in \RR^d, \wb^{\top}\sb > c\}\,.
\]
To define \(\fb\), we glue together two affine maps across the partition boundary. The affine maps are defined by the following local Jacobians,
\[\Jb_\fb(\sb) = \begin{cases}
\Jb^{(0)}, & \sb \in \PP^{(0)}\\
\Jb^{(1)}, & \sb \in \PP^{(1)}
\end{cases}.\]

As previously mentioned, we locally retain the sampling procedure defined in the previous section---i.e., locally, the columns of the Jacobian of \(\fb, \Jb_\fb(\sb)\) are sampled from a spherically invariant distribution. Let us denote $\Jb_{:,k}$ the $k$-th column of $\Jb$. We sample i.i.d. the columns of matrices \(\Jb^{(0)}\) and \(\Jb^{(0)}\in \RR^{m \times d}\) as follows
\begin{align*}
\Jb^{(0)}_1(\sb), \Jb^{(0)}_2(\sb), ..., \Jb^{(0)}_d(\sb) &\overset{i.i.d}{\sim} p_\rb\,, \\
\Jb^{(1)}_1(\sb), \Jb^{(1)}_2(\sb), ..., \Jb^{(1)}_d(\sb) &\overset{i.i.d}{\sim} p_\rb\,.
\end{align*}
where \(p_\rb\) is a spherically symmetric distribution in \(\RR^m\).

We thereby consider the resulting maps of the form, 
\begin{equation*}
\fb(\sb) = \left. \begin{cases}
\fb^{(0)}(\sb) = \Jb^{(0)}\sb\,, & \wb^{\top}\sb \le c\,, \\
\fb^{(1)}(\sb) = \Jb^{(1)}\sb + \cb_1\,, & \wb^{\top}\sb > c\,.
\end{cases} \right.
\end{equation*}

Before we derive an upper bound on the global IMA contrast, \(C_{\textsc{ima}}(\fb, p_\sb)\), we introduce lemmas to ensure that the function is well-defined and well-behaved at every point in the domain:
\begin{enumerate}
    \item First, we provide the conditions for defining a continuous map \(\fb: \RR^d \to \RR^m, m \gg d\) by composing two affine maps.
    
    \item Second, we discuss how to ensures injectivity of the composition of two affine maps.
    
    \item Finally, we define a smooth approximation to the composition of two affine maps, and show that such approximation is continuously differentiable (in addition to being continuous and injective). 
\end{enumerate}

In the following lemma, we derive the condition required for local bases across the partition boundary, \(\wb^{\top}\sb = c\), given by the Jacobians, \(\Jb^{(0)}, \Jb^{(1)} \in \RR^{m \times d}\), to be able to define a continuous map.

\begin{lemma}[Conditions on local bases for a continuous map]
\label{resampling}
Consider a map \(\fb: \RR^d \to \RR^m, m \gg d\), with full column rank $m\times d$ matrices \(\Jb^{(0)}\) and \(\Jb^{(1)}\) such that
\begin{equation*}
    \fb(\sb) = \left. \begin{cases}
    \fb^{(0)}(\sb) = \Jb^{(0)}\sb\,, & \wb^{\top}\sb \le c\,, \\
    \fb^{(1)}(\sb) = \Jb^{(1)}\sb + \cb_1\,, & \wb^{\top}\sb > c\,,
    \end{cases} \right.
\end{equation*}
where \(\wb \in \RR^d, c \in \RR, \cb_1 \in \RR^m\) are given. The local bases of \(\fb^{(0)}\) and \(\fb^{(1)}\), i.e. the columns of \(\Jb^{(0)}\) and \(\Jb^{(1)}\), are sampled from a spherically invariant distribution, the columns of \(\Jb^{(0)}\) and \(\Jb^{(1)}\) are mutually independent. \(\fb\) is continuous non-linear \textit{only if} \(\mbox{\normalfont colrank} \left [\Jb^{(0)} - \Jb^{(1)} \right ] = 1\).

Further, %
if \( \text{\normalfont colrank}\left [\Jb^{(0)} - \Jb^{(1)} \right ] = 1\), then \(\exists \wb \in \RR^d, c \in \RR, \cb_1 \in \RR^m\) such that \(\fb: \RR^d \to \RR^m, m \gg d\), 
\begin{equation*}
    \fb(\sb) = \left. \begin{cases}
    \fb^{(0)}(\sb) = \Jb^{(0)}\sb\,, & \wb^{\top}\sb \le c\,, \\
    \fb^{(1)}(\sb) = \Jb^{(1)}\sb\,, + \cb_1 & \wb^{\top}\sb > c\,,
    \end{cases} \right.
\end{equation*}
is a continuous non-linear function.

\end{lemma}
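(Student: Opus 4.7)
The plan is to reduce the statement to a purely algebraic condition by analyzing what continuity at the partition boundary $\wb^{\top}\sb = c$ forces on the matrices $\Jb^{(0)}$ and $\Jb^{(1)}$. Away from this hyperplane each affine piece is smooth in its half-space, so continuity of $\fb$ everywhere is equivalent to continuity along the hyperplane, i.e., the two pieces must agree on it.

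For the only-if direction, I would first write the matching condition $\Jb^{(0)}\sb = \Jb^{(1)}\sb + \cb_1$ for every $\sb$ with $\wb^{\top}\sb = c$. Parametrizing the hyperplane as $\sb = \sb_0 + \tb$ with $\wb^{\top}\sb_0 = c$ and $\tb$ ranging over the $(d-1)$-dimensional subspace $\wb^{\perp}$, and subtracting the equation at $\sb_0$, one obtains $(\Jb^{(0)} - \Jb^{(1)})\tb = 0$ for every $\tb \in \wb^{\perp}$. Hence $\ker(\Jb^{(0)} - \Jb^{(1)}) \supseteq \wb^{\perp}$, which has dimension $d-1$, so by rank–nullity $\mathrm{colrank}(\Jb^{(0)} - \Jb^{(1)}) \le 1$. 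Non-linearity of $\fb$ rules out $\Jb^{(0)} = \Jb^{(1)}$ (which would give a single affine map everywhere, up to the choice of $\cb_1=\mathbf{0}$), so the column rank is exactly $1$, as claimed. Additionally, the compatibility $(\Jb^{(0)} - \Jb^{(1)})\sb_0 = \cb_1$ constrains $\cb_1$.

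For the if direction, assume $\mathrm{colrank}(\Jb^{(0)} - \Jb^{(1)}) = 1$ and write a rank-one factorization $\Jb^{(0)} - \Jb^{(1)} = \vb\wb^{\top}$ with nonzero $\vb \in \RR^m$ and $\wb \in \RR^d$. I would use precisely this $\wb$ to define the hyperplane, pick any $c \in \RR$, and set $\cb_1 = c\vb$. Then for every $\sb$ with $\wb^{\top}\sb = c$ one verifies directly $\Jb^{(0)}\sb - \Jb^{(1)}\sb - \cb_1 = \vb(\wb^{\top}\sb) - c\vb = 0$, so $\fb$ is continuous across the boundary and therefore on all of $\RR^d$. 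Non-linearity follows from $\Jb^{(0)} \neq \Jb^{(1)}$, which is implied by $\vb\wb^{\top} \neq 0$.

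The main subtlety I anticipate is not computational but conceptual: namely, making explicit that the vector $\wb$ appearing in the rank-one decomposition of $\Jb^{(0)} - \Jb^{(1)}$ is forced to be (up to scalar) the same $\wb$ that defines the partition, since the kernel of $\Jb^{(0)} - \Jb^{(1)}$ must exactly coincide with the tangent subspace $\wb^{\perp}$ of the hyperplane. This tight identification is what both drives the necessity argument and tells us \emph{how} to construct the partition in the sufficiency argument. Everything else is linear algebra; the hypotheses on spherically invariant sampling and full column rank of $\Jb^{(0)}, \Jb^{(1)}$ are not needed for this lemma itself, but ensure (elsewhere in the paper) that the rank-one event occurs with probability zero under independent sampling, and will be handled later by an explicit resampling construction.
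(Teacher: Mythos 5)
Your proof is correct and lands on the same core invariant as the paper (the kernel of \(\Jb^{(0)} - \Jb^{(1)}\) must contain the \((d-1)\)-dimensional tangent subspace \(\wb^{\perp}\) of the boundary hyperplane, whence rank--nullity forces \(\mathrm{colrank} \le 1\), and nonlinearity upgrades this to equality), but your route to that invariant in the only-if direction is more elementary. The paper writes \(\fb(\sb)\) on the half-space \(\wb^{\top}\sb > c\) as a line integral from \(\mathbf{0}\) through the intersection point \(i(\sb) = \lambda(\sb)\sb\) with the boundary, differentiates to obtain \(\Jb_\fb(\sb) = \Jb^{(1)} + (\Jb^{(0)} - \Jb^{(1)})\frac{\partial i(\sb)}{\partial \sb}\), demands the second term vanish, and then identifies the span of the \(\frac{\partial i(\sb)}{\partial \sb}\) with the \((d-1)\)-dimensional tangent space; you instead impose the boundary-matching condition \(\Jb^{(0)}\sb = \Jb^{(1)}\sb + \cb_1\) for \(\wb^{\top}\sb = c\) directly, subtract it at two boundary points, and read off \((\Jb^{(0)} - \Jb^{(1)})\tb = 0\) for all \(\tb \in \wb^{\perp}\). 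This buys you a shorter argument that never constructs \(i(\sb)\) or reasons about its derivative, and it sidesteps the slight awkwardness in the paper of speaking of ``the span of all \(\frac{\partial i(\sb)}{\partial \sb}\)'' (a family of matrices) rather than the hyperplane tangent itself. The if-direction is the same in substance (the paper uses an SVD \(\Jb^{(0)} - \Jb^{(1)} = \sigma\ub\wb^{\top}\) where you use a generic rank-one factorization \(\vb\wb^{\top}\), and both set \(\cb_1\) to make the two pieces agree on the boundary). You are also right that neither the full-column-rank hypothesis nor the spherically-invariant sampling is actually used in this lemma; they are carried in the statement because they matter for the downstream resampling construction, not for the linear-algebra claim itself.
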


\begin{proof}

Consider \(\sb \in \RR^d\) such that \(\wb^{\top}\sb > c\). We define the intersection point \(i(\sb) = \lambda(\sb)\sb, \lambda(\sb) \in (0, 1)\), of the segment between \(\mathbf{0}\) and \(\sb\) with the partition boundary, \(\wb^{\top}\sb = c, c > 0\) such that \(\wb^{\top}(\lambda(\sb)\sb) = c\). Observe that the Jacobian of the map \(\fb: \RR^d \to \RR^m\) is as follows:
\begin{equation*}
    \Jb_\fb(\sb) = \left. \begin{cases}
    \Jb^{(0)}\,, & \wb^{\top}\sb \le c\,, \\
    \Jb^{(1)}\,, & \wb^{\top}\sb > c\,.
    \end{cases} \right.
\end{equation*}
If we assume $\fb$ is continuous, in addition to be by definition \textit{continuously differentiable} within each half-space, the following holds:
\begin{align*}
    \fb(\sb) &=  \int_{\mathbf{0}}^{i(\sb)}\Jb^{(0)}d\sb + \int_{i(\sb)}^{\sb}\Jb^{(1)}d\sb\\
    &= \Jb^{(0)}i(\sb) + \Jb^{(1)}(\sb - i(\sb))
\end{align*}
such that
\begin{equation}
    \label{jacobian:composite}
    \Jb_\fb(\sb) = \Jb^{(1)} + (\Jb^{(0)} - \Jb^{(1)})\frac{\partial i(\sb)}{\partial \sb}, \text{ whenever } \wb^{\top}\sb > c\,.
\end{equation}

The partition boundary in the domain of \(\fb\), is defined by \(\wb^{\top}\sb = c, c > 0\), and is thus a \((d - 1)\)-dimensinoal affine space whose associated vector space is the span (\((d - 1)\)-dimensional) of all \(\frac{\partial i(\sb)}{\partial \sb}\). To obtain the correct Jacobian, \(\Jb_\fb(\sb) = \Jb^{(1)}\), in the half-space \(\wb^{\top}\sb > c\,.\), we need
\begin{equation}
    \label{nullity}
    (\Jb^{(0)} - \Jb^{(1)})\frac{\partial i(\sb)}{\partial \sb} = 0
\end{equation}
Thus, to obtain a \textit{continuous} map \(\fb\), \(\mbox{dim}\left(\mbox{Null} \left [\Jb^{(0)} - \Jb^{(1)} \right ]\right) \geq (d - 1)\).
Moreover, to have a non-linear function, we need the two Jacobian values to be different, such that we require 
\(\mbox{dim}\left(\mbox{Null} \left [\Jb^{(0)} - \Jb^{(1)} \right ]\right) = (d - 1)\). 

By the Rank-Nullity Theorem~\citep{enwiki:1084122554}, 
\begin{equation*}
    \text{colrank}\left ( [ \Jb^{(0)} - \Jb^{(1)} ]\right ) + \text{dim}\left ( \text{Null} [ \Jb^{(0)} - \Jb^{(1)} ]\right) 
    = \# \text{cols. of }[ \Jb^{(0)} - \Jb^{(1)} ] %
\end{equation*}
Leading to 
    \(\text{colrank}\left ( [ \Jb^{(0)} - \Jb^{(1)} ]\right ) = d - (d - 1)
    = 1  %
    \)
Hence, \(\fb\) is continuous non-linear \textit{only if} colrank\( \left [\Jb^{(0)} - \Jb^{(1)} \right ] = 1\).

To show the reverse direction, consider the existence of \(\Jb^{(0)}, \Jb^{(1)} \in \RR^{m \times d}\) s.t. colrank\( \left [\Jb^{(0)} - \Jb^{(1)} \right ] = 1\). Consider the singular value decomposition of \(\left [\Jb^{(0)} - \Jb^{(1)} \right ] = \sigma \ub \wb^{\top}, \ub \in \RR^m, \wb \in \RR^d\). To construct a function, \(\fb: \RR^d \to \RR^m\), we start by partitioning the domain into two half space by a hyperplane normal to \(\wb\), say \(\wb^{\top}\sb = c\) for \(c \in \RR\), leading to the half-spaces definition
\[  
\PP^{(0)} = \{\sb\in \RR^d ,\wb^{\top}\sb \le c\} \quad \text{and}\quad 
\PP^{(1)} = \RR^d\setminus \PP^{(0)} =\{\sb \in \RR^d, \wb^{\top}\sb > c\}\,.
\]

For the partition boundary, \(\KK \coloneqq \left \{ \sb: \wb^{\top}\sb = c\right \}\), \(\left (\Jb^{(0)} - \Jb^{(1)} \right )\sb = \sigma \ub \wb^{\top}\sb = c\sigma\ub\). \(\cb_1 \coloneqq c\sigma\ub \) is a constant vector in \(\RR^m\) such that \(\forall \sb \in \KK, \Jb^{(0)}\sb = \Jb^{(1)}\sb + \cb_1\). \(\fb: \RR^d \to \RR^m, m \gg d\), 
\begin{equation*}
    \fb(\sb) = \left. \begin{cases}
    \fb^{(0)}(\sb) = \Jb^{(0)}\sb\,, & \wb^{\top}\sb \le c\,, \\
    \fb^{(1)}(\sb) = \Jb^{(1)}\sb\,, + \cb_1 & \wb^{\top}\sb > c\,,
    \end{cases} \right.
\end{equation*}
is a continuous function since, for the exhaustive cases for \(\sb \in \RR^d\):

\begin{enumerate}
    \item \(\wb^{\top}\sb = c\)
    
    \(\fb(\sb) = \Jb^{(0)}\sb\), and in limit also equal to \(\Jb^{(1)}\sb + \cb_1\). Since we have shown that the limit of the function is equal to the value assumed by the function, \(\fb: \RR^d \to \RR^m\) is continuous \(\forall \sb: \wb^{\top}\sb = c\) (Theorem 4.6, \citep{rudin1964principles}).

    \item \(\wb^{\top}\sb < c\)
    
    \(\fb(\sb) = \Jb^{(0)}\sb\) is affine in this region, and hence continuous.
    
    \item \(\wb^{\top}\sb > c\)
    
    \(\fb(\sb) = \Jb^{(1)}\sb + \cb_1\) is affine in this region, and hence continuous.
\end{enumerate}

Hence, if \(\exists \Jb^{(0)}, \Jb^{(1)} \in \RR^{m \times d}\) s.t. colrank\( \left [\Jb^{(0)} - \Jb^{(1)} \right ] = 1\), then \(\exists \wb \in \RR^d, c \in \RR, \cb_1 \in \RR^m\) s. t. \(\fb: \RR^d \to \RR^m, m \gg d\), 
\begin{equation*}
    \fb(\sb) = \left. \begin{cases}
    \fb^{(0)}(\sb) = \Jb^{(0)}\sb & \wb^{\top}\sb \le c \\
    \fb^{(1)}(\sb) = \Jb^{(1)}\sb + \cb_1 & \wb^{\top}\sb > c
    \end{cases} \right \}
\end{equation*}
is a continuous function.

\end{proof}
This result leads to one possible way to sample nonlinear mixings  approximately satisfying the global IMA principle.
\clearpage

\begin{mdframed}

\underline{\textbf{Sampling procedure for piecewise affine  continuous maps}}

\begin{observation}[Partition boundaries of continuous maps]
\label{obs:part:boundary}

For functions \(\fb: \RR^d \to \RR^m, m \gg d\), 
\begin{equation*}
    \fb(\sb) = \left. \begin{cases}
    \fb^{(0)}(\sb) = \Jb^{(0)}\sb\,, & \wb^{\top}\sb \le c\,, \\
    \fb^{(1)}(\sb) = \Jb^{(1)}\sb + \cb_1\,, & \wb^{\top}\sb > c\,,
    \end{cases} \right.,
\end{equation*} where \(\Jb^{(0)}, \Jb^{(1)} \in \RR^{m \times d}\), one way to achieve \(\text{\normalfont colrank} \left [\Jb^{(0)} - \Jb^{(1)} \right ] = 1\) is the following:
\begin{enumerate}
    \label{sampling}
    \item Sample the columns of \(\Jb^{(0)}\) independently from the mentioned spherically symmetric distribution \(p_\rb\), \(\Jb^{(0)}_{1}, \Jb^{(0)}_{2}, ..., \Jb^{(0)}_{d} \overset{i.i.d}{\sim} p_\rb\).
    
    \item To construct \(\Jb^{(1)}\), retain any \((d - 1)\) columns of \(\Jb^{(0)}\) and sample the remaining column, \(\Jb^{(1)}_{k} \sim p_\rb\).
\end{enumerate}

Notice that the sampling procedure described above is locally equivalent to the one described in Theorem~\ref{local:ima:thm}. We deliberately retain the same sampling procedure so that we can derive a similar upper bound to the IMA contrast in the case of non-affine functions defined by joining two contiguous affine maps (Definition~\ref{map:2}).

\end{observation}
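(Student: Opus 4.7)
My plan is to verify directly that the two-step construction forces $\Jb^{(0)} - \Jb^{(1)}$ to have column rank exactly $1$ with probability one, so that by Lemma~\ref{resampling} the pair $(\Jb^{(0)}, \Jb^{(1)})$ admits a gluing into a continuous nonlinear piecewise-affine map. The argument is essentially a one-line linear algebra calculation combined with a standard measure-theoretic genericity remark.

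First I would observe that, by step~2 of the procedure, $\Jb^{(1)}$ agrees with $\Jb^{(0)}$ in $d-1$ columns and differs only in the $k$-th. Hence $\Jb^{(0)} - \Jb^{(1)}$ has $d-1$ columns equal to the zero vector of $\RR^m$ and a single possibly nonzero column $\Jb^{(0)}_{k} - \Jb^{(1)}_{k}$. This immediately yields $\text{colrank}\left[\Jb^{(0)} - \Jb^{(1)}\right] \le 1$, and reduces the task to ruling out the degenerate event $\Jb^{(0)}_{k} = \Jb^{(1)}_{k}$.

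To upgrade the bound to equality, I would invoke the same regularity of $p_\rb$ used in Lemma~\ref{linind:sphind}: $p_\rb$ is a finite spherically symmetric distribution that is absolutely continuous with respect to the Lebesgue measure on $\RR^m$, so singletons have measure zero. By Fubini applied to the product measure $p_\rb \otimes p_\rb$, the event $\{\Jb^{(0)}_{k} = \Jb^{(1)}_{k}\}$ has probability $\int p_\rb(\{\vb\})\,dp_\rb(\vb) = 0$, so the remaining column is nonzero almost surely and $\text{colrank}\left[\Jb^{(0)} - \Jb^{(1)}\right] = 1$ with probability one. Combined with the reverse direction of Lemma~\ref{resampling}, this exhibits admissible weights $\wb \in \RR^d$, $c \in \RR$, $\cb_1 \in \RR^m$ realizing the desired continuous nonlinear two-piece affine $\fb$.

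There is essentially no hard step here; the only subtlety worth flagging is making explicit that $p_\rb$ is atomless, which is implicit in the setup of Section~\ref{subsec:local} and already used tacitly in Lemma~\ref{linind:sphind}. The observation is therefore more of a consistency check that the local sampling scheme used in the linear case transports, up to a single-column resampling, to the two-piece construction needed downstream for Theorem~\ref{global:ima:thm_grid}.
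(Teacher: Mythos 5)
Your argument is correct and coincides with the paper's (largely implicit) reasoning: the paper states the single-column-resampling recipe and moves on without spelling out why \(\text{colrank}[\Jb^{(0)}-\Jb^{(1)}]=1\), whereas you supply the one-line verification that \(d-1\) columns of the difference vanish identically, forcing \(\text{colrank}\le 1\), and then use atomlessness of \(p_\rb\) (which Definition~\ref{sphdist} guarantees, since \(p_\rb\) is a density on \(\RR^m\)) to rule out the measure-zero event \(\Jb^{(0)}_k=\Jb^{(1)}_k\), giving equality almost surely. This is exactly the detail the paper leaves tacit and is consistent with how it already invokes atomlessness in Lemma~\ref{linind:sphind}; the invocation of Lemma~\ref{resampling} to conclude that admissible \(\wb, c, \cb_1\) exist is likewise the intended closing step.
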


We will now show that a consequence of this sampling procedure is that the boundary of the partition of the domain is constrained to be axis-aligned. The alignment of the partition boundary \(\wb^{\top}\sb = c\), given by \(\wb\), is determined by the choice of \(\Jb^{(0)}\) and \(\Jb^{(1)}\).

Without loss of generality, consider that the last column is newly sampled~(\ref{sampling}) in \(\Jb^{(1)}\). Consider \(\sb \in \RR^d\) such that \(\wb^{\top}\sb > c\). By ~(\ref{nullity}), for continuous \(\fb\),  \((\Jb^{(0)} - \Jb^{(1)})\frac{\partial i(\sb)}{\partial \sb} = 0\). We thus have the following constraints,
\vspace*{-.1cm}
\begin{equation}\vspace*{-.1cm}
    \label{constraints:axis:aligned}
    \wb^{\top}\left ( \frac{\partial i(\sb)}{\partial \sb} \right ) = \mathbf{0}; \; \left [ \frac{\partial i(\sb)}{\partial \sb}\right]_{d.} = \mathbf{0}
\end{equation}
(\ref{constraints:axis:aligned}) is achieved \textit{iff.} \(\wb\) defines an axis-aligned \((d - 1)\)-dimensional subspace normal to the canonical basis vector associated with the index of the column that changes from \(\Jb^{(0)}\) to \(\Jb^{(1)}\) (here, the last column), i.e.
\vspace*{-.3cm}
\begin{equation*}\vspace*{-.3cm}
    \wb = \alpha[0, 0, ..., 1]^{\top}
\end{equation*}
Thus, the aforementioned sampling procedure of local bases~(\ref{sampling}) (i.e. the columns of \(\Jb^{(0)}\) and \(\Jb^{(1)}\)) to ensure continuity of the resultant manifold, leads to a constraint on the partition of the domain of \(\fb: \RR^d \to \RR^m\), i.e. the partition can only be axis-aligned. 
\end{mdframed}

Later in this chapter (Section~\ref{grid:section}), when will extend this construction to more expressive maps where the partition of the input domain is defined as a grid.
We will show that axis-alignment of the partition still allows some degree of expressivity for the resulting class of maps. Those can approximate a large family of Riemannian manifolds embedded in \(\RR^m\)  isomorphic to the \(d\)-dimensional Euclidean space. 

Hence, consider the following definition of maps \(\fb: \RR^d \to \RR^m, m \gg d\) by composing two affine maps, incorporating the axis alignement contraint, 
\begin{definition}[Maps defined by composing two affine maps]
\label{map:2}
Consider the columns of matrices \(\Jb^{(0)}\) and \(\Jb^{(1)}\) are sampled by the following procedure:
\begin{enumerate}
    \item Sample the $d$ columns of \(\Jb^{(0)}\) independently from the mentioned spherically symmetric distributio, \(p_\rb\), \(\Jb^{(0)}_{:,1}, \Jb^{(0)}_{:,2}, ..., \Jb^{(0)}_{:,d} \overset{i.i.d}{\sim} p_\rb\).
    
    \item To construct \(\Jb^{(1)}\), retain any \((d - 1)\) columns of \(\Jb^{(0)}\) and sample the remaining $k$-th column, \(\Jb^{(1)}_{:,k} \sim p_\rb\). 
\end{enumerate}
Consider the map \(\fb: \RR^d \to \RR^m\), 
\begin{equation*}
    \fb(\sb) = \left. \begin{cases}
    \fb^{(0)}(\sb) = \Jb^{(0)}\sb\,, & s_k \le c\,, \\
    \fb^{(1)}(\sb) = \Jb^{(1)}\sb + \cb_1\,, & s_k > c\,,
    \end{cases} \right.
\end{equation*}
where \( c \in \RR\) is given and $\cb_1\in \RR^m$ is set by continuity at the boundary to \(\cb_1=c\left(\Jb^{(0)}_{:,k}-\Jb^{(1)}_{:,k}\right)\). 
\end{definition}
Note that since the partition boundary for the change of Jacobian is axis-aligned, the map \(\fb: \RR^d \to \RR^m\) can be alternatively written as a sum of coordinate-wise functions, \(\fb(\sb) = \sum^d_{i = 1}\fb_i(s_i)\) for \(\sb = \{s_1, s_2, ..., s_d\}\). Without loss of generality, we describe the case \(k=d\). The coordinate-wise functions \(\fb_i: \RR \to \RR^m\) are then defined as follows: 
\begin{enumerate}
    \item \(\fb_i(s_i) = \Jb^{(0)}_{:,i}s_i = \Jb^{(1)}_{:,i}s_i \forall i \in \{1, 2, ..., (d-1)\}\)
    \item \(\fb_d(s_d) = \begin{cases}
                            \Jb^{(0)}_{:,d}s_d & s_d \le t_d \\
                            \Jb^{(1)}_{:,d}(s_d - t_d) + \Jb^{(0)}_{:,d}t_d & s_d > t_d
                        \end{cases}\)
\end{enumerate}
for \(t_d \in \RR\) determined by \(\Jb^{(0)}, \Jb^{(1)} \in \RR^{m \times d}\).

Next, we show that the functions above defined ( Definition~\ref{map:2}) are injective. The idea is that  injectivity of coordinate-wise functions of \(\fb\), \(\fb_i: \RR \to \RR^m\) that are in direct sum entails of \(\fb(\sb) = \sum^d_{i=1}\fb_i(s_i)\). %
We first show that the subspaces spanned by the images of \(\fb_i: \RR \to \RR^d\) are linearly independent, and thereby are in (internal) direct sum (Definition ~\ref{direct:sum}) with respect to the image of \(\fb: \RR^d \to \RR^m\). Then, we show that the coordinate-wise functions, \(\fb_i: \RR \to \RR^m\) are injective, and conclude that \(\fb: \RR^d \to \RR^m\) is injective.

\begin{definition}[Internal Direct Sum of Subspaces, Chapter 1~\citep{roman2005advanced}]

\label{direct:sum}

Let \(\VV\) be a vector space. We say that \(\VV\) is an (internal) direct sum of the family \(\Fcal = \{\Sbb_i | i \in K \}\) of subspaces of \(\VV\) if every vector \(\vb \in \VV\) can be written in a unique way (except for order) as a finite sum of vectors from the subspaces in \(\Fcal\), that is, if for all \(\vb \in \VV\), 
\[\vb = \ub_1 + \ub_2 +... + \ub_n\]
for \(\ub_i \in \Sbb_i\) and furthermore, if 
\[\vb = \wb_1 + \wb_2 + ... + \wb_n\]
where \(\wb_i \in \Sbb_i\), then \(m = n\) and (after reindexing if necessary) \(\wb_i = \ub_i\) for all \(i = 1, 2, ..., n.\) If \(\Fcal = \{\Sbb_1, \Sbb_2, ..., \Sbb_n\} \) is a finite family, we write \(\VV = \Sbb_1 \bigoplus \Sbb_2 \bigoplus ... \Sbb_n\).

\end{definition}

\begin{lemma}

\label{direct:sum:subspace}
Consider \(\Jb^{(0)}, \Jb^{(1)} \in \RR^{m \times d}\) as sampled in Definition (\ref{map:2}).
With probability $1$, the vector space \(\VV = \text{span}\{\Jb^{(0)}_{:,1} = \Jb^{(1)}_{:,1}, \Jb^{(0)}_{:,2} = \Jb^{(1)}_{:,2}, ..., \Jb^{(0)}_{:,(d-1)} = \Jb^{(1)}_{:,(d-1)}, \Jb^{(0)}_{:,d}, \Jb^{(1)}_{:,d}\}\) is the direct sum of the family \(\Fcal = \{\Sbb_1= \text{span}\{\Jb^{(0)}_{:,1} = \Jb^{(1)}_{:,1}\}, \Sbb_2 = \text{span}\{\Jb^{(0)}_{:,2} = \Jb^{(1)}_{:,2}\}, ..., \Sbb_{d-1}= \text{span}\{\Jb^{(0)}_{:,d-1} = \Jb^{(1)}_{:,d-1}\} , \Sbb_d = \text{span}\{\Jb^{(0)}_{:,d}, \Jb^{(1)}_{:,d}\}\}\).

\end{lemma}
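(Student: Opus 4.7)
The plan is to reduce the claim to linear independence of $d+1$ i.i.d.\ samples from $p_\rb$, and then to close the argument by a standard dimension count for sums of subspaces.

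First, I would unpack the sampling procedure in \cref{map:2}: the vectors $\Jb^{(0)}_{:,1}, \ldots, \Jb^{(0)}_{:,d}$ are i.i.d.\ samples from the spherically symmetric distribution $p_\rb$ on $\RR^m$, and $\Jb^{(1)}_{:,d}$ is an independent further sample from the same distribution (since only the $d$-th column is resampled, while the other $d-1$ columns are retained). Collecting these, I obtain $d+1$ i.i.d.\ samples $\Jb^{(0)}_{:,1}, \ldots, \Jb^{(0)}_{:,d}, \Jb^{(1)}_{:,d}$ from $p_\rb$. Since $m \gg d$, in particular $m \ge d+1$, \cref{linind:sphind} (applied with $d+1$ samples rather than $d$) gives that with probability one these $d+1$ vectors are nonzero and linearly independent. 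This is the only probabilistic ingredient of the proof.

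Next, I would compute dimensions. For $i = 1, \ldots, d-1$, the subspace $\Sbb_i = \text{span}\{\Jb^{(0)}_{:,i}\}$ has dimension $1$ because $\Jb^{(0)}_{:,i}$ is nonzero. The subspace $\Sbb_d = \text{span}\{\Jb^{(0)}_{:,d}, \Jb^{(1)}_{:,d}\}$ has dimension $2$, because the two spanning vectors are (almost surely) linearly independent by the previous step. Hence
\begin{equation*}
\sum_{i=1}^d \dim(\Sbb_i) = \underbrace{1 + 1 + \cdots + 1}_{d-1 \text{ times}} + 2 = d+1.
\end{equation*}
On the other hand, $\VV$ is spanned by exactly the same $d+1$ linearly independent vectors, so $\dim(\VV) = d+1$.

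Finally, I would invoke the standard characterization of internal direct sums: for a finite family of subspaces $\Sbb_1, \ldots, \Sbb_d$ of a vector space, the sum $\Sbb_1 + \cdots + \Sbb_d$ equals the internal direct sum $\Sbb_1 \oplus \cdots \oplus \Sbb_d$ (in the sense of \cref{direct:sum}) if and only if $\dim\bigl(\sum_{i=1}^d \Sbb_i\bigr) = \sum_{i=1}^d \dim(\Sbb_i)$. Both sides equal $d+1$, so the sum is direct and $\VV = \Sbb_1 \oplus \cdots \oplus \Sbb_d$ with probability one, as claimed. There is no substantial obstacle here: the only genuinely non-trivial ingredient is \cref{linind:sphind}, which is already available, and the rest is bookkeeping. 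The one step requiring care is correctly identifying that the relevant spherical sampling occurs for $d+1$ (not $d$) vectors, which is legitimate under the standing assumption $m \gg d$.
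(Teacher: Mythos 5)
Your proposal is correct, and it shares the one genuinely non-trivial ingredient with the paper's proof: both reduce the claim to the statement that the $d+1$ vectors $\Jb^{(0)}_{:,1}, \ldots, \Jb^{(0)}_{:,d}, \Jb^{(1)}_{:,d}$ are linearly independent with probability one via \cref{linind:sphind}. Where you differ is the final step. The paper's proof works directly with \cref{direct:sum}: it takes an arbitrary $\vb \in \VV$ with two purported decompositions $\vb = \sum \ub_i = \sum \wb_i$, expands each $\ub_i, \wb_i$ in terms of the spanning vectors with explicit coefficients, subtracts, and uses linear independence to conclude all coefficient differences vanish, hence $\ub_i = \wb_i$. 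You instead invoke the dimension criterion: a sum of finite-dimensional subspaces is direct iff $\dim\bigl(\sum_i \Sbb_i\bigr) = \sum_i \dim(\Sbb_i)$, and you verify both sides equal $d+1$. The two wrap-ups are logically equivalent, and both are elementary; yours is arguably more compact and avoids coefficient bookkeeping, while the paper's stays literal to the uniqueness-of-decomposition definition it is citing. One small point in your favor: you explicitly flag that \cref{linind:sphind} is being invoked for $d+1$ samples rather than the $d$ of its statement and that this is licensed by $m \gg d$; the paper does the same implicitly but does not call out the adjustment.
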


\begin{proof}

From Lemma ~\ref{linind:sphind}, for the scenario \(m \gg d\), the vectors \(\{\Jb^{(0)}_{:,1} = \Jb^{(1)}_{:,1}, \Jb^{(0)}_{:,2} = \Jb^{(1)}_{:,2}, ..., \Jb^{(0)}_{:,(d-1)} = \Jb^{(1)}_{:,(d-1)}, \Jb^{(0)}_{:,d}, \Jb^{(1)}_{:,d}\}\) are non-zero and linearly independent with probability 1. 

Consider \(\vb \in \VV, \ub_1, \wb_1 \in \Sbb_1, \ub_2, \wb_2 \in \Sbb_2, ..., \ub_d, \wb_d \in \Sbb_d\) such that 
\begin{equation*}
    \vb = \ub_1 + \ub_2 + ... + \ub_d, \vb = \wb_1 + \wb_2 + ... + \wb_d
\end{equation*}

By definition ~\ref{direct:sum}, to show \( \VV = \Sbb_1 \bigoplus \Sbb_2 \bigoplus ... \Sbb_d\), we need to show \(\ub_1 = \wb_1, \ub_2 = \wb_2, ..., \ub_d = \wb_d\). 

Let 
\begin{itemize}
    \item \(u_1 = c_1\Jb^{(0)}_{:,1}, w_1 = c'_1\Jb^{(0)}_{:,1}\)
    \item \(u_2 = c_2\Jb^{(0)}_{:,2}, w_2 = c'_2\Jb^{(0)}_{:,2}\) \\
    \(\vdots\)
    \item \(u_{d-1} = c_{d-1}\Jb^{(0)}_{:,d-1}, w_{d-1} = c'_{d-1}\Jb^{(0)}_{:,d-1}\)
    \item \(u_d = c^{(0)}_d\Jb^{(0)}_{:,d} + c^{(1)}_d\Jb^{(1)}_{:,d}, v_d = c^{(0)'}_d\Jb^{(0)}_{:,d} + c^{(1)'}_d\Jb^{(1)}_{:,d}\)
\end{itemize}

\begin{align*}
    &\vb = \ub_1 + \ub_2 + ... + \ub_d = \wb_1 + \wb_2 + ... + \wb_d\\
    &c_1\Jb^{(0)}_{:,1} + c_2\Jb^{(0)}_{:,2} + ...  + c_{d-1}\Jb^{(0)}_{:,d-1} + c^{(0)}_d\Jb^{(0)}_{:,d} + c^{(1)}_d\Jb^{(1)}_{:,d} = \\
    &c'_1\Jb^{(0)}_{:,1} + c'_2\Jb^{(0)}_{:,2} + ... + c_{d-1}\Jb^{(0)}_{:,d-1} + c^{(0)'}_d\Jb^{(0)}_{:,d} + c^{(1)'}_d\Jb^{(1)}_{:,d}\\
    &(c_1 - c'_1)\Jb^{(0)}_{:,1} + (c_2 - c'_2)\Jb^{(0)}_{:,2} + ... + (c_{d-1} - c'_{d-1})\Jb^{(0)}_{:,d-1} \\
    &+ (c^{(0)}_d - c^{(0)'}_d)\Jb^{(0)}_{:,d} + (c^{(1)}_d - c^{(1)'}_d)\Jb^{(1)}_{:,d} = \mathbf{0}
\end{align*}

Since \(\{\Jb^{(0)}_{:,1}, \Jb^{(0)}_{:,2}, ..., \Jb^{(0)}_{:,d-1}, \Jb^{(0)}_{:,d}, \Jb^{(1)}_{:,d}\}\) are nonzero and linearly independent with probability 1 (Lemma ~\ref{linind:sphind}),

\begin{align*}
    (c_1 - c'_1) = (c_2 - c'_2) = ... = (c_{d-1} - c'_{d-1}) = (c^{(0)}_d - c^{(0)'}_d) = (c^{(1)}_d - c^{(1)'}_d) = 0
\end{align*}

Hence, it follows that \(\ub_1 = \wb_1, \ub_2 = \wb_2, ..., \ub_d = \wb_d\) and \( \VV = \Sbb_1 \bigoplus \Sbb_2 \bigoplus ... \Sbb_d\).

\end{proof}

\begin{lemma}

\label{inj:coord:to:full}

Consider maps \(\fb: \RR^d \to \RR^m, m  \gg d\), where \(\fb\) can be written as sum of coordinate-wise functions, \(\fb(\sb) = \sum^d_{i = 1}\fb_i(s_i)\) for \(\sb = \{s_1, s_2, ..., s_d\}\). We define \(\VV = \text{span}(\text{Im}(\fb)), \Sbb_1 = \text{span}(\text{Im}(\fb_1)), \Sbb_2 = \text{span}(\text{Im}(\fb_2)), ..., \Sbb_d = \text{span}(\text{Im}(\fb_d))\), where \(\text{Im}(\fb), \text{Im}(\fb_i)\) denote the images of the functions \(\fb: \RR^d \to \RR^m, \fb_i: \RR \to \RR^m \quad \forall i \in [d]\). If \(\VV = \Sbb_1 \bigoplus \Sbb_2 \bigoplus ... \bigoplus \Sbb_d\), the injectivity of the coordinate-wise functions \(\fb_i: \RR \to \RR^m\) implies the injectivity of \(\fb: \RR^d \to \RR^m\).

\end{lemma}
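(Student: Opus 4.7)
The plan is to prove injectivity directly from the definition of direct sum (Definition~\ref{direct:sum}) combined with the injectivity of each coordinate-wise function. The key observation is that $\fb(\sb) = \sum_{i=1}^d \fb_i(s_i)$ expresses any output of $\fb$ as a sum of components, one from each $\Sbb_i = \mathrm{span}(\mathrm{Im}(\fb_i))$. Since $\VV = \Sbb_1 \bigoplus \cdots \bigoplus \Sbb_d$, this decomposition is unique; then applying univariate injectivity coordinate-wise closes the argument.

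More concretely, I would suppose $\fb(\sb) = \fb(\sb')$ for some $\sb = (s_1,\ldots,s_d)$ and $\sb' = (s'_1,\ldots,s'_d)$, and write
\begin{equation*}
\fb_1(s_1) + \fb_2(s_2) + \cdots + \fb_d(s_d) \;=\; \fb_1(s'_1) + \fb_2(s'_2) + \cdots + \fb_d(s'_d)\,.
\end{equation*}
Since $\fb_i(s_i), \fb_i(s'_i) \in \mathrm{Im}(\fb_i) \subseteq \Sbb_i$, both sides constitute valid decompositions of the same vector $\vb \in \VV$ as a sum of elements, one taken from each subspace in the family $\{\Sbb_1, \ldots, \Sbb_d\}$. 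The direct-sum hypothesis $\VV = \Sbb_1 \bigoplus \cdots \bigoplus \Sbb_d$, together with the uniqueness clause in Definition~\ref{direct:sum}, then forces $\fb_i(s_i) = \fb_i(s'_i)$ for each $i \in [d]$. Applying the injectivity of each $\fb_i: \RR \to \RR^m$ yields $s_i = s'_i$ for every $i$, i.e.\ $\sb = \sb'$, which is the desired conclusion.

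This proof is essentially a book-keeping argument: there is no quantitative estimate, no probabilistic concentration, and no geometric construction required — everything is handed to us by the direct-sum structure already established in Lemma~\ref{direct:sum:subspace}. The only point that warrants a brief sentence of care is verifying the membership $\fb_i(s_i) \in \Sbb_i$, which is immediate because $\Sbb_i$ is defined as the span of $\mathrm{Im}(\fb_i)$ and trivially contains $\mathrm{Im}(\fb_i)$ itself. Thus I do not anticipate any genuine obstacle; the statement is a clean consequence of the direct-sum definition, and the proof should fit in a few lines. The role of this lemma in the broader development is then to reduce the injectivity of the multivariate sampled map $\fb$ to two previously established ingredients: the direct-sum property (Lemma~\ref{direct:sum:subspace}) and the (easily verified) injectivity of each piecewise-affine coordinate-wise $\fb_i$.
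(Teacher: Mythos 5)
Your proof is correct and follows essentially the same route as the paper's: both arguments hinge on the uniqueness of the decomposition furnished by the direct-sum hypothesis to deduce $\fb_i(s_i) = \fb_i(s'_i)$ componentwise, and then invoke univariate injectivity. The only cosmetic difference is that you argue directly while the paper wraps the same implication in a proof by contradiction.
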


\begin{proof}

To show injectivity of \(\fb: \RR^d \to \RR^m\), we need to show the following:
\begin{equation}
    \label{injectivity}
    \forall \sb_1, \sb_2 \in \RR^d: \fb(\sb_1) = \fb(\sb_2) \implies \sb_1 = \sb_2
\end{equation}

We show~(\ref{injectivity}) by contradiction. Let
\begin{equation}
    \label{contradiction}
    \exists \sb^{(1)} \ne \sb^{(2)} \in \RR^d \; \text{s.t.} \fb(\sb^{(1)}) = \fb(\sb^{(2)}) 
\end{equation}

Observe that \(\text{Im}(\fb) \subseteq \VV, \text{Im}(\fb_i) \subseteq \Sbb_i \forall i \in [d]\). By Lemma ~\ref{direct:sum:subspace}, the vector space \(\VV\) is the direct sum of the subspaces \(\Sbb_i, \forall i \in [d]\), i.e. \(\VV = \Sbb_1 \bigoplus \Sbb_2 \bigoplus ... \bigoplus \Sbb_d\).

Hence by definition of direct sum (Definition ~\ref{direct:sum}), it follows that 

\begin{align*}
    \fb(\sb^{(1)}) = \fb(\sb^{(1)}) \implies \fb_1(s^{(1)}_1) = \fb_1(s^{(2)}_1), \fb_1(s^{(1)}_2) = \fb_1(s^{(2)}_2), ..., \fb_1(s^{(1)}_d) = \fb_1(s^{(2)}_d)
\end{align*}

By injectivity of the coordinate-wise functions,

\begin{align*}
    \fb_1(s^{(1)}_1) = \fb_1(s^{(2)}_1) & \implies s^{(1)}_1 = s^{(2)}_1\\
    \fb_1(s^{(1)}_2) = \fb_1(s^{(2)}_2) & \implies s^{(1)}_2 = s^{(2)}_2\\
    \vdots
    \fb_1(s^{(1)}_d) = \fb_1(s^{(2)}_d) & \implies s^{(1)}_d = s^{(2)}_d\\
    \implies \sb^{(1)} = \sb^{(2)}
\end{align*}

We arrive at a contradiction to ~(\ref{contradiction}), hence (~\ref{injectivity}) holds. Injectivity of the coordinate-wise functions, \(\fb_1: \RR \to \RR^m,\fb_2: \RR \to \RR^m, ..., \fb_d: \RR \to \RR^m \) implies injectivity of \(\fb: \RR^d \to \RR^m\) as defined in ~\ref{map:2}.

\end{proof}

\begin{lemma}[Injectivity of maps defined as a composition of two affine spaces]

\label{inj:comp:2}

Maps \(\fb: \RR^d \to \RR^m\) defined in ~\ref{map:2} are continuous and injective with probability one.

\end{lemma}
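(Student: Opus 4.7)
The plan is to treat continuity and injectivity separately, leveraging the infrastructure already built in the preceding lemmas. For continuity, I would appeal directly to Lemma~\ref{resampling}. By construction in Definition~\ref{map:2}, the matrices $\Jb^{(0)}$ and $\Jb^{(1)}$ differ only in the $k$-th column, so $\Jb^{(0)}-\Jb^{(1)}$ has exactly one non-zero column almost surely (the difference of two independent samples from the continuous spherically symmetric distribution $p_\rb$ is non-zero with probability one), giving $\mathrm{colrank}[\Jb^{(0)}-\Jb^{(1)}]=1$. The offset $\cb_1=c(\Jb^{(0)}_{:,k}-\Jb^{(1)}_{:,k})$ prescribed in the definition is precisely the one that makes the two affine pieces agree on the hyperplane $s_k=c$, so the hypotheses of Lemma~\ref{resampling} are met and $\fb$ is continuous.

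For injectivity, I would reduce the claim to coordinate-wise injectivity via Lemma~\ref{inj:coord:to:full}. Taking $k=d$ without loss of generality, I would first write $\fb(\sb)=\sum_{i=1}^d \fb_i(s_i)$ with $\fb_i(s_i)=\Jb^{(0)}_{:,i}s_i$ for $i<d$ and $\fb_d$ the piecewise-linear one-variable map with a kink at $t_d$. Lemma~\ref{direct:sum:subspace} already guarantees that the subspaces $\Sbb_i=\mathrm{span}(\mathrm{Im}\,\fb_i)$ are in direct sum almost surely, so by Lemma~\ref{inj:coord:to:full} it suffices to verify that each $\fb_i$ is injective.

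The cases $i<d$ are immediate: $\fb_i$ is linear with non-zero direction $\Jb^{(0)}_{:,i}$ (non-zero almost surely by Lemma~\ref{linind:sphind}). The work concentrates on $\fb_d$, and this is where I expect the only real, though modest, obstacle to lie. Within each half-line $s_d\leq t_d$ and $s_d>t_d$ the map is linear in a non-zero direction, hence injective on each piece. To rule out collisions across the kink, I would suppose $\Jb^{(0)}_{:,d}s=\Jb^{(0)}_{:,d}t_d+\Jb^{(1)}_{:,d}(s'-t_d)$ for some $s\leq t_d<s'$ and rearrange to obtain $(s-t_d)\Jb^{(0)}_{:,d}=(s'-t_d)\Jb^{(1)}_{:,d}$. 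Since $\Jb^{(0)}_{:,d}$ and $\Jb^{(1)}_{:,d}$ are independent draws from a continuous spherically symmetric distribution on $\RR^m$ with $m\gg d$, Lemma~\ref{linind:sphind} implies they are linearly independent with probability one, forcing both scalars to vanish and contradicting $s'>t_d$. Hence $\fb_d$ is globally injective, and Lemma~\ref{inj:coord:to:full} then delivers the injectivity of $\fb$, completing the proof on an event of probability one.
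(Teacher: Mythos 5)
Your proposal follows the paper's proof essentially step for step: continuity via Lemma~\ref{resampling} (noting that the two Jacobians differ in exactly one column, so $\mathrm{colrank}[\Jb^{(0)}-\Jb^{(1)}]=1$ almost surely), a coordinate-wise decomposition $\fb(\sb)=\sum_i \fb_i(s_i)$, injectivity of the linear pieces for $i<d$, a linear-independence argument (via Lemma~\ref{linind:sphind}) to rule out cross-kink collisions for $\fb_d$, and finally Lemma~\ref{inj:coord:to:full} with the direct-sum property from Lemma~\ref{direct:sum:subspace} to lift coordinate-wise injectivity to injectivity of $\fb$. If anything, you are slightly more explicit than the paper in invoking Lemma~\ref{direct:sum:subspace} to discharge the direct-sum hypothesis of Lemma~\ref{inj:coord:to:full}, but the argument is identical in substance.
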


\begin{proof}

\(\fb: \RR^d \to \RR^m\) is continuous by lemma~\ref{resampling}.

By definition (\ref{map:2}), \(\fb(\sb) = \sum^d_{i = 1}f_i(s_i)\) for \(\sb = \{s_1, s_2, ..., s_d\}\). Consider the coordinate-wise functions \(\fb_i: \RR \to \RR^m\):

\begin{enumerate}
    \item \(\fb_i(s_i) = \Jb^{(0)}_{:,i}s_i = \Jb^{(1)}_{:,i}s_i \forall i \in \{1, 2, ..., (d-1)\}\)
    
    \item \(\fb_d(s_d) = \begin{cases}
                            \Jb^{(0)}_{:,d}s_d & s_d \le t_d \\
                            \Jb^{(1)}_{:,d}(s_d - t_d) + \Jb^{(0)}_{:,d}t_d & s_d > t_d
                        \end{cases}\)
\end{enumerate}
for \(t_d \in \RR\).

\begin{enumerate}
    \item \(\fb_i: \RR \to \RR \forall i \in \{1, 2, ..., (d-1)\}\) are injective since they are affine.
    
    \item To show that \(\fb_d: \RR \to \RR^m\) is injective, we need to show the following:
    
    \begin{equation}
    \label{inj:coord}
        \forall s^{(1)}_d, s^{(2)}_d \in \RR: \; \fb_d(s^{(1)}_d) = \fb_d(s^{(2)}_d) \implies s^{(1)}_d = s^{(2)}_d
    \end{equation}
    
    As usual, we show (~\ref{inj:coord}) by contradiction. Let
    
    \begin{equation}
        \label{cont:coord}
        \exists  s^{(1)}_d \ne s^{(2)}_d \in \RR \text{ s.t. }\fb_d(s^{(1)}_d) = \fb_d(s^{(2)}_d)
    \end{equation}
    
    Consider the following cases,
    \begin{enumerate}
        \item \(s^{(1)}_d, s^{(2)}_d \le t_d\)
        Then,
        \begin{align*}
            \fb_d(s^{(1)}_d) &= \fb_d(s^{(2)}_d) \\
            \Jb^{(0)}_{:,d}s^{(1)}_d &= \Jb^{(0)}_{:,d}s^{(2)}_d\\
            s^{(1)}_d &= s^{(2)}_d & \because \Jb^{(0)}_{:,d} \in \RR^m \text{ is non-zero w.p. 1. (Lemma ~\ref{linind:sphind})}
        \end{align*}
        
        Hence, we arrive at a contradiction to (~\ref{cont:coord}) in this case.
        
        \item \(s^{(1)}_d \le t_d, s^{(2)}_d > t_d\)
        Then,
        \begin{align*}
            \fb_d(s^{(1)}_d) &= \fb_d(s^{(2)}_d) \\
            \Jb^{(0)}_{:,d}s^{(1)}_d &= \Jb^{(1)}_{:,d}(s^{(2)}_d - t_d) + \Jb^{(0)}_{:,d}t_d\\
            \Jb^{(0)}_{:,d}(s^{(1)}_d - t_d) &= \Jb^{(1)}_{:,d}(s^{(2)}_d - t_d)\\
            [\Jb^{(0)}_{:,d} -\Jb^{(1)}_{:,d}]\begin{bmatrix}
                s^{(1)}_d - t_d\\
                s^{(2)}_d - t_d
            \end{bmatrix} &= \begin{bmatrix} 0 \\
                                            0\end{bmatrix}\\
            \implies \begin{bmatrix} s^{(1)}_d - t_d\\
                s^{(2)}_d - t_d \end{bmatrix} &= \begin{bmatrix} 0 \\
                                            0\end{bmatrix} \\
                                            &\quad \because \Jb^{(0)}_{:,d}, \Jb^{(1)}_{:,d} \text{ are non-zero}\\
                                        &\quad   \text{ and linearly independent w. p. 1 (Lemma ~\ref{linind:sphind})}\\
            \implies s^{(1)}_d = s^{(2)}_d = t_d
        \end{align*}
        Thus, we arrive at a contradiction since \(s^{(2)}_d > t_d\).
        
        \item \(s^{(1)}_d, s^{(2)}_d > t_d\)
        Then,
        \begin{align*}
            \fb_d(s^{(1)}_d) &= \fb_d(s^{(2)}_d) \\
            \Jb^{(1)}_{:,d}(s^{(1)}_d - t_d) + \Jb^{(0)}_{:,d}t_d &= \Jb^{(1)}_{:,d}(s^{(2)}_d - t_d) + \Jb^{(0)}_{:,d}t_d\\
            \Jb^{(1)}_{:,d}s^{(1)}_d &= \Jb^{(1)}_{:,d}s^{(2)}_d\\
            s^{(1)}_d &= s^{(2)}_d \\
            &\because \Jb^{(1)}_{:,d} \in \RR^m \text{ is non-zero w.p. 1.}\\
            & \text{(Lemma ~\ref{linind:sphind})}
        \end{align*}
        
        Hence, we also arrive at a contradiction to (~\ref{cont:coord}) in this case.
        
    \end{enumerate}
    
    Since we arrive at a contradiction to (~\ref{cont:coord}) in the aforementioned exhaustive and mutually exclusive cases for \(s^{(1)}_d, s^{(2)}_d \in \RR\), we conclude that \(\fb_d: \RR \to \RR^m\) is injective.

\end{enumerate}

Hence, we have shown that the coordinate wise functions, \(f_i: \RR \to \RR^m\) are injective. By Lemma ~\ref{inj:coord:to:full}, \(\fb: \RR^d \to \RR^m\) is injective.

\end{proof}

We now develop a smooth approximation to the map defined in \ref{map:2}.

\begin{definition}[Smooth step function]
\label{smooth:step}

We define the smooth step function as \(\tilde{1}_{\epsilon}: \RR \to \RR\) as
\begin{equation*}
    \tilde{1}_{\epsilon}(s) = \begin{cases} 0\,, & s \le -\epsilon\,,\\
\frac{1}{2}\sin \left (\frac{\pi s}{2\epsilon} \right )  + \frac{1}{2}\,, & -\epsilon < s \le \epsilon\,,\\
1\,, & s > \epsilon\,.
\end{cases}
\end{equation*}
\end{definition}

\begin{definition}[Smoothing composition of affine maps]
\label{def:smooth:2}
Consider maps \(\fb: \RR^d \to \RR^m\) defined in ~\ref{map:2}. Such maps can be written as, 
\begin{equation*}
    \fb(\sb) = (\Jb^{(0)}\sb )1_{\wb^{\top}\sb \le c} + (\Jb^{(1)}\sb + \cb_1)1_{\wb^{\top}\sb > c}
\end{equation*}
where \(\wb \in \RR^d, c \in \RR, \cb_1 \in \RR^m\) are given. 

We define the smoothened version of \(\fb: \RR^d \to \RR^m\) as \(\tilde{\fb}_\epsilon: \RR^d \to \RR^m\) as,
\begin{equation*}
    \tilde{\fb}_\epsilon(\sb) = (\Jb^{(0)}\sb )\tilde{1}_{\epsilon}(c - \wb^{\top}\sb) + (\Jb^{(1)}\sb + \cb_1)\tilde{1}_{\epsilon}(\wb^{\top}\sb - c)
\end{equation*}

Note that since \(\wb \in \RR^d\) is an axis-aligned vector, without loss of generality for \(\wb = \eb_d = (0, 0, ..., 1) \in \RR^d\), the function \(\tilde{\fb}_\epsilon: \RR^d \to \RR^m\) can be defined as a sum of coordinate-wise functions, \(\tilde{\fb}_\epsilon(\sb) = \sum^d_{i=1}\tilde{\fb}_{\epsilon, i}(s_i)\) where \(\sb \in \RR^d = (s_1, s_2, ..., s_d)\):

\begin{enumerate}
    \item \(\tilde{\fb}_{\epsilon, i}(s_i) = \Jb^{(0)}_{:,i}s_i = \Jb^{(1)}_{:,i}s_i \forall i \in \{1, 2, ..., (d-1)\}\)
    
    \item \(\tilde{\fb}_{\epsilon, d}(s_d) = \Jb^{(0)}_{:,d}s_d\tilde{1}_\epsilon(t_d - s_d) + (\Jb^{(1)}_{:,d}(s_d - t_d) + \Jb^{(0)}_{:,d}t_d)\tilde{1}_\epsilon(s_d - t_d)\)
\end{enumerate}
for \(t_d \in \RR\) determined by \(\Jb^{(0)}, \Jb^{(1)} \in \RR^{m \times d}, \wb \in \RR^d, \cb_1 \in \RR^m\).

\end{definition}

Finally, before we present the theorem with the high probability bound on the global IMA contrast, \(C_{\textsc{ima}}(\tilde{\fb}_{\epsilon}, p_\sb)\) for any finite probability density, \(p_\sb\) on \(\RR^d\), we introduce a lemma to show that maps \(\tilde{\fb}_{\epsilon}: \RR^d \to \RR^m\) defined in ~\ref{def:smooth:2} are continuous, injective and continuously differentiable. The objective of the following lemma is to ensure that the Jacobian of \(\tilde{\fb}_{\epsilon}\), \(\Jb_{\tilde{\fb}_{\epsilon}} \in \RR^{m \times d}\) , is well-defined for at all points in the domain of \(\tilde{\fb}_{\epsilon}\) such that the IMA contrast, \(C_{\textsc{ima}}(\tilde{\fb}_{\epsilon}, p_{\sb})\) can be computed for maps, \(\tilde{\fb}_{\epsilon}: \RR^d \to \RR^m\), can be computed with respect to all finite distribuitions, \(p_{\sb}\) on \(\RR^d\).

\begin{lemma}
\label{lemma:smooth:2}

Functions \(\tilde{\fb}_{\epsilon}: \RR^d \to \RR^m\) defined in ~\ref{def:smooth:2} are continuously differentiable in \(\RR^d\), in addition to being continuous and injective, are continuously differentiable with \(\epsilon > 0\) arbitrarily small.

\end{lemma}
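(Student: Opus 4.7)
The plan is to establish the three claimed properties separately, leveraging the coordinate-wise decomposition $\tilde{\fb}_\epsilon(\sb) = \sum_{i=1}^d \tilde{\fb}_{\epsilon,i}(s_i)$ guaranteed by the axis-alignment of the partition boundary (here, wlog, $\wb=\eb_d$). For the first $d-1$ coordinates, $\tilde{\fb}_{\epsilon,i}(s_i)=\Jb^{(0)}_{:,i}s_i$ is affine, hence smooth and injective (with probability $1$, since $\Jb^{(0)}_{:,i}\ne\mathbf{0}$ by \cref{linind:sphind}). All the work is therefore concentrated in the $d$-th coordinate function $\tilde{\fb}_{\epsilon,d}$.

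For continuous differentiability, I would first verify that the smooth step $\tilde{1}_\epsilon$ from \cref{smooth:step} is itself $C^1$ on $\RR$: on $(-\epsilon,\epsilon)$ its derivative is $\tfrac{\pi}{4\epsilon}\cos(\pi s/(2\epsilon))$, which vanishes at $s=\pm\epsilon$, matching the zero derivative of the constant pieces outside. Hence $\tilde{1}_\epsilon\in C^1(\RR)$. Since $\tilde{\fb}_{\epsilon,d}$ is built from polynomials in $s_d$ multiplied and added to compositions of $\tilde{1}_\epsilon$ with affine maps, the composition and product rules give $\tilde{\fb}_{\epsilon,d}\in C^1(\RR,\RR^m)$, and summing over coordinates yields $\tilde{\fb}_\epsilon\in C^1(\RR^d,\RR^m)$. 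Continuity is then immediate.

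For injectivity, the plan is to invoke \cref{inj:coord:to:full}, which reduces the problem to showing that (i) the direct-sum hypothesis on the spans of the images holds and (ii) each coordinate-wise function is injective. Condition (i) follows from \cref{direct:sum:subspace} once one observes that $\mathrm{span}(\mathrm{Im}(\tilde{\fb}_{\epsilon,d}))\subseteq\mathrm{span}\{\Jb^{(0)}_{:,d},\Jb^{(1)}_{:,d}\}$ by the explicit formula, while the other $\Sbb_i$ are one-dimensional spans of independent columns. For (ii), only $\tilde{\fb}_{\epsilon,d}$ is nontrivial. I would rewrite it, using $\tilde{1}_\epsilon(t_d-s_d)+\tilde{1}_\epsilon(s_d-t_d)=1$, as
\begin{equation*}
\tilde{\fb}_{\epsilon,d}(s_d)=\Jb^{(0)}_{:,d}\,s_d+(s_d-t_d)\,\tilde{1}_\epsilon(s_d-t_d)\,\vb,\qquad \vb:=\Jb^{(1)}_{:,d}-\Jb^{(0)}_{:,d}.
\end{equation*}
By \cref{linind:sphind}, with probability $1$ the vectors $\Jb^{(0)}_{:,d}$ and $\vb$ are linearly independent, so there exists $\mathbf{u}\in\RR^m$ with $\langle\mathbf{u},\vb\rangle=0$ and $\langle\mathbf{u},\Jb^{(0)}_{:,d}\rangle\ne0$. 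Then $\langle\mathbf{u},\tilde{\fb}_{\epsilon,d}(s_d)\rangle=\langle\mathbf{u},\Jb^{(0)}_{:,d}\rangle s_d$ is a strictly monotonic linear function of $s_d$, which forces $\tilde{\fb}_{\epsilon,d}$ to be injective. Combining (i) and (ii) with \cref{inj:coord:to:full} yields injectivity of $\tilde{\fb}_\epsilon$.

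The main obstacle I anticipate is the injectivity inside the smoothing window $|s_d-t_d|\le\epsilon$: a direct monotonicity argument on the full vector-valued map is complicated by the oscillatory derivative of $\tilde{1}_\epsilon$. The projection trick using a vector $\mathbf{u}$ in the orthogonal complement of $\vb$ circumvents this cleanly by killing the curved term and reducing the question to strict monotonicity of a scalar linear function; verifying that such $\mathbf{u}$ exists with probability $1$ is precisely where the independence provided by the isotropic sampling of $\Jb^{(0)}_{:,d}$ and $\Jb^{(1)}_{:,d}$ is used.
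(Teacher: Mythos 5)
Your proposal is correct and follows essentially the same structure as the paper: coordinate-wise decomposition, direct-sum lemma (\cref{direct:sum:subspace} plus \cref{inj:coord:to:full}), and verification that $\tilde{1}_\epsilon$ is $C^1$. Your injectivity argument for $\tilde{\fb}_{\epsilon,d}$ via projection onto $\vb^\perp$ is a clean and equivalent variant of the paper's, which instead writes $\tilde{\fb}_{\epsilon,d}(s_d)=[\Jb^{(0)}_{:,d}\;\Jb^{(1)}_{:,d}]\tb(s_d)$, invokes full column rank, and then notes that the two components of $\tb(s_d)$ sum to $s_d$---precisely the scalar monotone quantity your $\langle \mathbf{u},\cdot\rangle$ extracts.
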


\begin{proof}

We show successively that \(\tilde{\fb}_\epsilon: \RR^d \to \RR^m\) is continuous, injective and continuously differentiable. 

\textbf{Continuity of }\(\tilde{\fb}_\epsilon: \RR^d \to \RR^m\)

Consider the coordinate-wise decomposition of \(\tilde{\fb}_\epsilon; \tilde{f}_{\epsilon, i}(\sb) = \Jb^{(0)}_{i,:}\sb \tilde{1}_{\epsilon}(c - \wb^{\top}\sb) + (\Jb^{(1)}_{i,:}\sb + c_{1, i}) \tilde{1}_{\epsilon}(\wb^{\top}\sb - c)  \forall i \in [m]\).

\(\Jb^{(0)}_{i,:}\sb, \Jb^{(1)}_{i,:}\sb + c_{1, i}\) are continuous in \(\sb \in \RR^d\) since they are affine.

Note that \(\tilde{1}_{\epsilon}: \RR \to \RR\) is continuous by definition. \(\tilde{1}_{\epsilon}(c - \wb^{\top}\sb), \tilde{1}_{\epsilon}(\wb^{\top}\sb - c)\) are compositions of a continuous function with affine functions (thereby continuous), and hence are continuous (Theorem 4.9, ~\citep{rudin1964principles}). 

\(\tilde{f}_{\epsilon, i}: \RR^d \to \RR\), being a sum of continuous functions, is continuous for all \(i \in [m]\) (Theorem 4.9,~\citep{rudin1964principles}).

Since the coordinate functions of \(\tilde{\fb}: \RR^d \to \RR^m\), \(\tilde{f}_i: \RR^d \to \RR\) are continuous, \(\tilde{\fb}\) is continuous (Theorem 4.10, ~\citep{rudin1964principles}).

\textbf{Injectivity of }\(\tilde{\fb}_\epsilon: \RR^d \to \RR^m\)

Consider the coordinate-wise functions as defined in ~\ref{def:smooth:2}.

\begin{enumerate}
    \item By Lemma ~\ref{inj:comp:2}, the functions \(\tilde{\fb}_{\epsilon, i} = \fb_i: \RR \to \RR^m \forall i \in \{1, 2, ..., d-1 \} \) are injective.
    
    \item We now show that \(\tilde{\fb}_{\epsilon, d}: \RR \to \RR^m\) is injective.

    \begin{align}
        \tilde{\fb}_{\epsilon, d}(s_d) &= \Jb^{(0)}_{:,d}s_d\tilde{1}_\epsilon(t_d - s_d) + (\Jb^{(1)}_{:,d}(s_d - t_d) + \Jb^{(0)}_{:,d}t_d)\tilde{1}_\epsilon(s_d - t_d) \nonumber \\
        &= \Jb^{(0)}_{:,d}s_d(1 -\tilde{1}_\epsilon(s_d - t_d)) + (\Jb^{(1)}_{:,d}(s_d - t_d) + \Jb^{(0)}_{:,d}t_d)\tilde{1}_\epsilon(s_d - t_d) \nonumber\\ 
        & \quad \because \tilde{1}_\epsilon(s_d) + \tilde{1}_\epsilon(-s_d) = 1 \nonumber \\
        &= \Jb^{(0)}_{:,d}(s_d - (s_d - t_d)\tilde{1}_\epsilon(s_d - t_d)) + \Jb^{(0)}_{:,d}(s_d - t_d)\tilde{1}_\epsilon(s_d - t_d) \nonumber \\
        &= [\Jb^{(0)}_{:,d} \;\Jb^{(0)}_{:,d}]\begin{bmatrix} 
            s_d - (s_d - t_d)\tilde{1}_\epsilon(s_d - t_d)\\
            (s_d - t_d)\tilde{1}_\epsilon(s_d - t_d)
        \end{bmatrix} \label{smooth:comp:fn}
    \end{align}
    
    Define \(\tb_d: \RR \to \RR^d\) such that \(\tb_d(s_d) = \begin{bmatrix} 
            s_d - (s_d - t_d)\tilde{1}_\epsilon(s_d - t_d)\\
            (s_d - t_d)\tilde{1}_\epsilon(s_d - t_d)
            \end{bmatrix}\). To show that \(\tilde{\fb}_{\epsilon, d}: \RR \to \RR^m\) is injective, we need to show the following:

    \begin{equation}
    \label{inj:smooth}
        \forall s^{(1)}_d, s^{(2)}_d \in \RR: \; \tilde{\fb}_{\epsilon, d}(s^{(1)}_d) = \tilde{\fb}_{\epsilon, d}(s^{(2)}_d) \implies s^{(1)}_d = s^{(2)}_d\\
    \end{equation}
    
    As usual, we show (~\ref{inj:smooth}) by contradiction. Let 
    \begin{equation}
        \label{cont:smooth}
        \exists  s^{(1)}_d \ne s^{(2)}_d \in \RR \text{ s.t. }\tilde{\fb}_{\epsilon, d}(s^{(1)}_d) = \tilde{\fb}_{\epsilon, d}(s^{(2)}_d)
    \end{equation}
    then we deduce
    \begin{align*}
        \tilde{\fb}_{\epsilon, d}(s^{(1)}_d) &= \tilde{\fb}_{\epsilon, d}(s^{(2)}_d)\\
        [\Jb^{(0)}_{:,d} \;\Jb^{(0)}_{:,d}]\tb(s^{(1)}_d) &= [\Jb^{(0)}_{:,d} \;\Jb^{(0)}_{:,d}]\tb(s^{(2)}_d)\\
        \implies \tb(s^{(1)}_d) &= \tb(s^{(2)}_d) \\
        &\because [\Jb^{(0)}_{:,d} \;\Jb^{(0)}_{:,d}] \text{ is full column rank, Lemma ~\ref{linind:sphind}}\\
        \begin{bmatrix} 
            s^{(1)}_d - (s^{(1)}_d - t_d)\tilde{1}_\epsilon(s^{(1)}_d - t_d)\\
            (s^{(1)}_d - t_d)\tilde{1}_\epsilon(s^{(1)}_d - t_d)
        \end{bmatrix} &= \begin{bmatrix} 
            s^{(2)}_d - (s^{(2)}_d - t_d)\tilde{1}_\epsilon(s^{(2)}_d - t_d)\\
            (s^{(2)}_d - t_d)\tilde{1}_\epsilon(s^{(2)}_d - t_d)
        \end{bmatrix}\\
        \implies s^{(1)}_d = s^{(2)}_d
    \end{align*}
    
    Hence, we arrive at a contradiction to (~\ref{cont:smooth}). Thereby, \(\tilde{\fb}_{\epsilon, d}: \RR \to \RR^m\) is injective. 
    
\end{enumerate}

Hence, we have shown that the coordinate-wise functions of \(\tilde{\fb}_\epsilon: \RR^d \to \RR^m, \quad \tilde{\fb}_\epsilon(\sb) = \sum^d_{i=1}\tilde{\fb}_{\epsilon, i}(s_i)\) where \(\sb \in \RR^d = (s_1, s_2, ..., s_d)\), given by \(\tilde{\fb}_{\epsilon, i}: \RR \to \RR^m\) are injective. We now show the above statement implies the injectivity of \(\fb: \RR^d \to \RR^m\), \(\fb(\sb)= \sum^d_{k=1}f_k(s_k) \). Observe that by definition ~\ref{map:2}

\begin{enumerate}
    \item \(\Sbb_1\) = span(Im(\(\fb_1\))) = span(\(\Jb^{(0)}_{:,1}\)) = span(\(\Jb^{(1)}_{:,1}\))
    \item \(\Sbb_2\) = span(Im(\(\fb_2\))) = span(\(\Jb^{(0)}_{:,2}\)) = span(\(\Jb^{(1)}_{:,2}\))\\
    \vdots
    \item \(\Sbb_{d-1}\) = span(Im(\(\fb_{d-1}\))) = span(\(\Jb^{(0)}_{:,d-1}\)) = span(\(\Jb^{(1)}_{:,d-1}\))
    \item \(\Sbb_d\) = span(Im(\(\fb_d\))) = span(\(\Jb^{(0)}_{:,1}, \Jb^{(1)}_{:,1}\))
\end{enumerate}

Consider \(\VV = \text{span}(\text{Im}(\fb))\). By Lemma ~\ref{direct:sum:subspace}, \(\VV = \Sbb_1 \bigoplus \Sbb_2 \bigoplus ... \Sbb_d\). Further, by Lemma ~\ref{inj:coord:to:full}, injectivity of \(\fb_i: \RR^d \to \RR^m \forall i \in [d]\) implies injectivity of \(\fb: \RR^d \to \RR^m\).

\textbf{Continuity of derivatives of }\(\tilde{\fb}_\epsilon: \RR^d \to \RR^m\)

Consider the derivatives of \(\tilde{\fb}_\epsilon(\sb)\) with respect to the coordinates of \(\sb = (s_1, s_2, ..., s_d)\).

\begin{enumerate}
    \item By definition ~\ref{def:smooth:2}, \(\tilde{\fb}_{\epsilon, i}(s_i) = \Jb^{(0)}_{:,i}s_i = \Jb^{(1)}_{:,i}s_i \forall i \in \{1, 2, ..., (d-1)\}\). Therefore, \(\frac{\partial \tilde{\fb}_{\epsilon}}{\partial s_i} = \Jb^{(0)}_{:,i} = \Jb^{(1)}_{:,i} \forall i \in \{1, 2, ..., (d-1)\). Thus, the derivatives \(\frac{\partial \tilde{f}_{\epsilon, j}}{\partial s_i} \forall j \in [m], i \in \{1, 2, ..., (d-1) \}\) are continuous. 
    
    \item Consider the derivative of \(\tilde{\fb}_\epsilon: \RR^d \to \RR^m\) with respect to \(s_d\). By (~\ref{smooth:comp:fn}),
    \begin{align*}
        \tilde{\fb}_{\epsilon, d}(s_d) &= [\Jb^{(0)}_{:,d} \;\Jb^{(0)}_{:,d}]\begin{bmatrix} 
            s_d - (s_d - t_d)\tilde{1}_\epsilon(s_d - t_d)\\
            (s_d - t_d)\tilde{1}_\epsilon(s_d - t_d)
        \end{bmatrix}\\
        \tilde{\fb'}_{\epsilon, d}(s_d) &= [\Jb^{(0)}_{:,d} \;\Jb^{(0)}_{:,d}]\begin{bmatrix}
            s_d - \tilde{1}_\epsilon(s_d - t_d) - (s_d - t_d)\tilde{1'}_\epsilon(s_d - t_d)\\
            \tilde{1}_\epsilon(s_d - t_d) + (s_d - t_d)\tilde{1'}_\epsilon(s_d - t_d)
        \end{bmatrix}
    \end{align*}
    where by definition ~\ref{smooth:step} \(\tilde{1'}_\epsilon(s) = \begin{cases} 0 & s \le -\epsilon\\
\frac{1}{2}\cos \left (\frac{\pi s}{2\epsilon} \right )\frac{\pi}{2\epsilon}   & -\epsilon < s \le \epsilon\\
 0 & s > \epsilon\end{cases}\). 
 
  Notice that \(\tilde{1'}_\epsilon: \RR \to \RR\) is continuous in \(\RR.\) \(\tilde{\fb'}_{\epsilon, d}(s_d)\) is continuous since it is composed by a sum and product of continuous functions (Theorem 4.9, ~\citep{rudin1964principles}). Notice also that the term \((s_d - t_d)\tilde{1'}_\epsilon(s_d - t_d) = \frac{1}{2}\cos \left (\frac{\pi (s_d - t_d)}{2\epsilon} \right )\frac{\pi}{2\epsilon}.(s_d - t_d)\) is non-zero only when \(-\epsilon < (s_d - t_d) \le \epsilon\), hence this term is finite even for \(\epsilon > 0\) arbitrarily small. The other terms in \(\tilde{\fb'}_{\epsilon, d}(s_d)\) are also finite be definition. Thus, the derivatives \(\frac{\partial \tilde{f}_{\epsilon, j}}{\partial s_d} \forall j \in [m]\) are continuous for \(\epsilon > 0\) arbitrarily small. 
\end{enumerate}

Since all the partial derivatives of \(\tilde{\fb}_\epsilon: \RR^d \to \RR^m\) are continuous, \(\tilde{\fb}\) is continuously differentiable (Theorem 9.21, ~\citep{rudin1964principles}).

\end{proof}

We now present the theorem that introduces a bound on the global IMA contrast for non-affine maps, \(\tilde{\fb}_\epsilon: \RR^d \to \RR^m, m \gg d\), composed by smoothly joining two affine maps with local bases sampled isotropically as defined here ~\ref{def:smooth:2}.

\begin{theorem}

\label{global:ima:thm:2}
Consider the map \(\tilde{\fb}_\epsilon: \RR^d \to \RR^m\) sampled randomly from the procedure ~\ref{def:smooth:2} and any finite probability density, \(p_\sb\), defined over \(\RR^d\).

Then, for \(\epsilon > 0\) arbitrarily small,  %
\(C_{\textsc{ima}}(\tilde{\fb}_\epsilon, p_\sb) \le \delta \) with (high) probability \(\ge 1 - \min \left\{1, \exp(2\log d - \kappa(m -1)\frac{\delta^2}{d^2})\right \}\) for \(m \gg d\) where \(\delta < \frac{1}{2}\) is arbitrarily small.

\end{theorem}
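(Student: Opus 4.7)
The plan is to partition the domain according to where the smooth step function is saturated versus where it is in transition, apply the linear bound of \cref{local:ima:thm} on the saturated regions, and show that the transition band contributes negligibly. Concretely, split $\RR^d$ into $R_0^{\epsilon}=\{\sb : s_d \le t_d-\epsilon\}$, $R_1^{\epsilon}=\{\sb : s_d > t_d+\epsilon\}$, and the band $R_\epsilon^{\star}=\{\sb : |s_d-t_d|\le \epsilon\}$, where $t_d$ is the coordinate of the partition boundary determined by $\Jb^{(0)},\Jb^{(1)},\wb,\cb_1$ as in~\cref{def:smooth:2}.

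First I would observe that on $R_0^{\epsilon}$, by definition of $\tilde{1}_\epsilon$ (\cref{smooth:step}), the Jacobian satisfies $\Jb_{\tilde{\fb}_\epsilon}(\sb)=\Jb^{(0)}$ exactly, so the local contrast $c_\textsc{ima}(\tilde{\fb}_\epsilon,\sb)$ is constant and equal to the local contrast of the linear map $\sb\mapsto \Jb^{(0)}\sb$. Symmetrically, on $R_1^{\epsilon}$ the Jacobian equals $\Jb^{(1)}$ and the local contrast equals that of $\sb\mapsto \Jb^{(1)}\sb$. Since both $\Jb^{(0)}$ and $\Jb^{(1)}$ have i.i.d.\ columns drawn from the spherically symmetric prior $p_\rb$ (they share $d-1$ columns by construction, but each matrix individually is an i.i.d.\ draw from $p_\rb^{\otimes d}$), \cref{local:ima:thm} gives that each of the two constants is at most $\delta$ with probability at least $1-\min\{1,\exp(2\log d-\kappa(m-1)\delta^2/d^2)\}$. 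A union bound over the two events is absorbed by adjusting the constant $\kappa$ without changing the form of the bound for $m\gg d$.

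Next I would handle the transition band. By~\cref{lemma:smooth:2}, $\tilde{\fb}_\epsilon$ is $C^1$ and injective on all of $\RR^d$, so $\Jb_{\tilde{\fb}_\epsilon}(\sb)$ is continuous and full column rank throughout $R_\epsilon^{\star}$. On this compact-in-$s_d$ slab, the first $d-1$ columns of the Jacobian coincide with $\Jb^{(0)}_{:,i}=\Jb^{(1)}_{:,i}$, and the $d$-th column is a convex-type combination of $\Jb^{(0)}_{:,d}$ and $\Jb^{(1)}_{:,d}$ with coefficients uniformly bounded in $\epsilon$ (the derivative of $\tilde{1}_\epsilon$ produces terms of the form $\frac{\pi}{2\epsilon}\cos(\cdot)(s_d-t_d)$ which stay bounded since $|s_d-t_d|\le\epsilon$). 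Hence the local IMA contrast on $R_\epsilon^{\star}$ is bounded by some finite random $M$ depending only on the sampled matrices, independent of $\epsilon$. Since $p_\sb$ is finite, the $p_\sb$-mass of $R_\epsilon^\star$ is $O(\epsilon)$ and tends to $0$; choosing $\epsilon$ sufficiently small makes the transition contribution $M\cdot\Pr_{p_\sb}(R_\epsilon^{\star})$ negligible relative to $\delta$.

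Finally, combine: the contribution from $R_0^{\epsilon}\cup R_1^{\epsilon}$ is bounded by $\delta$ with the stated high probability, and the transition band contributes an arbitrarily small amount for $\epsilon\ll\delta/M$, yielding $C_\textsc{ima}(\tilde{\fb}_\epsilon,p_\sb)\le \delta$ with the claimed probability. The main obstacle is the third step: making the bound on the transition contribution uniform in the sampled matrices, since $M$ is itself random. I would address this by first fixing a realization of $\Jb^{(0)},\Jb^{(1)}$ in the high-probability event where \cref{local:ima:thm} applies, then choosing $\epsilon=\epsilon(\delta,M)$ small enough for that realization, using the hypothesis that $\epsilon>0$ can be taken arbitrarily small. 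A cleaner alternative would be to derive an a-priori bound on $M$ depending only on $\|\Jb^{(0)}_{:,d}\|,\|\Jb^{(1)}_{:,d}\|$ via Levy-type concentration, so that $M$ is uniformly controlled with high probability, matching the form of the concentration factor already present.
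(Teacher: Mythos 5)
Your decomposition, the use of \cref{local:ima:thm} on the saturated half-spaces, and the small-$\epsilon$ argument for the transition band all match the paper's proof: the paper partitions into $\PP^{(0)}=\{\wb^\top\sb\le c-\epsilon\}$, $\PP^{(1)}=\{\wb^\top\sb>c+\epsilon\}$, and the band $\BB$, shows $p_\sb(\BB)=\Theta(\epsilon)$, and bounds the contrast on the saturated regions by the linear result. So the high-level route is the same.

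One step in your band argument is wrong as stated. You claim that because $\tilde{\fb}_\epsilon$ is $C^1$ and injective (\cref{lemma:smooth:2}), the Jacobian must be full column rank on $R_\epsilon^\star$. Injectivity of a $C^1$ map does not imply a full-rank Jacobian (consider $s\mapsto(s^3,s^3)$ at $s=0$). The paper does not rely on this implication: inside the proof of \cref{global:ima:thm:2} it verifies full column rank directly, by writing the $d$-th Jacobian column as $\Jb^{(0)}_{:,d}\,q(t_d-s_d)+\Jb^{(1)}_{:,d}\,q(s_d-t_d)$ with $q(s)=\tilde{1}_\epsilon(s)+s\,\tilde{1}'_\epsilon(s)$, and observing that $q(s)$ and $q(-s)$ cannot vanish simultaneously (indeed $q(0)=\tfrac12$), so this column is a nontrivial combination of two linearly independent vectors and never zero. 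Your ``convex-type combination with bounded coefficients'' observation gives the \emph{upper} bound you need, but to keep $c_\textsc{ima}$ finite on $\BB$ you still need a \emph{lower} bound on the column norm and on $|\Jb^\top\Jb|$, which is exactly what the paper's nonvanishing argument supplies.

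On the other hand, you flag two points where the paper is glossier than you are: (i) you explicitly invoke a union bound over the two saturated-region events for $\Jb^{(0)}$ and $\Jb^{(1)}$, which the paper leaves implicit when passing from \cref{local:ima:thm} to a bound on $\max_\sb c_\textsc{ima}$; and (ii) you note that the band contribution requires a bound on $c_\textsc{ima}$ that is uniform in $\epsilon$, not merely pointwise finite, and correctly observe that this uniformity is available because the $q$-coefficients are bounded independently of $\epsilon$. Both observations are sound and patch gaps the paper elides with ``$\Theta(\epsilon)$''. Replace the injectivity-implies-rank step with the paper's explicit nonvanishing computation and the argument is essentially the same as (and a bit tighter than) the published one.
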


\begin{proof}

We show that the condition of Theorem ~\ref{local:ima:thm}, the columns of the Jacobian of \(\tilde{\fb}_\epsilon\) are locally sampled isotropically i.e. , is still satisfied for the domain of \(\tilde{\fb}_\epsilon\), i.e. \(\forall \sb \in \RR^d\) almost surely w.r.t finite probability measure, \(p_\sb\) over \(\RR^d\).

Following from ~\ref{def:smooth:2}, consider the following partition of the domain, 
\[ 
\begin{cases}
\sb \in \PP^{(0)} & \iff \wb^{\top}\sb \le c - \epsilon\\
\sb \in \PP^{(1)} & \iff \wb^{\top}\sb > c + \epsilon\\
\sb \in \BB & \iff c - \epsilon < \wb^{\top}\sb \le c + \epsilon
\end{cases} \,. 
\]

\begin{enumerate}
    \item \(\forall \sb \in  \PP^{(0)}, \Jb_\fb(\sb) = \Jb^{(0)},\mbox{ with } \Jb^{(0)}_{:,1}, \Jb^{(0)}_{:,2}, ..., \Jb^{(0)}_{:,d} \overset{i.i.d}{\sim} p_\rb\).
    \item \(\forall \sb \in  \PP^{(1)}, \Jb_\fb(\sb) = \Jb^{(1)}, \mbox{ with }\Jb^{(1)}_{:,1}, \Jb^{(1)}_{:,2}, ..., \Jb^{(1)}_{:,d} \overset{i.i.d}{\sim} p_\rb\). 
    \item The region \(\BB\) sandwiching the boundary of the partitions has arbitrarily small probability measure since:
    
\begin{enumerate}
    \item \(\BB\) is an \(\epsilon\)-sandwich of a \((d - 1)\)-dimensional subspace of a \(d\)-dimensional domain. The Lebesgue measure on \(\BB\) is equal to the volumne element associated with \(\BB\) (3.3, ~\citep{ccinlar2011probability}), thus, \(\lambda(\BB) = \Theta(\epsilon)\)\footnote{\label{theta-notation}Refer to big theta notation \(\Theta(.)\) \href{https://en.wikipedia.org/wiki/Big_O_notation\#Family_of_Bachmann.E2.80.93Landau_notations}{here}.} where \(\lambda(.)\) denotes the Lebesgue measure.
    
    \item \(p_\sb\) is finite at all points.
\end{enumerate}

    Hence, \(p(\BB) = \int_\BB p_\sb\lambda(\sb) = \Theta(\epsilon)\), is arbitrarily small for suitably chosen \(\epsilon\).
    
    To derive a bound on the global IMA contrast of \(\tilde{\fb_\epsilon}\), \(c_{\textsc{ima}}(\tilde{\fb_\epsilon}, p_\sb)\), we need that in region, \(\forall \sb \in \BB\), the value of the local IMA contrast  \(c_{\textsc{ima}}(\tilde{\fb_\epsilon}, \sb)\) is finite. This is equivalent to showing that the Jacobian, \(\Jb_{\tilde{\fb}_\epsilon}\) is full column-rank for all \(\sb \in \BB\). To show this, consider the alternate definition of \(\tilde{\fb}_\epsilon: \RR^d \to \RR^m\) in terms of coordinate-wise functions (Definition ~\ref{def:smooth:2}), \(\tilde{\fb}_\epsilon(\sb) = \sum^d_{i=1}\tilde{\fb}_{\epsilon, i}(s_i)\) where \(\sb \in \RR^d = (s_1, s_2, ..., s_d)\):

    \begin{enumerate}
        \item \(\tilde{\fb}_{\epsilon, i}(s_i) = \Jb^{(0)}_{:,i}s_i = \Jb^{(1)}_{:,i}s_i \forall i \in \{1, 2, ..., (d-1)\}\)
        
        \item \(\tilde{\fb}_{\epsilon, d}(s_d) = \Jb^{(0)}_{:,d}s_d\tilde{1}_\epsilon(t_d - s_d) + (\Jb^{(1)}_{:,d}(s_d - t_d) + \Jb^{(0)}_{:,d}t_d)\tilde{1}_\epsilon(s_d - t_d)\)
    \end{enumerate}
    for \(t_d \in \RR\)

    From the above definition, we see that the first \((d-1)\) columns of the Jacobian, \(\Jb_{\tilde{\fb}_\epsilon}\) are defined as \(\Jb_{\tilde{\fb}_\epsilon, :, i} = \frac{\partial \tilde{\fb}_\epsilon(\sb)}{\partial s_i} = \Jb^{(0)}_{:, i} = \Jb^{(0)}_{:, i}\) for \( i \in \{1, 2, ..., d-1\}\). By Lemma ~\ref{linind:sphind}, the first \((d-1)\) columns of \(\Jb_{\tilde{\fb}_\epsilon}\) are nonzero and linearly independent with probability 1. Consider the \(d\)-th column of \(\Jb_{\tilde{\fb}_\epsilon}\).
    
    \begin{multline*}
        \Jb_{\tilde{\fb}_\epsilon, :, d} = \Jb^{(0)}_{:, d}\tilde{1}_\epsilon(t_d - s_d) 
        + \Jb^{(1)}_{:, d}\tilde{1}_\epsilon(s_d - t_d) -  \Jb^{(0)}_{:, d}s_d\tilde{1'}_\epsilon(t_d - s_d)\\
        + (\Jb^{(1)}_{:, d}(s_d - t_d) + \Jb^{(0)}_{:, d}t_d)\tilde{1'}_\epsilon(s_d - t_d)
        \end{multline*}
        thus
        \begin{multline*}
        \Jb_{\tilde{\fb}_\epsilon, :, d} = \Jb^{(0)}_{:, d}((t_d - s_d)\tilde{1'}_\epsilon(t_d - s_d) \\
        + \tilde{1}_\epsilon(t_d - s_d)) + \Jb^{(1)}_{:, d}((s_d - t_d)\tilde{1'}_\epsilon(s_d - t_d) \\
        + \tilde{1}_\epsilon(s_d - t_d))
    \end{multline*}
    
    Observe that \(\Jb_{\tilde{\fb}_\epsilon, :, d}\) is a linear combination of \(\Jb^{(0)}_{:, d}\) and \(\Jb^{(1)}_{:, d}\). Since by Lemma ~\ref{linind:sphind}, \(\Jb^{(0)}_{:, d}, \Jb^{(1)}_{:, d}\) are nonzero and linearly independent with respect to each other and \(\Jb^{(0)}_{:, i}, \Jb^{(1)}_{:, i} \; \forall i \in \{1, 2, ..., (d-1)\}\), the only possibility for \(\Jb_{\tilde{\fb}_\epsilon}\) to not be full column-rank is 
    
    \begin{align*}
        &\Jb_{\tilde{\fb}_\epsilon, :, d} = \mathbf{0}\\
        &\implies (t_d - s_d)\tilde{1'}_\epsilon(t_d - s_d) + \tilde{1}_\epsilon(t_d - s_d) = (s_d - t_d)\tilde{1'}_\epsilon(s_d - t_d) + \tilde{1}_\epsilon(s_d - t_d) = 0\\
        & \quad \quad \quad \because \Jb^{(0)}_{:, d}, \Jb^{(1)}_{:, d} \text{ are linearly independent.}
    \end{align*}
    
    Consider the function, \(q: \RR \to \RR\) such that \(q(s) = s\tilde{1'}_\epsilon(s) + \tilde{1}_\epsilon(s)\). Observe that \(q(s) \ge 0\) for \(s \ge 0\). Thus, for \( (t_d - s_d)\tilde{1'}_\epsilon(t_d - s_d) + \tilde{1}_\epsilon(t_d - s_d) = (s_d - t_d)\tilde{1'}_\epsilon(s_d - t_d) + \tilde{1}_\epsilon(s_d - t_d) = 0\), we need that \(s_d = t_d\). At \(s_d = t_d\), \(q(s_d - t_d) = q(t_d - s_d) = \frac{1}{2} \ne 0\). Hence, we have shown that \(\Jb_{\tilde{\fb}_\epsilon, :, d} \ne \mathbf{0}\), thereby \(\Jb_{\tilde{\fb}_\epsilon}\) is full column-rank and \(c_{\textsc{ima}}(\tilde{\fb}_\epsilon, \sb)\) is finite for all \(\sb \in \BB\).

\end{enumerate}

Hence, 
\begin{align*}
    C_{\textsc{ima}}(\tilde{\fb}_\epsilon, p_\sb) &= \int_{\RR^d}c_{\textsc{ima}}(\tilde{\fb}_\epsilon, \sb)\;p_\sb d\sb \\
    &= \int_{\PP^{(0)}}c_{\textsc{ima}}(\tilde{\fb}_\epsilon, \sb)\;p_\sb d\sb + \int_{\PP^{(1)}}c_{\textsc{ima}}(\tilde{\fb}_\epsilon, \sb)\;p_\sb d\sb + \int_{\BB}c_{\textsc{ima}}\;p_\sb d\sb\\
    &= \int_{\PP^{(0)}}c_{\textsc{ima}}(\tilde{\fb}_\epsilon, \sb)\;p_\sb d\sb + \int_{\PP^{(1)}}c_{\textsc{ima}}(\tilde{\fb}_\epsilon, \sb)\;p_\sb d\sb + \Theta(\epsilon)\\
    &\approx \int_{\PP^{(0)}}c_{\textsc{ima}}(\tilde{\fb}_\epsilon, \sb)\;p_\sb d\sb + \int_{\PP^{(1)}}c_{\textsc{ima}}(\tilde{\fb}_\epsilon, \sb)\;p_\sb d\sb \quad \text{for }\epsilon\text{ arbitrarily small.}\\
    &\le \max_{\sb \in \RR^d}c_{\textsc{ima}}(\tilde{\fb}_\epsilon, \sb)\int_{\RR^d}p_\sb d\sb\\
    &\le \max_{\sb \in \RR^d}c_{\textsc{ima}}(\tilde{\fb}_\epsilon, \sb) \le \delta  \quad \text{w. p. }\ge 1 - \min \left\{1, \exp(2log d - \kappa(m -1)\frac{\delta^2}{d^2})\right \}\, 
\end{align*}
 by Theorem~\ref{local:ima:thm}.
 
Thus, \(C_{\textsc{ima}}(\tilde{\fb}_\epsilon, p_\sb) \le \delta \) for \(\tilde{\fb}_\epsilon: \RR^d \to \RR^m\) defined in \ref{def:smooth:2} with (high) probability \(\ge 1 - \min \left\{1, \exp(2log d - \kappa(m -1)\frac{\delta^2}{d^2})\right \}\) for \(m \gg d\) where \(\delta < \frac{1}{2}\) is arbitrarily small.

\end{proof}

\paragraph{Defining non-linear functions by gluing \(2^d\) affine functions}

We generalize the above construct for the map, \(\fb: \RR^d \to \RR^m\), where \(\fb\) was constructed by stitching together \textit{two} affine functions. In the following construct, the domain \(\RR^d\) is split into \(2^d\) axis-aligned parts using one split point per coordinate. The map \(\fb: \RR^d \to \RR^m\) is now constructed by stitching together \(2^d\) affine functions defined on each part of the domain, mapping the domain to more complex manifolds embedded in the observation space $\RR^m$.  %

\begin{definition}[Maps defined as a composition of \(2^d\) affine maps on an axis-aligned domain partition]

\label{map:definition:2d}

The maps \(\fb: \RR^d \to \RR^m\) we consider are defined as follows:

\begin{enumerate}
    
    \item Consider the map, \(\fb: \RR^d \to \RR^m\) applied to \(\sb \in \RR^d\).
    
    \item For any \(\tb = (t_1, t_2, ..., t_d) \in \RR^d\),a  partition of the domain of \(\fb\) is defined by the binary vector, \(\bb: \RR^d \to \left \{0, 1 \right \}^d\) where \(\bb_k(\sb) \coloneqq 1_{s_k > t_k} \), \(\RR^d = \PP_{\RR^d} = \bigcup_{\bb \in \{0, 1\}^d}\PP^{(\bb)}\), where \( \PP^{(\bb)} \coloneqq \{\sb \quad | \quad \bb(\sb) = \bb\} \). Note that the partition defined is axis-aligned to the canonical basis in \(\RR^d\). This follows to extend the continuity argument from the two-partition case in Lemma ~\ref{resampling}, observation ~\ref{obs:part:boundary}. 
    
    \item Consider the two matrices, \(\Jb^{(0)}, \Jb^{(1)} \in \RR^{m \times d}\), used to define the Jacobian in each part, \(\PP^{(\bb)} \; \forall \bb \in \{0, 1\}^d\). Sample the columns of \(\Jb^{(0)}, \Jb^{(1)} \) independently from the mentioned spherically symmetric distribution~(\ref{local:ima:thm}) \(p_\rb\), \(\Jb^{(0)}_{1}, \Jb^{(0)}_{2}, ..., \Jb^{(0)}_{d} \overset{i.i.d}{\sim} p_\rb\), \(\Jb^{(1)}_{1}, \Jb^{(1)}_{2}, ..., \Jb^{(1)}_{d} \overset{i.i.d}{\sim} p_\rb\).
    
    \item For \(\sb \in \RR^d\) with \(\bb(\sb) = \bb \in \left \{0, 1 \right \}^d\), \(\Jb_\fb(\sb) = \Jb^{(\bb)}\) such that  \( \left. \begin{cases}
    \Jb^{(\bb)}_{:, k} = \Jb^{(0)}_{:, k} & b_k = 0\\
    \Jb^{(\bb)}_{:, k} = \Jb^{(1)}_{:, k} & b_k = 1
    \end{cases} \right \} \).
    
    Note that this corresponds to the observation~\ref{obs:part:boundary} where changing one column of \(\Jb_\fb(\sb)\) across a partition of the domain results in axis-aligned partitions.
    
    \item \(\fb: \RR^d \to \RR^m\) is defined as: \( \left. \begin{cases}
    \fb(\sb) = \Jb^{(\bb)}(\sb) + \cb^{(\bb)} | \quad  \bb(\sb) = \bb
    \end{cases} \right \} \), where \(\cb^{(\bb)} \in \RR^m \; \forall \; \bb \in \left \{0, 1 \right \}^d\).
    
    \item We show that owing to axis-alignment of chosen partition, \(\PP_{\RR^d}\) and resampling \textit{exactly one} column of the Jacobian, \(\Jb_{\fb}(\sb)\) at the boundary between two parts, \(\fb\) can be written as a product of submanifolds. Consider the functions \( \fb_k(s_k) = \left. \begin{cases}
    \Jb^{(0)}_{:, k}s_k & s_k \le t_k\\
    \Jb^{(1)}_{:, k}(s_k - t_k) + \Jb^{(0)}_{:, k}t_k & s_k > t_k
    \end{cases} \right \} \) which only act on one coordinate of the input, \(\sb = (s_1, s_2, ..., s_d) \in \RR^d\).
    
     Consider again \( \left. \begin{cases}
    \fb(\sb) = \Jb^{(\bb)}(\sb) + \cb^{(\bb)} | \quad \bb(\sb) = \bb
    \end{cases} \right \} \), where \(\cb^{(\bb)} \in \RR^m \; \forall \; \bb \in \left \{0, 1 \right \}^d\), \(\cb^{(\mathbf{0})} = \mathbf{0}, \cb^{(\bb)}, \bb \ne \mathbf{0}\) are completely specified by \(\tb\). \(\fb(\sb)\) can be equivalently written as \(\fb(\sb) = \sum^d_{k=1}\fb_k(s_k)\). Here, we write \(\fb: \RR^d \to \RR^m\) as a product of the submanifolds, or also referred to in latter parts of the note as a sum of the coordinate-wise functions, \(\fb_k: \RR \to \RR^m\).

\end{enumerate}

\end{definition}
In the following results, we show that the Jacobian of maps defined by stitching together \(2^d\) affine functions is well-defined at all points in the domain. We start by showing that maps defined in ~\ref{map:definition:2d} are continuous and injective. We then define a smooth approximation to~\ref{map:definition:2d}, so that the map is differentiable also at the partition boundary given by the partition of domain defined in \(\PP_{\RR^d}\). Futher, we show that the smooth approximation to~\ref{map:definition:2d} is continuous, injective and continuously differentiable, which ensures that the Jacobian is well-defined at all points in the domain and the IMA contrast can be computed. Finally, we present a theorem which bounds the IMA contrast with high probability as the dimension of the observed space grows.

\begin{lemma}[Continuity of composition of \(2^d\) affine maps]
\label{cont:2d}
Consider maps \(\fb: \RR^d \to \RR^m\) as defined in ~\ref{map:definition:2d}. Such a map \(\fb\) is continuous.

\end{lemma}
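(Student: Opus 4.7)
My plan is to prove continuity of $\fb$ by exploiting the coordinate-wise decomposition $\fb(\sb) = \sum_{k=1}^d \fb_k(s_k)$ given in the last bullet of Definition~\ref{map:definition:2d}. Since a finite sum of continuous functions is continuous (and each $\fb_k$ depends on only one coordinate, so it is automatically continuous in $\sb$ once we know it is continuous as a map $\RR \to \RR^m$), the problem reduces to showing each coordinate-wise map $\fb_k: \RR \to \RR^m$ is continuous.

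\textbf{Step 1: Continuity of each $\fb_k$.} Each $\fb_k$ is defined by two affine pieces on $(-\infty, t_k]$ and $(t_k, \infty)$. Both pieces are affine hence continuous on their respective open sets. The only point where continuity is nontrivial is the gluing point $s_k = t_k$. Evaluating the left piece gives $\Jb^{(0)}_{:,k} t_k$, while the limit from the right is $\lim_{s_k \to t_k^+} \Jb^{(1)}_{:,k}(s_k - t_k) + \Jb^{(0)}_{:,k} t_k = \Jb^{(0)}_{:,k} t_k$. Since the value and the one-sided limits coincide, $\fb_k$ is continuous at $t_k$, and hence on all of $\RR$.

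\textbf{Step 2: Summing.} Because each $s_k \mapsto \fb_k(s_k)$ is continuous and $\sb \mapsto s_k$ is continuous (as a coordinate projection), each term $\sb \mapsto \fb_k(s_k)$ is continuous on $\RR^d$. A finite sum of continuous $\RR^m$-valued functions is continuous (applied coordinate-by-coordinate via Theorems 4.9--4.10 of~\citep{rudin1964principles}), giving continuity of $\fb$.

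\textbf{Step 3: Consistency with the piecewise formula.} To verify that the coordinate-wise decomposition $\fb(\sb) = \sum_k \fb_k(s_k)$ indeed produces the piecewise-affine map $\fb(\sb) = \Jb^{(\bb)}\sb + \cb^{(\bb)}$ from Definition~\ref{map:definition:2d}, one checks on each region $\PP^{(\bb)}$ that $\sum_k \fb_k(s_k) = \Jb^{(\bb)}\sb + \cb^{(\bb)}$, with $\cb^{(\bb)}$ determined uniquely by $\tb$ via the constants $(\Jb^{(0)}_{:,k} - \Jb^{(1)}_{:,k}) t_k$ collected from each coordinate where $b_k = 1$. This also shows that the $\cb^{(\bb)}$ in Definition~\ref{map:definition:2d} are forced by the continuity requirement --- any other choice would produce a jump discontinuity across some axis-aligned boundary hyperplane.

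The only potential obstacle is making sure that the boundary hyperplanes between adjacent cells $\PP^{(\bb)}$ and $\PP^{(\bb')}$ --- of which there are exponentially many in $d$ --- all glue consistently. The coordinate-wise decomposition sidesteps this cleanly: each hyperplane $\{s_k = t_k\}$ is a level set of a single coordinate, and the continuity of $\fb$ across it is controlled entirely by the continuity of the one-dimensional map $\fb_k$ established in Step~1. This is precisely why the axis-aligned partition structure (enforced in observation~\ref{obs:part:boundary}) is essential --- it decouples the $2^d$-cell consistency problem into $d$ independent one-dimensional continuity checks.
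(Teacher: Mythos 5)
Your proof follows the same route as the paper's: reduce to continuity of the coordinate-wise maps $\fb_k:\RR\to\RR^m$, check the two-sided limit at the single gluing point $t_k$, and conclude by summing. Your Step~3 and the closing remark about decoupling the $2^d$-cell consistency into $d$ one-dimensional checks are nice additional commentary, but the core argument is identical to the paper's.
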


\begin{proof}
Consider \(\fb: \RR^d \to \RR^m\), \(\fb(\sb)= \sum^d_{k=1}f_k(s_k) \), where \[ \fb_k(s_k) =  \begin{cases}
    \Jb^{(0)}_{:, k}s_k\,, & s_k \le t_k\,,\\
    \Jb^{(1)}_{:, k}(s_k - t_k) + \Jb^{(0)}_{:, k}t_k\,, & s_k > t_k\,.
    \end{cases} \]
    
We show continuity of \(\fb_k: \RR \to \RR^m \; \forall k \in [d]\). For a particular \(k\), consider the cases:
\begin{enumerate}
    \item \(s_k < t_k\):

    \(\fb_k(s_k) = \Jb^{(0)}_{:, k}s_k \) is affine in the entire region and hence, is continuous.
    
    \item \(s_k = t_k\):
    
    \(\fb_k(s_k) = \Jb^{(0)}_{:, k}s_k\), and in limit equal to \(\Jb^{(1)}_{:, k}(s_k - t_k) + \Jb^{(0)}_{:, k}t_k\). For \(s_k = t_k\), we need to show that \(\Jb^{(1)}_{:, k}(s_k - t_k) + \Jb^{(0)}_{:, k}t_k\) converges to \(\Jb^{(0)}_{:, k}s_k\). This is easily seen by substituting \(s_k = t_k\), \(\Jb^{(0)}_{:, k}t_k = \Jb^{(1)}_{:, k}(t_k - t_k) + \Jb^{(0)}_{:, k}t_k\). \(\fb_k(s_k)\) is continuous at \(s_k = t_k\) (Theorem 4.6, \citep{rudin1964principles}).
    
    \item \(s_k > t_k\):
    
    \(\fb_k(s_k) = \Jb^{(1)}_{:, k}(s_k - t_k) + \Jb^{(0)}_{:, k}t_k \) is affine in the entire region and hence, is continuous.
\end{enumerate}

\(\fb: \RR^d \to \RR^m, \fb(\sb)= \sum^d_{k=1}f_k(s_k)\) is continuous since the sum of continuous functions is continuous (Theorem 4.9, \citep{rudin1964principles}).

\end{proof}

To show injectivity of \(\fb: \RR^d \to \RR^m\) defined in ~\ref{map:definition:2d}, we follow a similar approach as in the case of maps defined by joining two affine functions (Definition ~\ref{map:2}, Lemma ~\ref{inj:comp:2}). We start by showing that the images of the coordinate-wise functions, \(\fb_k: \RR \to \RR^m\) which compose \(\fb: \RR^d \to \RR^m, \fb(\sb) = \sum^d_{k=1}\fb_k(s_k)\), are in direct sum with respect to the image of \(\fb\). Then, we show that the coordinate-wise functinos, \(\fb_k\), are injective. Finally, we use that in this scenario the injectivity of the coordinate-wise functions, \(\fb_k\), implies the injectiivity of \(\fb\) to conclude that \(\fb\) is injective (Lemma ~\ref{inj:coord:to:full}).

\begin{lemma}

\label{direct:sum:subspace:2d}
Consider \(\Jb^{(0)}, \Jb^{(1)} \in \RR^{m \times d}\) as sampled in Definition (\ref{map:definition:2d}).
The vector space \(\VV = \text{span}\{\Jb^{(0)}_{:,1}, \Jb^{(1)}_{:,1}, \Jb^{(0)}_{:,2} , \Jb^{(1)}_{:,2}, ..., \Jb^{(0)}_{:,(d-1)}, \Jb^{(1)}_{:,(d-1)}, \Jb^{(0)}_{:,d}, \Jb^{(1)}_{:,d}\}\) is the direct sum of the family \(\Fcal = \{\Sbb_1= \text{span}\{\Jb^{(0)}_{:,1}, \Jb^{(1)}_{:,1}\}, \Sbb_2 = \text{span}\{\Jb^{(0)}_{:,2}, \Jb^{(1)}_{:,2}\}, ..., \Sbb_{d-1}= \text{span}\{\Jb^{(0)}_{:,d-1}, \Jb^{(1)}_{:,d-1}\} , \Sbb_d = \text{span}\{\Jb^{(0)}_{:,d}, \Jb^{(1)}_{:,d}\}\}\).

\end{lemma}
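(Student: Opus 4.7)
The plan is to mirror the structure of Lemma~\ref{direct:sum:subspace}, but generalized: in the earlier lemma only the last coordinate subspace was two-dimensional (since only the $d$-th column was resampled between $\Jb^{(0)}$ and $\Jb^{(1)}$), whereas here every $\Sbb_k = \mathrm{span}\{\Jb^{(0)}_{:,k}, \Jb^{(1)}_{:,k}\}$ is potentially two-dimensional. The central fact I will exploit is that the entire collection $\{\Jb^{(0)}_{:,1}, \Jb^{(1)}_{:,1}, \ldots, \Jb^{(0)}_{:,d}, \Jb^{(1)}_{:,d}\}$ consists of $2d$ i.i.d.\ samples from the spherically symmetric distribution $p_\rb$ on $\RR^m$. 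Because $m \gg d$ (in particular $m \ge 2d$), Lemma~\ref{linind:sphind} applies and guarantees that these $2d$ vectors are nonzero and linearly independent with probability one.

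Next I would verify the uniqueness condition in Definition~\ref{direct:sum}. Take an arbitrary $\vb \in \VV$ and suppose $\vb = \ub_1 + \ub_2 + \cdots + \ub_d = \wb_1 + \wb_2 + \cdots + \wb_d$ with $\ub_k, \wb_k \in \Sbb_k$. Expanding each summand in its local basis, $\ub_k = a_k \Jb^{(0)}_{:,k} + b_k \Jb^{(1)}_{:,k}$ and $\wb_k = a'_k \Jb^{(0)}_{:,k} + b'_k \Jb^{(1)}_{:,k}$, subtracting the two decompositions yields
\[
\sum_{k=1}^{d}\bigl[(a_k - a'_k)\Jb^{(0)}_{:,k} + (b_k - b'_k)\Jb^{(1)}_{:,k}\bigr] = \mathbf{0}.
\]
By the linear independence of the $2d$ sampled columns established in the previous step, every coefficient must vanish, so $a_k = a'_k$ and $b_k = b'_k$, hence $\ub_k = \wb_k$ for all $k \in [d]$. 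This is exactly the uniqueness required by the definition of direct sum, yielding $\VV = \Sbb_1 \oplus \Sbb_2 \oplus \cdots \oplus \Sbb_d$.

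I do not expect a serious obstacle here: once linear independence of the full $2d$-sized family is invoked, the remainder is routine linear algebra identical in spirit to Lemma~\ref{direct:sum:subspace}. The only subtlety worth flagging explicitly is the sample-size bookkeeping: the earlier lemma only needed $d+1$ independent samples, but here I must apply Lemma~\ref{linind:sphind} to $2d$ samples, which relies on the ambient dimension $m$ being at least $2d$. This is covered by the standing assumption $m \gg d$ used throughout the construction, so no additional hypothesis is needed beyond what is already in force.
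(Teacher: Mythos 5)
Your proposal is correct and takes essentially the same route as the paper: invoke Lemma~\ref{linind:sphind} to get linear independence of all $2d$ sampled columns (with the observation that $m \ge 2d$ suffices, which the paper also notes parenthetically), then equate two decompositions of an arbitrary $\vb$, subtract, and conclude all coefficients vanish. The only difference is cosmetic notation for the coefficients.
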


\begin{proof}

From Lemma ~\ref{linind:sphind}, for the scenario \(m \gg d\) (here it is sufficient to have (\(m > 2d\))), the vectors \(\{\Jb^{(0)}_{:,1}, \Jb^{(1)}_{:,1}, \Jb^{(0)}_{:,2}, \Jb^{(1)}_{:,2}, ..., \Jb^{(0)}_{:,(d-1)} = \Jb^{(1)}_{:,(d-1)}, \Jb^{(0)}_{:,d}, \Jb^{(1)}_{:,d}\}\) are non-zero and linearly independent with probability 1. 

Consider \(\vb \in \VV, \ub_1, \wb_1 \in \Sbb_1, \ub_2, \wb_2 \in \Sbb_2, ..., \ub_d, \wb_d \in \Sbb_d\) such that 
\begin{equation*}
    \vb = \ub_1 + \ub_2 + ... + \ub_d, \vb = \wb_1 + \wb_2 + ... + \wb_d
\end{equation*}

By definition ~\ref{direct:sum}, to show \( \VV = \Sbb_1 \bigoplus \Sbb_2 \bigoplus ... \Sbb_d\), we need to show \(\ub_1 = \wb_1, \ub_2 = \wb_2, ..., \ub_d = \wb_d\). 

Let 
\begin{itemize}
    \item \(\ub_1 = c^{(0)}_1\Jb^{(0)}_{:,1} + c^{(1)}_1\Jb^{(1)}_{:,1}, \wb_1 = c^{(0)'}_1\Jb^{(0)}_{:,1} + c^{(1)'}_1\Jb^{(1)}_{:,1}\)
    \item \(\ub_2 = c^{(0)}_2\Jb^{(0)}_{:,2} + c^{(1)}_2\Jb^{(1)}_{:,2}, \wb_2 = c^{(0)'}_2\Jb^{(0)}_{:,2} + c^{(1)'}_2\Jb^{(1)}_{:,2}\) \\
    \(\vdots\)
    \item \(\ub_1 = c^{(0)}_{d-1}\Jb^{(0)}_{:,d-1} + c^{(1)}_{d-1}\Jb^{(1)}_{:,d-1}, \wb_1 = c^{(0)'}_{d-1}\Jb^{(0)}_{:,d-1} + c^{(1)'}_{d-1}\Jb^{(1)}_{:,d-1}\)
    \item \(\ub_d = c^{(0)}_d\Jb^{(0)}_{:,d} + c^{(1)}_d\Jb^{(1)}_{:,d}, \wb_d = c^{(0)'}_d\Jb^{(0)}_{:,d} + c^{(1)'}_d\Jb^{(1)}_{:,d}\)
\end{itemize}

\begin{align*}
    &\vb = \ub_1 + \ub_2 + ... + \ub_d = \wb_1 + \wb_2 + ... + \wb_d\\
    &(c^{(0)}_1\Jb^{(0)}_{:,1} + c^{(1)}_1\Jb^{(1)}_{:,1}) + (c^{(0)}_2\Jb^{(0)}_{:,2} + c^{(1)}_2\Jb^{(1)}_{:,2}) = (c^{(0)'}_1\Jb^{(0)}_{:,1} + c^{(1)'}_1\Jb^{(1)}_{:,1}) + (c^{(0)'}_2\Jb^{(0)}_{:,2} + c^{(1)'}_2\Jb^{(1)}_{:,2})\\
    &+ ...   + (c^{(0)}_d\Jb^{(0)}_{:,d} + c^{(1)}_d\Jb^{(1)}_{:,d}) \quad \quad \quad \quad \quad \quad \quad \quad+ ...   + (c^{(0)'}_d\Jb^{(0)}_{:,d} + c^{(1)'}_d\Jb^{(1)}_{:,d})\\
    &(c^{(0)}_1 - c^{(0)'}_1)\Jb^{(0)}_{:,1} + (c^{(0)}_2 - c^{(0)'}_2)\Jb^{(0)}_{:,2} + ... + (c^{(0)}_d - c^{(0)'}_d)\Jb^{(0)}_{:,d}\\
    & + (c^{(1)}_1 - c^{(1)'}_1)\Jb^{(1)}_{:,1} + (c^{(1)}_2 - c^{(1)'}_2)\Jb^{(1)}_{:,2} + ... + (c^{(1)}_d - c^{(1)'}_d)\Jb^{(1)}_{:,d} = \mathbf{0}
\end{align*}

Since \(\{\Jb^{(0)}_{:,1}, \Jb^{(0)}_{:,2}, ..., \Jb^{(0)}_{:,d}, \Jb^{(1)}_{:,1}, \Jb^{(1)}_{:,2}, ..., \Jb^{(1)}_{:,d}\}\) are nonzero and linearly independent with probability 1 (Lemma ~\ref{linind:sphind}),

\begin{align*}
    (c^{(0)}_1 - c^{(0)'}_1) = (c^{(0)}_2 - c^{(0)'}_2) = ... = (c^{(0)}_d - c^{(0)'}_d)\\
    = (c^{(1)}_1 - c^{(1)'}_1) = (c^{(1)}_2 - c^{(1)'}_2) = ... = (c^{(1)}_d - c^{(1)'}_d) = 0
\end{align*}

Hence, it follows that \(\ub_1 = \wb_1, \ub_2 = \wb_2, ..., \ub_d = \wb_d\) and \( \VV = \Sbb_1 \bigoplus \Sbb_2 \bigoplus ... \Sbb_d\).

\end{proof}

\begin{lemma}[Injectivity of composition of \(2^d\) affine maps]
\label{inj:2d}
Consider maps \(\fb: \RR^d \to \RR^m\) as defined in ~\ref{map:definition:2d}. Such a map \(\fb\) is injective.

\end{lemma}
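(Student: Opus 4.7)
The plan is to mirror the strategy already used to prove Lemma~\ref{inj:comp:2} (injectivity of the two–piece case), reducing the statement to injectivity of each coordinate–wise function via the direct–sum decomposition of the image. Concretely, the map \(\fb: \RR^d \to \RR^m\) of Definition~\ref{map:definition:2d} decomposes as \(\fb(\sb) = \sum_{k=1}^d \fb_k(s_k)\), where each \(\fb_k:\RR\to\RR^m\) is a one–dimensional piecewise affine map with a single breakpoint at \(s_k = t_k\). Lemma~\ref{direct:sum:subspace:2d} already tells us that \(\VV = \mathrm{span}(\mathrm{Im}(\fb))\) is the (internal) direct sum of the subspaces \(\Sbb_k = \mathrm{span}(\mathrm{Im}(\fb_k))\) with probability one, so Lemma~\ref{inj:coord:to:full} will immediately promote injectivity of each \(\fb_k\) to injectivity of \(\fb\). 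Thus the whole proof reduces to showing that each coordinate–wise map \(\fb_k\) is injective almost surely.

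To show injectivity of a fixed \(\fb_k\), I argue by contradiction: assume \(s_k^{(1)}\ne s_k^{(2)}\) with \(\fb_k(s_k^{(1)}) = \fb_k(s_k^{(2)})\), and split into three exhaustive cases according to the position of \(s_k^{(1)}, s_k^{(2)}\) relative to the breakpoint \(t_k\).
\begin{enumerate}[(a)]
  \item If both \(s_k^{(1)}, s_k^{(2)} \le t_k\), then \(\fb_k\) is linear with generator \(\Jb^{(0)}_{:,k}\), which is non-zero with probability one by Lemma~\ref{linind:sphind}, giving \(s_k^{(1)} = s_k^{(2)}\).
  \item If both \(s_k^{(1)}, s_k^{(2)} > t_k\), the same argument applies with \(\Jb^{(1)}_{:,k}\) in place of \(\Jb^{(0)}_{:,k}\).
  \item If exactly one of them, say \(s_k^{(1)}\), lies at or below \(t_k\) while \(s_k^{(2)} > t_k\), then equating the two expressions and rearranging yields
  \begin{equation*}
    \Jb^{(0)}_{:,k}\bigl(s_k^{(1)} - t_k\bigr) \;=\; \Jb^{(1)}_{:,k}\bigl(s_k^{(2)} - t_k\bigr).
  \end{equation*}
  Since \(\Jb^{(0)}_{:,k}\) and \(\Jb^{(1)}_{:,k}\) are linearly independent with probability one (Lemma~\ref{linind:sphind}), both scalar coefficients must vanish, forcing \(s_k^{(1)} = s_k^{(2)} = t_k\), contradicting \(s_k^{(2)} > t_k\).
\end{enumerate}
This exhausts all cases, so each \(\fb_k\) is injective almost surely.

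Combining the two ingredients finishes the proof: with probability one, all \(d\) coordinate functions are injective, and the direct-sum decomposition of Lemma~\ref{direct:sum:subspace:2d} together with Lemma~\ref{inj:coord:to:full} lifts this to injectivity of \(\fb\) on \(\RR^d\). The main obstacle is really just the mixed case~(c): one must check carefully that the single off-diagonal breakpoint does not secretly identify two distinct values of \(s_k\), and the resolution relies entirely on the almost-sure pairwise linear independence of \(\Jb^{(0)}_{:,k}\) and \(\Jb^{(1)}_{:,k}\) granted by the isotropic sampling and the \(m \gg d\) assumption via Lemma~\ref{linind:sphind}. Everything else is bookkeeping, paralleling the two–piece argument of Lemma~\ref{inj:comp:2} coordinate by coordinate.
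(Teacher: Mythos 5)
Your proposal is correct and follows essentially the same route as the paper: decompose \(\fb\) into coordinate-wise functions \(\fb_k\), prove injectivity of each \(\fb_k\) by a three-case contradiction argument using the almost-sure non-vanishing and pairwise linear independence of \(\Jb^{(0)}_{:,k}, \Jb^{(1)}_{:,k}\) from Lemma~\ref{linind:sphind}, and then lift this to injectivity of \(\fb\) via Lemma~\ref{direct:sum:subspace:2d} and Lemma~\ref{inj:coord:to:full}. The mixed case is handled identically (subtracting \(\Jb^{(0)}_{:,k}t_k\) to reduce to a linear independence statement), so there is no substantive difference between your argument and the one in the paper.
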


\begin{proof}

Consider \(\fb: \RR^d \to \RR^m\) written as a sum of coordinate-wise functions (Definition ~\ref{map:definition:2d}), \(\fb(\sb)= \sum^d_{k=1}f_k(s_k) \), where \[ \fb_k(s_k) =  \begin{cases}
    \Jb^{(0)}_{:, k}s_k\,, & s_k \le t_k\,,\\
    \Jb^{(1)}_{:, k}(s_k - t_k) + \Jb^{(0)}_{:, k}t_k\,, & s_k > t_k\,.
    \end{cases}  \]
    
In the following proof, we show injectivity of the coordinate-wise functions \(\fb_k: \RR \to \RR^m \; \forall k \in [d]\) and conclude that \(\fb: \RR^d \to \RR^m\) is injective by Lemma ~\ref{direct:sum:subspace:2d} and Lemma ~\ref{inj:coord:to:full}. For a particular \(k\), to show that \(\fb_k: \RR \to \RR^m\) is injective, we need to show the following:

\begin{equation}
    \label{inj:2d}
    \forall s^{(1)}_k, s^{(2)}_k \in \RR : \quad \fb_k(s^{(1)}_k) = \fb_k(s^{(2)}_k) \implies s^{(1)}_k = s^{(2)}_k
\end{equation}

As usual, we show (~\ref{inj:2d}) by contradiction. Let
\begin{equation}
    \label{contradiction:2d}
    \exists s_k^{(1)} \ne s_k^{(2)} \in \RR s.t. \fb_k(s_k^{(1)}) = \fb_k(s_k^{(2)}) 
\end{equation}

Consider the mutually exclusive, and exhaustive cases:

\begin{enumerate}
    \item \(s_k^{(1)} \ne s_k^{(2)} \in \RR, s_k^{(1)}, s_k^{(2)} \le t_k\)
    
    \begin{align*}
        \fb_k(s_k^{(1)}) &= \fb_k(s_k^{(2)}) \\
        \Jb^{(0)}_{:, k}s_k^{(1)} &= \Jb^{(0)}_{:, k}s_k^{(2)}\\
        \implies s_k^{(1)} &= s_k^{(2)} & \Jb^{(0)} \ne \mathbf{0} \text{ w. p. }1
    \end{align*}
    
    Thus, for \(s_k^{(1)}, s_k^{(2)} \le t_k\), \(\fb_k(s_k^{(1)}) = \fb_k(s_k^{(2)}) \implies s_k^{(1)} = s_k^{(2)}\), which contradicts (\ref{contradiction:2d}).
    
    \item \(s_k^{(1)} \ne s_k^{(2)} \in \RR, s_k^{(1)}, s_k^{(2)} > t_k\)
    
    \begin{align*}
        \fb_k(s_k^{(1)}) &= \fb_k(s_k^{(2)}) \\
        \Jb^{(1)}_{:, k}(s_k^{(1)} - t_k) + \Jb^{(0)}_{:, k}t_k &= \Jb^{(1)}_{:, k}(s_k^{(2)} - t_k) + \Jb^{(0)}_{:, k}t_k\\
        \Jb^{(1)}_{:, k}s_k^{(1)} &= \Jb^{(1)}_{:, k}s_k^{(2)}\\
        \implies s_k^{(1)} &= s_k^{(2)} & \Jb^{(1)} \ne \mathbf{0} \text{ w. p. }1
    \end{align*}
    
    Thus, for \(s_k^{(1)}, s_k^{(2)} > t_k\), \(\fb_k(s_k^{(1)}) = \fb_k(s_k^{(2)}) \implies s_k^{(1)} = s_k^{(2)}\), which contradicts (\ref{contradiction:2d}).
    
    \item \(s_k^{(1)} \ne s_k^{(2)} \in \RR, s_k^{(1)} \le t_k, s_k^{(2)} > t_k \)
    
    \begin{align*}
        \fb_k(s_k^{(1)}) &= \fb_k(s_k^{(2)}) \\
        \Jb^{(0)}_{:, k}s_k^{(1)} &= \Jb^{(1)}_{:, k}(s_k^{(2)} - t_k) + \Jb^{(0)}_{:, k}t_k\\
        \left [\Jb^{(0)}_{:, k}  -\Jb^{(1)}_{:, k}\right ]\begin{bmatrix} s_k^{(1)} - t_k \\
        s_k^{(2)} - t_k\end{bmatrix} &= \mathbf{0} \\
        \implies s_k^{(1)} = s_k^{(2)} = t_k & \quad \because \left [\Jb^{(0)}_{:, k}  -\Jb^{(1)}_{:, k}\right ] \text{ is full column-rank w.p. 1}
    \end{align*}
    We arrive at a contradiction since \(s_k^{(2)} > t_k\).
    
\end{enumerate}

\(\fb_k: \RR \to \RR^m \; \forall k\) is injective for the exhaustive cases for \(s_k^{(1)}, s_k^{(2)} \in \RR\) in the above-mentioned points and hence, is injective.

We now show that the injectivity of the coordinate-wise functions \(\fb_k: \RR \to \RR^d\) implies the injectivity of \(\fb: \RR^d \to \RR^m\), \(\fb(\sb)= \sum^d_{k=1}\fb_k(s_k) \). Observe that by definition ~\ref{map:definition:2d}

\begin{enumerate}
    \item \(\Sbb_1\) = span(Im(\(\fb_1\))) = span(\(\Jb^{(0)}_{:,1}, \Jb^{(1)}_{:,1}\))
    \item \(\Sbb_2\) = span(Im(\(\fb_2\))) = span(\(\Jb^{(0)}_{:,1}, \Jb^{(1)}_{:,1}\))\\
    \vdots
    \item \(\Sbb_d\) = span(Im(\(\fb_d\))) = span(\(\Jb^{(0)}_{:,1}, \Jb^{(1)}_{:,1}\))
\end{enumerate}

Consider \(\VV = \text{span}(\text{Im}(\fb))\). By Lemma ~\ref{direct:sum:subspace:2d}, \(\VV = \Sbb_1 \bigoplus \Sbb_2 \bigoplus ... \Sbb_d\). Further, by Lemma ~\ref{inj:coord:to:full}, injectivity of \(\fb_k: \RR^d \to \RR^m \forall k \in [d]\) implies injectivity of \(\fb: \RR^d \to \RR^m\).

\end{proof}

We proceed to define a smooth approximation to the map, \(\fb: \RR^d \to \RR^m\) defined in ~\ref{map:definition:2d}.

\begin{definition}
    \label{def:smooth:2d}
    Consider the decomposition of \(\fb: \RR^d \to \RR^m\) as a sum of coordinate-wise functions, \(\fb(\sb)= \sum^d_{k=1}f_k(s_k) \), \(\sb = (s_1, s_2, ..., s_d) \in \RR^d\) where \( \fb_k(s_k) = \left. \begin{cases}
    \Jb^{(0)}_{:, k}s_k & s_k \le t_k\\
    \Jb^{(1)}_{:, k}(s_k - t_k) + \Jb^{(0)}_{:, k}t_k & s_k > t_k
    \end{cases} \right \} \).
    \(\fb_k: \RR \to \RR^m\) can be alternatively written as:
    
    \begin{equation*}
        \fb_k(s_k) = (\Jb^{(0)}_{:, k}s_k)1_{s_k \le t_k} + (\Jb^{(1)}_{:, k}(s_k - t_k) + \Jb^{(0)}_{:, k}t_k)1_{s_k > t_k}
    \end{equation*}
    
    We define the smoothened version of \(\fb: \RR^d \to \RR^m\) as \(\tilde{\fb}_\epsilon(\sb) = \sum_{k=1}^d\tilde{\fb}_{\epsilon, k}(s_k)\) for \(\epsilon > 0\) arbitrarily small where
    
    \begin{equation*}
        \tilde{\fb}_{\epsilon, k}(s_k) = (\Jb^{(0)}_{:, k}s_k)\tilde{1}_{\epsilon}(t_k -  s_k) + (\Jb^{(1)}_{:, k}(s_k - t_k) + \Jb^{(0)}_{:, k}t_k)\tilde{1}_{\epsilon}(s_k -  t_k)
    \end{equation*}
\end{definition}

As with the case of defining maps \(\fb: \RR^d \to \RR^m\) by smoothly joining \textit{two} affine maps (Definition ~\ref{def:smooth:2}, Lemma ~\ref{lemma:smooth:2}), before we present the theorem with the high probability bound on the global IMA contrast, \(C_{\textsc{ima}}(\tilde{\fb}_{\epsilon}, p_\sb)\) for any finite probability density, \(p_\sb\) on \(\RR^d\), we introduce a lemma to show that maps \(\tilde{\fb}_{\epsilon}: \RR^d \to \RR^m\) defined in ~\ref{def:smooth:2} are continuous, injective and continuously differentiable. The objective of the following lemma is to ensure that the Jacobian of \(\tilde{\fb}_{\epsilon}\), \(\Jb_{\tilde{\fb}_{\epsilon}} \in \RR^{m \times d}\) , is well-defined for at all points in the domain of \(\tilde{\fb}_{\epsilon}\) such that the IMA contrast, \(C_{\textsc{ima}}(\tilde{\fb}_{\epsilon}, p_{\sb})\) can be computed for maps, \(\tilde{\fb}_{\epsilon}: \RR^d \to \RR^m\), can be computed with respect to all finite distribuitions, \(p_{\sb}\) on \(\RR^d\). 

\begin{lemma}

\label{lemma:smooth:2d}
Functions \(\tilde{\fb}_{\epsilon}: \RR^d \to \RR^m\) defined in ~\ref{def:smooth:2d} are continuously differentiable in \(\RR^d\), in addition to being continuous and injective, with \(\epsilon > 0\) arbitrarily small.

\end{lemma}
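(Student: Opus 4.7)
The plan is to mirror the three-part structure of Lemma~\ref{lemma:smooth:2} (continuity, injectivity, continuous differentiability), exploiting the coordinate-wise decomposition \(\tilde{\fb}_\epsilon(\sb) = \sum_{k=1}^d \tilde{\fb}_{\epsilon,k}(s_k)\) from Definition~\ref{def:smooth:2d}. Because the axis-aligned partition and the resampling rule reduce everything to one-dimensional behaviour per coordinate, the argument for the \(2^d\)-piece case is essentially \(d\) independent copies of the two-piece argument.

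First, I would establish continuity of each \(\tilde{\fb}_{\epsilon,k}:\RR\to\RR^m\). The map is a sum of products of affine functions (continuous) with \(\tilde{1}_\epsilon\) evaluated on affine arguments (continuous by Definition~\ref{smooth:step}), hence continuous by the standard algebra of continuous functions (Theorem 4.9 in Rudin). Summing over \(k\) yields continuity of \(\tilde{\fb}_\epsilon\), and coordinate-wise continuity then gives continuity as a map into \(\RR^m\) (Theorem 4.10 in Rudin).

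Second, for injectivity I would use the same rewriting trick as in Lemma~\ref{lemma:smooth:2}. Using the identity \(\tilde{1}_\epsilon(s)+\tilde{1}_\epsilon(-s)=1\), one can express \(\tilde{\fb}_{\epsilon,k}(s_k) = [\Jb^{(0)}_{:,k}\ \Jb^{(1)}_{:,k}]\,\tb_k(s_k)\) where
\[
\tb_k(s_k) = \begin{bmatrix} s_k - (s_k - t_k)\tilde{1}_\epsilon(s_k-t_k)\\ (s_k - t_k)\tilde{1}_\epsilon(s_k-t_k)\end{bmatrix}\!.
\]
By Lemma~\ref{linind:sphind} the matrix \([\Jb^{(0)}_{:,k}\ \Jb^{(1)}_{:,k}]\) has full column rank with probability one, so \(\tilde{\fb}_{\epsilon,k}(s_k^{(1)}) = \tilde{\fb}_{\epsilon,k}(s_k^{(2)})\) forces \(\tb_k(s_k^{(1)}) = \tb_k(s_k^{(2)})\); the first coordinate of \(\tb_k\) is strictly monotone in \(s_k\) (its derivative equals \(1 - \tilde{1}_\epsilon - (s_k-t_k)\tilde{1}_\epsilon'\), which a short case analysis on the three pieces of Definition~\ref{smooth:step} shows is strictly positive), forcing \(s_k^{(1)} = s_k^{(2)}\). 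Thus each \(\tilde{\fb}_{\epsilon,k}\) is injective, and by Lemma~\ref{direct:sum:subspace:2d} the spans \(\Sbb_k = \mathrm{span}\{\Jb^{(0)}_{:,k},\Jb^{(1)}_{:,k}\}\) are in direct sum, so Lemma~\ref{inj:coord:to:full} lifts injectivity of the coordinate-wise pieces to injectivity of \(\tilde{\fb}_\epsilon\).

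Third, for continuous differentiability I would note that since \(\tilde{\fb}_\epsilon\) decomposes coordinate-wise, \(\partial \tilde{\fb}_\epsilon/\partial s_k = \tilde{\fb}_{\epsilon,k}'(s_k)\) is a function of \(s_k\) only, and explicitly
\[
\tilde{\fb}_{\epsilon,k}'(s_k) = [\Jb^{(0)}_{:,k}\ \Jb^{(1)}_{:,k}]\begin{bmatrix} 1 - \tilde{1}_\epsilon(s_k-t_k) - (s_k-t_k)\tilde{1}_\epsilon'(s_k-t_k)\\ \tilde{1}_\epsilon(s_k-t_k) + (s_k-t_k)\tilde{1}_\epsilon'(s_k-t_k)\end{bmatrix}\!.
\]
By Definition~\ref{smooth:step}, \(\tilde{1}_\epsilon'\) is a cosine on \((-\epsilon,\epsilon]\) and zero elsewhere, hence continuous on \(\RR\); therefore all scalar entries above are continuous. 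The only potentially delicate term is \((s_k - t_k)\tilde{1}_\epsilon'(s_k-t_k)\) as \(\epsilon\to 0\): it is supported on \(|s_k-t_k|\le\epsilon\), and on that interval \(|(s_k-t_k)\tilde{1}_\epsilon'(s_k-t_k)| \le \epsilon\cdot\frac{\pi}{4\epsilon} = \pi/4\), so it remains uniformly bounded and continuous for arbitrarily small \(\epsilon>0\). All partial derivatives being continuous, \(\tilde{\fb}_\epsilon\) is continuously differentiable (Theorem 9.21 in Rudin).

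The main obstacle I anticipate is the injectivity step, specifically verifying that the scalar map \(s_k \mapsto \tb_k(s_k)\) is one-to-one across the smoothed boundary at \(t_k\); everything else is a direct transcription of the \(d=1\) boundary argument from Lemma~\ref{lemma:smooth:2} combined with the already-proved direct-sum decomposition from Lemma~\ref{direct:sum:subspace:2d}.
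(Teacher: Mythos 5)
Your overall structure mirrors the paper's proof exactly (coordinate-wise continuity, the rewriting \(\tilde{\fb}_{\epsilon,k}=[\Jb^{(0)}_{:,k}\ \Jb^{(1)}_{:,k}]\,\tb_k(s_k)\) plus Lemmas~\ref{direct:sum:subspace:2d} and~\ref{inj:coord:to:full} for injectivity, and Theorem 9.21 of Rudin for \(C^1\)). The continuity and \(C^1\) parts are correct and the bound \(|(s_k-t_k)\tilde{1}_\epsilon'(s_k-t_k)|\le\pi/4\) is a nice tightening of the paper's ``finite.''

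However, your injectivity step has a concrete error. You claim the first coordinate
\[
q_1(s_k) \;=\; s_k - (s_k-t_k)\,\tilde{1}_\epsilon(s_k-t_k)
\]
of \(\tb_k\) is strictly monotone with derivative \(1-\tilde{1}_\epsilon-(s_k-t_k)\tilde{1}_\epsilon'>0\). This is false. For \(s_k-t_k>\epsilon\) we have \(\tilde{1}_\epsilon\equiv 1\) and \(\tilde{1}_\epsilon'\equiv 0\), so \(q_1'(s_k)=0\) and in fact \(q_1\equiv t_k\) (constant) on that half-line. Worse, setting \(\theta=\pi(s_k-t_k)/(2\epsilon)\in(-\pi/2,\pi/2]\) gives \(q_1'=\tfrac12(1-\sin\theta-\theta\cos\theta)\), which is \emph{negative} for \(\theta\) near \(\pi/4\) (e.g.\ \(\approx -0.13\) at \(\theta=\pi/4\)). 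So the proposed ``short case analysis showing strict positivity'' would not go through. The injectivity of \(\tb_k:\RR\to\RR^2\) is still true, but for a different and much simpler reason which the paper leaves implicit: the two coordinates of \(\tb_k\) sum to \(s_k\), i.e.\ \(q_1(s_k)+q_2(s_k)=s_k\) identically. Hence \(\tb_k(s^{(1)}_k)=\tb_k(s^{(2)}_k)\) forces \(s^{(1)}_k=q_1(s^{(1)}_k)+q_2(s^{(1)}_k)=q_1(s^{(2)}_k)+q_2(s^{(2)}_k)=s^{(2)}_k\). Replace your monotonicity argument with this one-line observation and the proof is correct.
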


\begin{proof}

We show that \(\tilde{\fb}_\epsilon: \RR^d \to \RR^m\) defined in ~\ref{def:smooth:2d} is continuous, injective and continuously differentiable. 

\textbf{Continuity of }\(\tilde{\fb}_\epsilon: \RR^d \to \RR^m\)

Consider the coordinate-wise decomposition of \(\tilde{\fb}_\epsilon; \tilde{f}_{\epsilon, i}(\sb) = \sum^d_{k=1}\tilde{f}_{\epsilon, (i,k)}(s_k)\), where \(\tilde{f}_{\epsilon, (i,k)}: \RR \to \RR\) is defined as \(\tilde{f}_{\epsilon, (i,k)}(s_k) \coloneqq (\Jb^{(0)}_{i, k}s_k)\tilde{1}_{\epsilon}(t_k -  s_k) + (\Jb^{(1)}_{i, k}(s_k - t_k) + \Jb^{(0)}_{i, k}t_k)\tilde{1}_{\epsilon}(s_k -  t_k) \)

\((\Jb^{(0)}_{m, k}s_k), (\Jb^{(1)}_{m, k}(s_k - t_k) + \Jb^{(0)}_{m, k}t_k)\) are continuous in \(s_k \in \RR \; \forall k \in [d]\) since they are affine.

Note that \(\tilde{1}: \RR \to \RR\) is continuous by definition. \(\tilde{1}_{\epsilon}(t_k -  s_k), \tilde{1}_{\epsilon}(s_k -  t_k)\) are compositions of a continuous function with affine functions (thereby continuous), and hence are continuous (Theorem 4.9, ~\citep{rudin1964principles}). 

\(\tilde{f}_{\epsilon, i}: \RR^d \to \RR\), being a sum of continuous functions, is continuous for all \(i \in [m]\) (Theorem 4.9, ~\citep{rudin1964principles}).

Since the coordinate functions of \(\tilde{\fb}: \RR^d \to \RR^m\), \(\tilde{f}_i: \RR^d \to \RR\) are continuous, \(\tilde{\fb}\) is continuous (Theorem 4.10, ~\citep{rudin1964principles}).

\textbf{Injectivity of }\(\tilde{\fb}_\epsilon: \RR^d \to \RR^m\)

Consider the coordinate-wise functions as defined in ~\ref{def:smooth:2d}, \\ \(\tilde{\fb}_\epsilon(\sb) = \sum_{k=1}^d\tilde{\fb}_{\epsilon, k}(s_k), \; \tilde{\fb}_{\epsilon, k}(s_k) = (\Jb^{(0)}_{:, k}s_k)\tilde{1}_{\epsilon}(t_k -  s_k) + (\Jb^{(1)}_{:, k}(s_k - t_k) + \Jb^{(0)}_{:, k}t_k)\tilde{1}_{\epsilon}(s_k -  t_k)\). We now show that all the coordinate-wise functions, \(\tilde{\fb}_{\epsilon, k}: \RR \to \RR^m \; \forall k \in [d]\) are injective. The following proof is the same as the proof of injectivity of \(\tilde{\fb}_{\epsilon, d}: \RR \to \RR^m\) in Lemma ~\ref{lemma:smooth:2}.

 \begin{align}
        \tilde{\fb}_{\epsilon, k}(s_k) &= \Jb^{(0)}_{:,k}s_k\tilde{1}_\epsilon(t_k - s_k) + (\Jb^{(1)}_{:,k}(s_k - t_k) + \Jb^{(0)}_{:,k}t_k)\tilde{1}_\epsilon(s_k - t_k) \nonumber \\
        &= \Jb^{(0)}_{:,k}s_k(1 -\tilde{1}_\epsilon(s_k - t_k)) + (\Jb^{(1)}_{:,k}(s_k - t_k) + \Jb^{(0)}_{:,k}t_k)\tilde{1}_\epsilon(s_k - t_k) \nonumber \\
        & \because \tilde{1}_\epsilon(s_k) + \tilde{1}_\epsilon(-s_k) = 1 \nonumber \\
        &= \Jb^{(0)}_{:,k}(s_k - (s_k - t_k)\tilde{1}_\epsilon(s_k - t_k)) + \Jb^{(0)}_{:,k}(s_k - t_k)\tilde{1}_\epsilon(s_k - t_k) \nonumber \\
        &= [\Jb^{(0)}_{:,k} \;\Jb^{(0)}_{:,k}]\begin{bmatrix} 
            s_k - (s_k - t_k)\tilde{1}_\epsilon(s_k - t_k)\\
            (s_k - t_k)\tilde{1}_\epsilon(s_k - t_k)
        \end{bmatrix} \label{smooth:comp:fn:2d}
    \end{align}
    
    Define \(\tb_k: \RR \to \RR^d\) such that \(\tb_k(s_k) = \begin{bmatrix} 
            s_k - (s_k - t_k)\tilde{1}_\epsilon(s_k - t_k)\\
            (s_k - t_k)\tilde{1}_\epsilon(s_k - t_k)
            \end{bmatrix}\). To show that \(\tilde{\fb}_{\epsilon, k}: \RR \to \RR^m\) is injective, we need to show the following:

    \begin{equation}
    \label{inj:smooth:2d}
        \forall s^{(1)}_k, s^{(2)}_k \in \RR: \; \tilde{\fb}_{\epsilon, k}(s^{(1)}_k) = \tilde{\fb}_{\epsilon, k}(s^{(2)}_k) \implies s^{(1)}_k = s^{(2)}_k\\
    \end{equation}
    
    As usual, we show (~\ref{inj:smooth:2d}) by contradiction. Let 
    \begin{equation}
        \label{cont:smooth:2d}
        \exists  s^{(1)}_k \ne s^{(2)}_k \in \RR \text{ s.t. }\tilde{\fb}_{\epsilon, d}(s^{(1)}_k) = \tilde{\fb}_{\epsilon, k}(s^{(2)}_k)
    \end{equation}
    Then we obtain
    \begin{align*}
        \tilde{\fb}_{\epsilon, k}(s^{(1)}_k) &= \tilde{\fb}_{\epsilon, k}(s^{(2)}_k)\\
        [\Jb^{(0)}_{:,k} \;\Jb^{(0)}_{:,k}]\tb(s^{(1)}_k) &= [\Jb^{(0)}_{:,k} \;\Jb^{(0)}_{:,k}]\tb(s^{(2)}_k)\\
        \implies \tb(s^{(1)}_k) &= \tb(s^{(2)}_k) \\
        & \quad \mbox{ because } [\Jb^{(0)}_{:,k} \;\Jb^{(0)}_{:,k}] \text{ is full column rank (Lemma ~\ref{linind:sphind}})\\
        \begin{bmatrix} 
            s^{(1)}_k - (s^{(1)}_k - t_k)\tilde{1}_\epsilon(s^{(1)}_k - t_k)\\
            (s^{(1)}_k - t_k)\tilde{1}_\epsilon(s^{(1)}_k - t_k)
        \end{bmatrix} &= \begin{bmatrix} 
            s^{(2)}_k - (s^{(2)}_k - t_k)\tilde{1}_\epsilon(s^{(2)}_k - t_k)\\
            (s^{(2)}_k - t_k)\tilde{1}_\epsilon(s^{(2)}_k - t_k)
        \end{bmatrix}\\
        \implies s^{(1)}_k = s^{(2)}_k
    \end{align*}
    
    Hence, we arrive at a contradiction to (~\ref{cont:smooth:2d}). Thereby, \(\tilde{\fb}_{\epsilon, k}: \RR \to \RR^m\) is injective \(\forall k \in [d]\).

    We now show the above statement implies that the injectivity of the coordinate-wise functions \(\tilde{\fb}_{\epsilon, k}: \RR \to \RR^m\) implies the injectivity of \(\tilde{\fb}_{\epsilon}: \RR^d \to \RR^m\), \(\tilde{\fb}_{\epsilon}(\sb)= \sum^d_{k=1}\tilde{\fb}_{\epsilon, k}(s_k) \). Observe that by definition ~\ref{def:smooth:2d}

    \begin{enumerate}
        \item \(\Sbb_1\) = span(Im(\(\tilde{\fb}_{\epsilon, 1}\))) = span(\(\Jb^{(0)}_{:,1}, \Jb^{(1)}_{:,1}\))
        \item \(\Sbb_2\) = span(Im(\(\tilde{\fb}_{\epsilon, 2}\))) = span(\(\Jb^{(0)}_{:,1}, \Jb^{(1)}_{:,1}\))\\
        \vdots
        \item \(\Sbb_d\) = span(Im(\(\tilde{\fb}_{\epsilon, d}\))) = span(\(\Jb^{(0)}_{:,1}, \Jb^{(1)}_{:,1}\))
    \end{enumerate}
    
    Consider \(\VV = \text{span}(\text{Im}(\tilde{\fb}_{\epsilon}))\). By Lemma ~\ref{direct:sum:subspace:2d}, \(\VV = \Sbb_1 \bigoplus \Sbb_2 \bigoplus ... \Sbb_d\). Further, by Lemma ~\ref{inj:coord:to:full}, injectivity of \(\tilde{\fb}_{\epsilon, k}: \RR \to \RR^m \forall k \in [d]\) implies injectivity of \(\tilde{\fb}_{\epsilon}: \RR^d \to \RR^m\).

    \textbf{Continuity of derivatives of }\(\tilde{\fb}_\epsilon: \RR^d \to \RR^m\)

Consider the derivatives of \(\tilde{\fb}_\epsilon(\sb)\) with respect to the coordinates of \(\sb = (s_1, s_2, ..., s_d)\). Since \(\tilde{\fb}_{\epsilon}(\sb)= \sum^d_{k=1}\tilde{\fb}_{\epsilon, k}(s_k)\), \(\frac{\partial \tilde{\fb}_{\epsilon}(\sb)}{\partial s_k} = \frac{d \tilde{\fb}_{\epsilon, k}(s_k)}{d s_k } = \tilde{\fb'}_{\epsilon, k}(s_k)\). The proof of continuity of \(\tilde{\fb'}_{\epsilon, k}(s_k)\) is the same as the proof of continuity of \(\tilde{\fb'}_{\epsilon, d}(s_d)\) as in Lemma ~\ref{lemma:smooth:2} and is rewritten here for easy readability.

 By (~\ref{smooth:comp:fn:2d}),
    
    \begin{align*}
        \tilde{\fb}_{\epsilon, k}(s_k) &= [\Jb^{(0)}_{:,k} \;\Jb^{(1)}_{:,k}]\begin{bmatrix} 
            s_k - (s_k - t_k)\tilde{1}_\epsilon(s_k - t_k)\\
            (s_k - t_k)\tilde{1}_\epsilon(s_k - t_k)
        \end{bmatrix}\\
        \tilde{\fb'}_{\epsilon, k}(s_k) &= [\Jb^{(0)}_{:,k} \;\Jb^{(1)}_{:,k}]\begin{bmatrix} 
            1 - \tilde{1}_\epsilon(s_k - t_k) - (s_k - t_k)\tilde{1'}_\epsilon(s_k - t_k)\\
            \tilde{1}_\epsilon(s_k - t_k) + (s_k - t_k)\tilde{1'}_\epsilon(s_k - t_k)
        \end{bmatrix}\\
    \end{align*}
    
    where by definition ~\ref{smooth:step} \(\tilde{1'}_\epsilon(s) = \begin{cases} 0 & s \le -\epsilon\\
                                          \frac{1}{2}\cos \left (\frac{\pi s}{2\epsilon} \right )\frac{\pi}{2\epsilon}   & -\epsilon < s \le \epsilon\\
                                          0 & s > \epsilon\end{cases}\). 
  Notice that \(\tilde{1'}_\epsilon: \RR \to \RR\) is continuous in \(\RR.\) \(\tilde{\fb'}_{\epsilon, k}(s_k)\) is continuous since it is composed by a sum and product of continuous functions (Theorem 4.9, ~\citep{rudin1964principles}). Notice also that the term \((s_k - t_k)\tilde{1'}_\epsilon(s_k - t_k) = \frac{1}{2}\cos \left (\frac{\pi (s_k - t_k)}{2\epsilon} \right )\frac{\pi}{2\epsilon}.(s_k - t_k)\) is non-zero only when \(-\epsilon < (s_k - t_k) \le \epsilon\), hence this term is finite even for \(\epsilon > 0\) arbitrarily small. The other terms in \(\tilde{\fb'}_{\epsilon, k}(s_k)\) are also finite be definition. Thus, the derivatives \(\frac{\partial \tilde{f}_{\epsilon, i}}{\partial s_k} \forall i \in [m], k \in [d]\) are continuous for \(\epsilon > 0\) arbitrarily small.

Since all the partial derivatives of \(\tilde{\fb}_\epsilon: \RR^d \to \RR^m\) are continuous, \(\tilde{\fb}\) is continuously differentiable (Theorem 9.21, ~\citep{rudin1964principles}).

\end{proof}

We now present the theorem that introduces a bound on the global IMA contrast for non-affine maps, \(\tilde{\fb}_\epsilon: \RR^d \to \RR^m, m \gg d\), composed by smoothly joining \(2^d\) affine maps with local bases sampled isotropically as defined here ~\ref{def:smooth:2d}.

\begin{theorem}

\label{global:ima:thm:2d}
Consider the map \(\tilde{\fb}_\epsilon: \RR^d \to \RR^m\) sampled randomly from the procedure ~\ref{def:smooth:2d}. 

Then, the map \(\tilde{\fb}_\epsilon: \RR^d \to \RR^m\), for \(\epsilon > 0\) arbitrarily small and any finite probability density, \(p_\sb\), defined over \(\RR^d\) satisfies the following bound on the global IMA contrast \(C_{\textsc{ima}}(\tilde{\fb}_\epsilon, p_\sb)\), \(C_{\textsc{ima}}(\tilde{\fb}_\epsilon, p_\sb) \le \delta \) with (high) probability \(\ge 1 - \min \left\{1, \exp(2\log d - \kappa(m -1)\frac{\delta^2}{d^2})\right \}\) for \(m \gg d\) where \(\delta < \frac{1}{2}\) is arbitrarily small.

\end{theorem}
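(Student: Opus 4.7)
The plan is to mirror the structure of the proof of Theorem~\ref{global:ima:thm:2}, replacing the two-half-space partition by the $2^d$ axis-aligned grid cells induced by the thresholds $t_1,\dots,t_d$. First I would split the domain into an interior region $\mathcal{I} = \{\sb : |s_k - t_k| > \epsilon \text{ for all } k\}$ and a boundary slab $\mathcal{B} = \RR^d \setminus \mathcal{I}$. On $\mathcal{I}$, because of the coordinate-wise additive structure of $\tilde{\fb}_\epsilon$ (Definition~\ref{def:smooth:2d}), the Jacobian is piecewise constant and equal to some $\Jb^{(\bb)}$ whose $k$-th column is $\Jb^{(0)}_{:,k}$ if $s_k \le t_k - \epsilon$ and $\Jb^{(1)}_{:,k}$ if $s_k > t_k + \epsilon$. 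On $\mathcal{B}$ each coordinate is either outside its smoothing interval (contributing one of the two sampled columns) or inside it (contributing a convex combination of $\Jb^{(0)}_{:,k}$ and $\Jb^{(1)}_{:,k}$ exactly as analyzed in Theorem~\ref{global:ima:thm:2}).

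Second, I would bound the contribution from $\mathcal{B}$. The slab $\mathcal{B}$ is a union of $d$ hyperplane-slabs of thickness $2\epsilon$, so $\lambda(\mathcal{B}) = \Theta(\epsilon)$, and therefore $\int_{\mathcal{B}} p_\sb\, d\sb = \Theta(\epsilon)$ since $p_\sb$ is finite. Exactly as in the proof of Theorem~\ref{global:ima:thm:2}, $c_{\textsc{ima}}$ remains finite on $\mathcal{B}$: the $k$-th Jacobian column equals $\Jb^{(0)}_{:,k} q(t_k - s_k) + \Jb^{(1)}_{:,k} q(s_k - t_k)$ with $q(s) = s\tilde{1}'_\epsilon(s) + \tilde{1}_\epsilon(s)$, and the positivity of $q$ together with the almost-sure linear independence of $\Jb^{(0)}_{:,k}$ and $\Jb^{(1)}_{:,k}$ (Lemma~\ref{linind:sphind}) prevents any column from vanishing. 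Hence $\int_{\mathcal{B}} c_{\textsc{ima}}(\tilde{\fb}_\epsilon, \sb)\, p_\sb\, d\sb = \Theta(\epsilon)$, which can be absorbed into the $\delta$ tolerance for $\epsilon$ small enough relative to $\delta$.

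Third, for the interior contribution I would avoid a naive cell-wise union bound over all $2^d$ cells, which would incur a useless exponential-in-$d$ factor. Instead, I use that $c_{\textsc{ima}}$ on any cell $\mathcal{I}^{(\bb)}$ is determined solely by the column norms and the $\binom{d}{2}$ pairwise inner products of the columns of $\Jb^{(\bb)}$, and that every such pair consists of two vectors from the common pool $\{\Jb^{(0)}_{:,k}, \Jb^{(1)}_{:,k}\}_{k=1}^d$ of $2d$ iid isotropic samples. It therefore suffices to bound all $\binom{2d}{2}$ pairwise inner products of this pool simultaneously. Applying Lévy's lemma and a union bound exactly as in the proof of Theorem~\ref{local:ima:thm}, this gives a failure probability at most $\binom{2d}{2}\exp(-\kappa(m-1)\epsilon^2) \le \exp(2\log(2d) - \kappa(m-1)\epsilon^2)$, which for the translation $\epsilon \leftrightarrow \delta/d$ matches $\exp(2\log d - \kappa(m-1)\delta^2/d^2)$ up to an immaterial constant (absorbable into $\kappa$).

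Combining the pieces yields $C_{\textsc{ima}}(\tilde{\fb}_\epsilon, p_\sb) \le \max_{\sb \in \mathcal{I}} c_{\textsc{ima}}(\tilde{\fb}_\epsilon, \sb) + \Theta(\epsilon) \le \delta$ with the stated probability. The main obstacle I anticipate is the one just flagged: recognizing that the effective number of random objects controlling $c_{\textsc{ima}}$ over the entire grid is $\Theta(d^2)$ rather than $\Theta(2^d)$, because every pair of columns that appears in some cell is drawn only once from $p_\rb$ even though it reappears in many cells. Modulo this shared-randomness observation and the boundary-smoothing bookkeeping, the proof is a direct lift of Theorem~\ref{global:ima:thm:2}.
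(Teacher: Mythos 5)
Your proof follows the paper's own strategy: split the domain into the grid interiors $\bigcup_\bb \mathrm{int}(\PP^{(\bb)})$ and a thin boundary slab $\BB$, show $\BB$ contributes only $\Theta(\epsilon)$ while $c_{\textsc{ima}}$ stays finite there (via the positivity of $q(s)=s\tilde{1}'_\epsilon(s)+\tilde{1}_\epsilon(s)$ and linear independence of the pooled columns), and cap the interior by $\max_\sb c_{\textsc{ima}}$ using Lévy's lemma as in Theorem~\ref{local:ima:thm}. The one thing you do differently is to spell out the shared-randomness observation---that the union bound need only range over the $\Theta(d^2)$ pairwise inner products drawn from the common pool of $2d$ columns, rather than over all $2^d$ cells---which the paper's own proof leaves implicit when it just cites Theorem~\ref{local:ima:thm}, so your version is in fact the more complete justification of that step.
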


\begin{proof}

We show that the condition of Theorem ~\ref{local:ima:thm}, the columns of the Jacobian of \(\tilde{\fb}_\epsilon\) defined in ~\ref{def:smooth:2d} are locally sampled isotropically i.e. , is still satisfied for the domain of \(\tilde{\fb}_\epsilon\), i.e. \(\forall \sb \in \RR^d\) almost surely w.r.t finite probability measure, \(p_\sb\) over \(\RR^d\).

Following from Definition ~\ref{map:definition:2d} and Definition ~\ref{def:smooth:2d}, consider the partition of the domain of \(\tilde{\fb}_\epsilon: \RR^d \to \RR^m\), \(\RR^d\) into the following regions, 

\begin{enumerate}
    \item \(\text{int}(\PP^{(\bb)}) \coloneqq \{\sb \; | \; \bb(\sb) = \bb, s_i \notin (t_i - \epsilon, t_i + \epsilon ] \; \forall i \in [d]\} \; \forall \bb \in \{0, 1\}^d\)
    
    Notice that by Definition ~\ref{map:definition:2d} and Definition ~\ref{def:smooth:2d}, \(\forall \sb \in \text{int}(\PP^{(\bb)}), \; \forall \bb \in \{0, 1\}^d\), \(\Jb_{\tilde{\fb}_\epsilon}(\sb) = \Jb^{(\bb)}\) (defined in ~\ref{map:definition:2d}), where \(\Jb^{(\bb)}_{1}, \Jb^{(\bb)}_{2}, ..., \Jb^{(\bb)}_{d} \overset{i.i.d}{\sim} p_\rb\). Thus, the condition of Theorem ~\ref{local:ima:thm}, the columns of the Jacobian of \(\tilde{\fb}_\epsilon\) are locally sampled isotropically, is still satisfied for these regions.
    
    \item \(\BB \coloneqq \RR^d - \bigcup_{\bb \in \{0, 1\}^d}\text{int}(\PP^{(\bb)})\)
    
    As in the case where the map, \(\tilde{\fb}_\epsilon: \RR^d \to \RR^m\) was defined as the smooth connection of \textit{two} affine maps (Theorem ~\ref{global:ima:thm:2}), the region \(\BB\) sandwiching the boundary of the partitions has arbitrarily small probability measure since: 
    
    \begin{enumerate}
    \item \(\BB\) is an \(\epsilon\)-sandwich of a \((d - 1)\)-dimensional region of a \(d\)-dimensional domain. The Lebesgue measure on \(\BB\) is equal to the volumne element associated with \(\BB\) (3.3, ~\citep{ccinlar2011probability}), thus, \(\lambda(\BB) = \Theta(\epsilon)\)\footnoteref{theta-notation} where \(\lambda(.)\) denotes the Lebesgue measure.
    
    \item \(p_\sb\) is finite at all points.
\end{enumerate}
    Hence, \(p(\BB) = \int_\BB p_\sb\lambda(\sb) = \Theta(\epsilon)\), is arbitrarily small for suitably chosen \(\epsilon\).
    
    Like in Theorem ~\ref{global:ima:thm:2}, to derive a bound on the global IMA contrast of \(\tilde{\fb_\epsilon}\), \(c_{\textsc{ima}}(\tilde{\fb_\epsilon}, p_\sb)\), we need that in region, \(\forall \sb \in \BB\), the value of the local IMA contrast  \(c_{\textsc{ima}}(\tilde{\fb_\epsilon}, \sb)\) is finite. This is equivalent to showing that the Jacobian, \(\Jb_{\tilde{\fb}_\epsilon}\) is full column-rank for all \(\sb \in \BB\). Consider the definition of \(\tilde{\fb}_\epsilon: \RR^d \to \RR^m\) (Definition ~\ref{def:smooth:grid}) in terms of coordinate-wise functions, \(\fb_{\epsilon, k}: \RR \to \RR^m, \forall k \in [d]\).
    \begin{equation*}
        \tilde{\fb}_{\epsilon, k}(s_k) = \Jb^{(0)}_{:,k}s_k\tilde{1}_\epsilon(t_k - s_k) + (\Jb^{(1)}_{:,k}(s_k - t_k) + \Jb^{(0)}_{:,k}t_k)\tilde{1}_\epsilon(s_k - t_k) \quad \forall k \in [d] 
    \end{equation*}

     Consider the \(k\)-th column of \(\Jb_{\tilde{\fb}_\epsilon}\) for any \(k \in [d]\).
    
    \begin{align*}
        \Jb_{\tilde{\fb}_\epsilon, :, k} &= \Jb^{(0)}_{:, k}\tilde{1}_\epsilon(t_k - s_k) + \Jb^{(1)}_{:, k}\tilde{1}_\epsilon(s_k - t_k) -  \Jb^{(0)}_{:, k}s_k\tilde{1'}_\epsilon(t_k - s_k) + (\Jb^{(1)}_{:, k}(s_k - t_k) + \Jb^{(0)}_{:, k}t_k)\tilde{1'}_\epsilon(s_k - t_k)\\
        &= \Jb^{(0)}_{:, k}((t_k - s_k)\tilde{1'}_\epsilon(t_k - s_k) + \tilde{1}_\epsilon(t_k - s_k)) + \Jb^{(1)}_{:, k}((s_k - t_k)\tilde{1'}_\epsilon(s_k - t_k) + \tilde{1}_\epsilon(s_k - t_k))
    \end{align*}
     Observe that \(\Jb_{\tilde{\fb}_\epsilon, :, k}\) is a linear combination of \(\Jb^{(0)}_{:, k}\) and \(\Jb^{(1)}_{:, k} \quad \forall k \in [d]\). Since by Lemma ~\ref{linind:sphind}, \(\Jb^{(0)}_{:, 1}, \Jb^{(1)}_{:, 1}, \Jb^{(0)}_{:, 2}, \Jb^{(1)}_{:, 2}, ..., \Jb^{(0)}_{:, d}, \Jb^{(1)}_{:, d}\) are all nonzero and linearly independent with respect to each other with probability 1, the only possibility for \(\Jb_{\tilde{\fb}_\epsilon}\) to not be full column-rank is for \(k \in [d]\),
     \begin{align*}
        &\Jb_{\tilde{\fb}_\epsilon, :, k} = \mathbf{0}\\
        &\implies (t_k - s_k)\tilde{1'}_\epsilon(t_k - s_k) + \tilde{1}_\epsilon(t_k - s_k) = (s_k - t_k)\tilde{1'}_\epsilon(s_k - t_k) + \tilde{1}_\epsilon(s_k - t_k) = 0\\
        & \quad \quad \quad \because \Jb^{(0)}_{:, k}, \Jb^{(1)}_{:, k} \text{ are linearly independent.}
    \end{align*}
    Consider the function, \(q: \RR \to \RR\) such that \(q(s) = s\tilde{1'}_\epsilon(s) + \tilde{1}_\epsilon(s)\). Observe that \(q(s) \ge 0\) for \(s \ge 0\). Thus, for \( (t_k - s_k)\tilde{1'}_\epsilon(t_k - s_k) + \tilde{1}_\epsilon(t_k - s_k) = (s_k - t_k)\tilde{1'}_\epsilon(s_k - t_k) + \tilde{1}_\epsilon(s_k - t_k) = 0\), we need that \(s_k = t_k\). At \(s_k = t_k\), \(q(s_k - t_k) = q(t_k - s_k) = \frac{1}{2} \ne 0\). Hence, we have shown that \(\Jb_{\tilde{\fb}_\epsilon, :, k} \ne \mathbf{0} \forall k \in [d]\), thereby \(\Jb_{\tilde{\fb}_\epsilon}\) is full column-rank and \(c_{\textsc{ima}}(\tilde{\fb}_\epsilon, \sb)\) is finite for all \(\sb \in \BB\).

\end{enumerate}

Hence, 
\begin{align*}
    C_{\textsc{ima}}(\tilde{\fb}_\epsilon, p_\sb) &= \int_{\RR^d}c_{\textsc{ima}}(\tilde{\fb}_\epsilon, \sb)\;p_\sb d\sb \\
    &= \bigcup_{\bb \in \{0, 1\}^d}\int_{\text{int}(\PP^{(\bb)})}c_{\textsc{ima}}(\tilde{\fb}_\epsilon, \sb)\;p_\sb d\sb + \int_{\BB}c_{\textsc{ima}}(\tilde{\fb}_\epsilon, \sb)\;p_\sb d\sb\\
    &= \bigcup_{\bb \in \{0, 1\}^d}\int_{\text{int}(\PP^{(\bb)})}c_{\textsc{ima}}(\tilde{\fb}_\epsilon, \sb)\;p_\sb d\sb + \Theta(\epsilon)\\
    &\approx \bigcup_{\bb \in \{0, 1\}^d}\int_{\text{int}(\PP^{(\bb)})}c_{\textsc{ima}}(\tilde{\fb}_\epsilon, \sb)\;p_\sb d\sb \quad \text{for }\epsilon\text{ arbitrarily small.}\\
    &\le \max_{\sb \in \RR^d}c_{\textsc{ima}}(\tilde{\fb}_\epsilon, \sb)\int_{\RR^d}p_\sb d\sb\\
    &\le \max_{\sb \in \RR^d}c_{\textsc{ima}}(\tilde{\fb}_\epsilon, \sb) \le \delta  \quad \text{w. p. }\ge 1 - \min \left\{1, \exp(2log d - \kappa(m -1)\frac{\delta^2}{d^2})\right \}\,, 
\end{align*}
by Theorem~\ref{local:ima:thm}.

Thus, \(C_{\textsc{ima}}(\tilde{\fb}_\epsilon, p_\sb) \le \delta \) for \(\tilde{\fb}_\epsilon: \RR^d \to \RR^m\) defined in \ref{def:smooth:2} with (high) probability at least \( 1 - \min \left\{1, \exp(2\log d - \kappa(m -1)\frac{\delta^2}{d^2})\right \}\) for \(m \gg d\) where \(\delta < \frac{1}{2}\) is arbitrarily small.

\end{proof}

\paragraph{Defining non-linear functions as grid-wise affine functions}

\label{grid:section}

In the previous subsection, we defined smooth nonlinear functions, by smoothly approximating affine functions defined on orthants across a given point (Definition~\ref{map:definition:2d},~\ref{def:smooth:2d}). In this subsection, we extend the previous sampling process for functions 
to consider functions which smoothly approximate functions which are piecewise affine across a grid-like partition of the domain. However, unlike the previous function sampling processes, we now restrict ourselves, in the current sampling process, to define functions on a bounded subset of the \(d\)-dimensional Euclidean space and without loss of generality, we choose our domain to be \([0, 1]^d\). We make this restriction since upon defining a regular grid (with fixed grid width) on an unbounded domain, we would no longer be able to argue that the columns of the Jacobian of the to-be defined sampling process of functions (definition~\ref{map:definition:grid}), \(\fb: \RR^d \to \RR^m\), defining the local bases, would no longer be linearly independent with probability 1, since there are infinitely many of them, and we can no longer apply Lemma ~\ref{linind:sphind}. Recall that we needed the linear independence of the local bases of the function with respect to one another to show injectivity of \(\fb\) (Lemmata ~\ref{inj:comp:2}, ~\ref{lemma:smooth:2}, ~\ref{inj:2d}, ~\ref{lemma:smooth:2d}). In the context of signal processing, the assumption of bounded domain for the sources is often satisfied.

We proceed in the usual fashion, we first use grid-wise affine function and show their continuity and injectvity.  Then, we define a smooth approximation to the grid-wise affine function and show that it is continuous, injective and continuously differentiable, and hence, the Jacobian of the function is defined at all points and the global IMA contrast can be computed. Then, we derive a high probability bound on the global IMA contrast in this scenario such the bound holds with growing probability as the dimensionality of the observed space increases.

Following is the definition of the grid-wise affine function.
\begin{definition}
\label{map:definition:grid}
The maps \(\fb: \RR^d \to \RR^m\) we consider are defined as follows:
\begin{enumerate}
    \item We define a function, \(\fb: [0, 1]^d \to \RR^m\) as a grid-wise affine function, applied to say \(\sb \in [0, 1]^d\).
    
    \item Consider a partition of the domain \([0, 1]^d\) as a regular grid, with grid width \(1 \ge \delta > \epsilon > 0, \delta, \epsilon \in \RR\). The number of grid parts along a dimension, \(k \in [d]\), of the domain \([0, 1]^d\) is therefore equal to \(p = \ceil{\frac{1}{\delta}}{} + 1\) where \(\ceil{.}{}\) is the ceiling function.
    
    \item To define partitions of the domain, \([0, 1]^d\), consider the vector, \(\bb ; [0, 1]^d \to [p]^d, [p] = \{1, 2, ..., p\}\), where \(\bb(\sb) = (b_1(\sb), b_2(\sb), ..., b_d(\sb))\) such that \(b_k(\sb) \coloneqq \ceil{\frac{s_k}{\delta}}{}\). The partition of the domain, \([0, 1]^d\), we consider is defined as \([0, 1]^d = \PP_{[0, 1]^d} = \bigcup_{\bb \in [p]^d}\PP^{(\bb)}\) where \(\PP^{(\bb)} \coloneqq \{\sb \quad | \quad \bb(\sb) = \bb\}\). Note that the partition defined is axis-aligned to the canonical basis in \(\RR^d\). This follows to extend the continuity argument from the two-partition case in Lemma ~\ref{resampling}, observation ~\ref{obs:part:boundary}. 
    
    \item Consider the matrices, \(\Jb^{(1)}, \Jb^{(2)}, ..., \Jb^{(p)} \in \RR^{m \times d}\), used to define the Jacobian in each part, \(\PP^{(\bb)} \; \forall \bb \in [p]^d\). The columns of \(\Jb^{(1)}, \Jb^{(2)}, ..., \Jb^{(p)}\) are sampled from a spherically symmetric distribution, \(p_\rb\), \(\Jb^{(i)}_{1}, \Jb^{(i)}_{2}, ..., \Jb^{(i)}_{d} \overset{i.i.d}{\sim} p_\rb \forall i \in [p]\), so that the pre-condition for Theorem ~\ref{local:ima:thm} holds almost everywhere.
    
    \item For \(\sb \in \RR^d\) with \(\bb(\sb) = \bb \in [p]^d\), \(\Jb_\fb(\sb) = \Jb^{(\bb)}\) such that  \[ \left. \begin{cases}
    \Jb^{(\bb)}_{:, k} = \Jb^{(i)}_{:, k}\,, & b_k = i\,, \; \forall \; k \in [d], i \in [p]
    \end{cases} \right \} \]
    
    Note that this corresponds to Observation~\ref{obs:part:boundary} where changing one column of \(\Jb_\fb(\sb)\) across a partition of the domain results in axis-aligned partitions, also akin to the definition in \ref{map:definition:2d}, except we now have a regular grid where each dimension is split into \(p\) segments rather than orthants across a given point in the domain. 
    
    \item \(\fb: \RR^d \to \RR^m\) is defined as: \( \left. \begin{cases}
    \fb(\sb) = \Jb^{(\bb)}(\sb) + \cb^{(\bb)} | \quad  \bb(\sb) = \bb
    \end{cases} \right \} \), where \(\cb^{(\bb)} \in \RR^m \; \forall \; \bb \in [p]^d\).
    
    \item Owing to the axis-alignment of the chosen partition, \(\PP_[0, 1]^d\), \(\fb: \RR^d \to \RR^m\) can be decomposed as a sum of functions acting upon individual coordinates (also called as coordinate-wise functions), \(\fb(\sb) = \sum^d_{k=1}\fb_k(s_k) \; \forall \; k \in [d]\) where \(\fb_k: [0, 1] \to \RR^m\) 
    is defined as,

    \begin{equation}
        \label{coord:wise:grid}
        \fb_k(s_k) \coloneqq  \sum^p_{t=1}(\Jb^{(t)}_{:, k}(s_k - (t-1)\delta) + \sum^{t-1}_{i=1}\Jb^{(i)}_{:,k}\delta)1_{s_k \in ((t-1)\delta, t\delta]}
    \end{equation}

\end{enumerate}

\end{definition}

We now show that maps, \(\fb: \RR^d \to \RR^m\) defined in ~\ref{map:definition:grid} are continuous and injective.

\begin{lemma}[Continuity of grid-wise affine functions]
\label{cont:grid}
Consider maps \(\fb: \RR^d \to \RR^m\) as defined in ~\ref{map:definition:grid}. Such a map \(\fb\) is continuous.

\end{lemma}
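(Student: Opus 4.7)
The plan is to reduce continuity of $\fb:[0,1]^d\to\RR^m$ to continuity of its coordinate-wise summands, then verify that each summand is a continuous piecewise-affine map on $[0,1]$. From~\eqref{coord:wise:grid} we have the decomposition $\fb(\sb)=\sum_{k=1}^d \fb_k(s_k)$, so it suffices to show that each $\fb_k:[0,1]\to\RR^m$ is continuous; then $\fb$, being a finite sum of compositions of continuous coordinate projections with continuous maps, is continuous by Theorem 4.9 of~\citep{rudin1964principles}, as in the proof of~\Cref{cont:2d}.

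For fixed $k\in[d]$, I would partition $[0,1]$ into the sub-intervals $I_t=((t-1)\delta,\,t\delta]$ for $t=1,\ldots,p$. On the interior of each $I_t$, the function $\fb_k$ coincides with the affine map
\[
s_k\ \longmapsto\ \Jb^{(t)}_{:,k}\bigl(s_k-(t-1)\delta\bigr)\ +\ \sum_{i=1}^{t-1}\Jb^{(i)}_{:,k}\delta\,,
\]
which is continuous. The only points that could fail continuity are the grid breakpoints $s_k=t\delta$ for $t=1,\ldots,p-1$; the endpoints $0$ and $1$ are handled by the same local affineness on $I_1$ and $I_p$ respectively.

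At each breakpoint $s_k=t\delta$, I would compute the value and the relevant one-sided limits. Evaluating the $t$-th summand at $s_k=t\delta$ yields $\Jb^{(t)}_{:,k}\delta+\sum_{i=1}^{t-1}\Jb^{(i)}_{:,k}\delta=\sum_{i=1}^{t}\Jb^{(i)}_{:,k}\delta$. Approaching from the right using the $(t+1)$-th summand gives $\lim_{s_k\to t\delta^+}\bigl(\Jb^{(t+1)}_{:,k}(s_k-t\delta)+\sum_{i=1}^{t}\Jb^{(i)}_{:,k}\delta\bigr)=\sum_{i=1}^{t}\Jb^{(i)}_{:,k}\delta$. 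The two values agree, so by Theorem 4.6 of~\citep{rudin1964principles} $\fb_k$ is continuous at $s_k=t\delta$. Iterating over $t$ gives continuity of $\fb_k$ on $[0,1]$, and hence continuity of $\fb$ on $[0,1]^d$.

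There is no real obstacle here: this is a direct extension of the two-piece matching argument in~\Cref{cont:2d} (and of Observation~\ref{obs:part:boundary}), the point being that across each hyperplane $s_k=t\delta$ only the $k$-th column of the local Jacobian changes, while the contributions of the other coordinate-wise summands are unaffected, so the telescoping sum $\sum_{i=1}^{t-1}\Jb^{(i)}_{:,k}\delta$ furnishes the correct affine offset to glue consecutive pieces continuously. The only mildly delicate point is the half-open convention in the indicator $1_{s_k\in((t-1)\delta,\,t\delta]}$, which unambiguously assigns each breakpoint to exactly one piece; since the two pieces agree in value at the breakpoint, the choice of convention is immaterial.
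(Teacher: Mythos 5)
Your proof is correct and follows essentially the same route as the paper's: decompose $\fb$ into its coordinate-wise summands $\fb_k$, observe each is affine on the open grid intervals, check that the value and the one-sided limit at each breakpoint $s_k=t\delta$ agree (both equal $\sum_{i=1}^{t}\Jb^{(i)}_{:,k}\delta$), and conclude by Rudin's theorems on limits and sums of continuous functions. Your added remark about the half-open indicator convention being immaterial is a small but accurate clarification not made explicit in the paper.
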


\begin{proof}
Consider \(\fb: [0, 1]^d \to \RR^m\), \(\fb(\sb)= \sum^d_{k=1}f_k(s_k) \), where \(\fb_k(s_k) =  \Jb^{b_k(\sb)}_{:,k}(s_k - (b_k(\sb) - 1)\delta) +  \sum^{b_k(\sb)-1}_{i=1}\Jb^{(i)}_{:, k}(\delta) \; \forall \; k \in [d]\).

We show continuity of \(\fb_k: \RR \to \RR^m \; \forall k \in [d]\). For a particular \(k\), consider the cases:

\begin{enumerate}
    \item \(\BB \coloneqq \{ s_k = t\delta, \; t \in [p-1]\}\)
    
    Let \(s_k = t\delta, \; t \in [p-1], b_k(\sb) = t\), by (~\ref{coord:wise:grid}), \(\fb_k(s_k) = \sum^{t}_{i=1}\Jb^{(i)}_{:,k}\delta\), which is also equal to \(\fb_k(s_k)\) in the left limit. To show that \(\fb_k: [0, 1] \to \RR^m \) is continuous at \(s_k = t\delta\), it remains to show that the right limit of \(\fb_k(s_k)\) is equal to the value of the function at \(s_k = t\delta\). By (~\ref{coord:wise:grid}), at the right limit of \(s_k = t\delta\), \(\fb_k(s_k) = (\Jb^{(t+1)}_{:, k}(t\delta - t\delta) + \sum^{t}_{i=1}\Jb^{(i)}_{:,k}\delta) = \sum^{t}_{i=1}\Jb^{(i)}_{:,k}\delta = \fb_k(s_k)\). Hence, \(\fb_k: [0, 1] \to \RR^m\) is continuous as \(s_k = t\delta, t \in [p-1]\).

    \item \(s_k \in [0, 1] - \BB\)
    
    For these values of \(s_k\), \(\fb_k: [0, 1] \to \RR^m\) is affine. Let \(\ceil{\frac{s_k}{\delta}}{} = t\), \(\fb_k(s_k) = \Jb^{(t)}_{:, k}(s_k - (t-1)\delta) + \sum^{t-1}_{i=1}\Jb^{(i)}_{:,k}\delta\). Since, \(\fb_k(s_k)\) is affine for \(s_k\) in this region, \(\fb_k(s_k)\) is continuous in this region.

\end{enumerate}

\(\fb: \RR^d \to \RR^m, \fb(\sb)= \sum^d_{k=1}f_k(s_k)\) is continuous since the sum of continuous functions is continuous (Theorem 4.9, \citep{rudin1964principles}).
\end{proof}

To show that \(\fb: [0, 1]^d \to \RR^m\) as defined in ~\ref{map:definition:grid} is injective, we follow an analogous approach to the previous cases where non-linear functions were defined as a composition of two affine functions and \(2^d\) affine functions respectively. We first show that the images of the coordinate-wise functions of \(\fb\), \(\fb_k: [0, 1] \to \RR^m\) are in direct sum with respect to the image of \(\fb: [0, 1] \to \RR^m\). Then, we show that the coordinate-wise functions, \(\fb_k\) are injective and use the direct sum property to conclude that injectivity of \(\fb\) is implied.

\begin{lemma}
\label{direct:sum:subspace:grid}

Consider \(\Jb^{(1)}, \Jb^{(2)}, ..., \Jb^{(p)} \in \RR^{m \times d}\) as sampled in Definition ~\ref{map:definition:grid}. The vector space \(\VV = \text{span}\{\bigcup^p_{i=1}\text{cols}(\Jb^{(i)})\}\) is the direct sum of the family \begin{multline*}
    \Fcal = \{\Sbb_1= \text{span}\{\Jb^{(1)}_{:,1}, \Jb^{(2)}_{:,1}, ..., \Jb^{(p)}_{:,1}\}, \Sbb_2 = \text{span}\{\Jb^{(1)}_{:,2}, \Jb^{(2)}_{:,2}, ..., \Jb^{(p)}_{:,2}\}, \\..., \Sbb_{d-1}= \text{span}\{\Jb^{(1)}_{:,d-)}, \Jb^{(2)}_{:,d-1}, ..., \Jb^{(p)}_{:,d-1}\} , \Sbb_d = \text{span}\{\Jb^{(1)}_{:,d}, \Jb^{(2)}_{:,d}, ..., \Jb^{(p)}_{:,d}\}\}\,,
\end{multline*}
where cols(.) denotes the set of columns of a given matrix.

\end{lemma}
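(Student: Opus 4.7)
My plan is to follow exactly the same template as Lemma~\ref{direct:sum:subspace:2d}, generalized from $2$ columns per coordinate-wise subspace to $p$. The argument has two ingredients: almost-sure linear independence of the full collection of sampled column vectors, and then verifying the uniqueness-of-decomposition condition in Definition~\ref{direct:sum}.

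First, I will invoke Lemma~\ref{linind:sphind} on the collection
\[
\bigcup_{i=1}^{p} \mathrm{cols}(\Jb^{(i)}) = \{\Jb^{(i)}_{:,k} : i \in [p], \, k \in [d]\}\,,
\]
which, by construction in Definition~\ref{map:definition:grid}, consists of $pd$ vectors in $\RR^m$ sampled i.i.d.\ from the spherically symmetric distribution $p_\rb$. Provided $m \geq pd$ (which is implied by the working regime $m \gg d$ together with $p$ fixed by the grid width $\delta$), Lemma~\ref{linind:sphind} yields that all $pd$ vectors are non-zero and linearly independent with probability $1$. This point is worth flagging explicitly, since it is the only place where the regime on $m$ enters; it is the mildest potential obstacle.

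Next, I will verify the direct sum condition. Take any $\vb \in \VV$ with two decompositions $\vb = \ub_1 + \ub_2 + \cdots + \ub_d = \wb_1 + \wb_2 + \cdots + \wb_d$, where $\ub_k, \wb_k \in \Sbb_k$. By definition of $\Sbb_k$, there exist scalars $\{c^{(i)}_k\}$ and $\{c^{(i)\prime}_k\}$ such that
\[
\ub_k = \sum_{i=1}^{p} c^{(i)}_k \Jb^{(i)}_{:,k}\,, \qquad \wb_k = \sum_{i=1}^{p} c^{(i)\prime}_k \Jb^{(i)}_{:,k}\,.
\]
Equating the two decompositions of $\vb$ and rearranging gives
\[
\sum_{k=1}^{d} \sum_{i=1}^{p} \bigl(c^{(i)}_k - c^{(i)\prime}_k\bigr)\,\Jb^{(i)}_{:,k} \;=\; \mathbf{0}\,.
\]
By the almost-sure linear independence established above, every coefficient $c^{(i)}_k - c^{(i)\prime}_k$ vanishes, hence $\ub_k = \wb_k$ for all $k \in [d]$. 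This is precisely the uniqueness condition in Definition~\ref{direct:sum}, so $\VV = \Sbb_1 \bigoplus \Sbb_2 \bigoplus \cdots \bigoplus \Sbb_d$.

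The proof is essentially bookkeeping; conceptually it is identical to the $p=2$ case treated in Lemma~\ref{direct:sum:subspace:2d}. The only real obstacle is administrative: one must carefully index the scalars across both the coordinate index $k$ and the grid-segment index $i$, and ensure that the application of Lemma~\ref{linind:sphind} is valid for a collection of $pd$ sampled vectors (rather than just $2d$). Once this is done, the direct sum decomposition follows immediately, and this lemma then feeds into the upcoming proof of injectivity of the grid-wise affine maps via Lemma~\ref{inj:coord:to:full} in exactly the same manner as in the two- and $2^d$-partition cases.
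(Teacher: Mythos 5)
Your proof is correct and follows essentially the same route as the paper's: invoke Lemma~\ref{linind:sphind} to get almost-sure linear independence of all $pd$ sampled columns (the paper likewise flags that $m > pd$ suffices), then equate two decompositions of an arbitrary $\vb \in \VV$, expand each $\ub_k, \wb_k$ over the $p$ columns of $\Sbb_k$, rearrange, and conclude all coefficient differences vanish. No meaningful divergence from the paper.
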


\begin{proof}

From Lemma ~\ref{linind:sphind}, for the scenario \(m \gg d\) (here it is sufficient to have (\(m > p.d\))), the set of vectors \(\{\bigcup^p_{i=1}\text{cols}(\Jb^{(i)})\}\) are non-zero and linearly independent with probability 1. 

Consider \(\vb \in \VV, \ub_1, \wb_1 \in \Sbb_1, \ub_2, \wb_2 \in \Sbb_2, ..., \ub_d, \wb_d \in \Sbb_d\) such that 
\begin{equation*}
    \vb = \ub_1 + \ub_2 + ... + \ub_d, \vb = \wb_1 + \wb_2 + ... + \wb_d
\end{equation*}

By definition ~\ref{direct:sum}, to show \( \VV = \Sbb_1 \bigoplus \Sbb_2 \bigoplus ... \Sbb_d\), we need to show \(\ub_1 = \wb_1, \ub_2 = \wb_2, ..., \ub_d = \wb_d\). 

Let 
\begin{itemize}
    \item \(\ub_1 =  \sum^p_{i=1}c^{(i)}_1\Jb^{(i)}_{:,1}, \; \wb_1 =  \sum^p_{i=1}c^{(i)'}_1\Jb^{(i)}_{:,1}\)
    \item \(\ub_2 =  \sum^p_{i=1}c^{(i)}_2\Jb^{(i)}_{:,2}, \; \wb_2 =  \sum^p_{i=1}c^{(i)'}_2\Jb^{(i)}_{:,2}\)\\
    \(\vdots\)
    \item \(\ub_{d-1} =  \sum^p_{i=1}c^{(i)}_{d-1}\Jb^{(i)}_{:,d-1}, \; \wb_2 =  \sum^p_{i=1}c^{(i)'}_{d-1}\Jb^{(i)}_{:,d-1}\)
    \item \(\ub_{d} =  \sum^p_{i=1}c^{(i)}_{d}\Jb^{(i)}_{:,d}, \; \wb_2 =  \sum^p_{i=1}c^{(i)'}_{d}\Jb^{(i)}_{:,d}\)
\end{itemize}

\begin{align*}
    &\vb = \ub_1 + \ub_2 + ... + \ub_d = \wb_1 + \wb_2 + ... + \wb_d\\
    &\sum^p_{i=1}c^{(i)}_1\Jb^{(i)}_{:,1} + \sum^p_{i=1}c^{(i)}_2\Jb^{(i)}_{:,2} + ... + \sum^p_{i=1}c^{(i)}_{d}\Jb^{(i)}_{:,d} = \sum^p_{i=1}c^{(i)'}_1\Jb^{(i)}_{:,1} + \sum^p_{i=1}c^{(i)'}_2\Jb^{(i)}_{:,2} + ... + \sum^p_{i=1}c^{(i)'}_{d}\Jb^{(i)}_{:,d}\\
    &\sum^p_{i=1}(c^{(i)}_1 - c^{(i)'}_1)\Jb^{(i)}_{:,1} + \sum^p_{i=1}(c^{(i)}_2 - c^{(i)'}_2)\Jb^{(i)}_{:,2} + ... + \sum^p_{i=1}(c^{(i)}_d - c^{(i)'}_d)\Jb^{(i)}_{:,d} = \mathbf{0}
\end{align*}

Since the set of vectors \(\bigcup^p_{i=1}\text{cols}(\Jb^{(i)})\) are nonzero and linearly independent with probability 1 (Lemma ~\ref{linind:sphind}),

\begin{align*}
    (c^{(i)}_1 - c^{(i)'}_1) = (c^{(i)}_2 - c^{(i)'}_2) = ... = (c^{(i)}_d - c^{(i)'}_d) = 0\quad \forall i \in [p]
\end{align*}

Hence, it follows that \(\ub_1 = \wb_1, \ub_2 = \wb_2, ..., \ub_d = \wb_d\) and \( \VV = \Sbb_1 \bigoplus \Sbb_2 \bigoplus ... \Sbb_d\).

\end{proof}

\begin{lemma}[Injectivity of grid-wise affine maps]
\label{inj:2d}
Consider maps \(\fb: \RR^d \to \RR^m\) as defined in ~\ref{map:definition:grid}. Such a map \(\fb\) is injective.

\end{lemma}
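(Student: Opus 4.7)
The plan is to follow the template of Lemma~\ref{inj:comp:2} and its $2^d$-piece extension: decompose $\fb$ as a sum of coordinate-wise functions, prove injectivity of each, and then lift this to injectivity of $\fb$ by combining the direct-sum result of Lemma~\ref{direct:sum:subspace:grid} with Lemma~\ref{inj:coord:to:full}.

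First I would write $\fb(\sb) = \sum_{k=1}^d \fb_k(s_k)$ with $\fb_k:[0,1]\to\RR^m$ as in~\eqref{coord:wise:grid}, and invoke Lemma~\ref{linind:sphind} to get that, for $m \gg pd$, the entire family $\{\Jb^{(t)}_{:,k} : t\in[p],\,k\in[d]\}$ is nonzero and linearly independent with probability one. This is the only probabilistic ingredient.

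Next I would show each $\fb_k:[0,1]\to\RR^m$ is injective by contradiction. Suppose $s_k^{(1)} \neq s_k^{(2)}$ and $\fb_k(s_k^{(1)}) = \fb_k(s_k^{(2)})$, and set $t_\ell = \lceil s_k^{(\ell)}/\delta \rceil$, so that $s_k^{(\ell)} \in ((t_\ell - 1)\delta,\, t_\ell \delta]$. If $t_1 = t_2 = t$, then $\Jb^{(t)}_{:,k}\bigl(s_k^{(1)} - s_k^{(2)}\bigr) = \mathbf{0}$ forces $s_k^{(1)} = s_k^{(2)}$ since $\Jb^{(t)}_{:,k} \neq \mathbf{0}$, a contradiction. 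If WLOG $t_1 < t_2$, substituting~\eqref{coord:wise:grid}, cancelling the common telescoping constant $\sum_{i=1}^{t_1-1} \Jb^{(i)}_{:,k}\delta$, and regrouping yields
\begin{equation*}
    \Jb^{(t_1)}_{:,k}\bigl(s_k^{(1)} - t_1\delta\bigr) \;-\; \Jb^{(t_2)}_{:,k}\bigl(s_k^{(2)} - (t_2-1)\delta\bigr) \;-\; \sum_{i=t_1+1}^{t_2-1} \Jb^{(i)}_{:,k}\,\delta \;=\; \mathbf{0}\,.
\end{equation*}
By linear independence of $\{\Jb^{(i)}_{:,k}\}_{i=1}^p$ every coefficient must vanish. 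The middle sum forces $\delta = 0$ whenever $t_2 \geq t_1 + 2$, which is an immediate contradiction; in the adjacent case $t_2 = t_1 + 1$ the middle sum is empty, and the second coefficient forces $s_k^{(2)} = (t_2-1)\delta$, contradicting $s_k^{(2)} \in ((t_2-1)\delta,\, t_2\delta]$.

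Finally, since $\mathrm{span}(\mathrm{Im}(\fb_k)) \subseteq \mathrm{span}\{\Jb^{(1)}_{:,k},\ldots,\Jb^{(p)}_{:,k}\}$, the direct-sum property established in Lemma~\ref{direct:sum:subspace:grid} descends to the subspaces $\Sbb_k := \mathrm{span}(\mathrm{Im}(\fb_k))$, and Lemma~\ref{inj:coord:to:full} then promotes coordinate-wise injectivity to injectivity of $\fb$. The main obstacle I anticipate is the bookkeeping in the case $t_1 < t_2$: one has to verify that after telescoping the cumulative-sum terms, each \emph{interior} column $\Jb^{(i)}_{:,k}$ for $t_1 < i < t_2$ appears with the nonzero coefficient $-\delta$, which is what rules out non-adjacent-cell collisions; the left-open, right-closed convention of the intervals $((t-1)\delta,\, t\delta]$ then rules out the residual adjacent case.
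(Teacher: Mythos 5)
Your proof is correct and follows essentially the same strategy as the paper's: decompose $\fb$ coordinate-wise, invoke Lemma~\ref{linind:sphind} for almost-sure linear independence of the family $\{\Jb^{(t)}_{:,k}\}$, prove injectivity of each $\fb_k$, then lift via Lemma~\ref{direct:sum:subspace:grid} and Lemma~\ref{inj:coord:to:full}. The only cosmetic difference is in the coordinate-wise step: the paper factors $\fb_k$ through an explicit coefficient-vector map $\tb:[0,1]\to[0,1]^p$ and argues that $\tb$ is injective (using the count of nonzero entries to pin down the grid cell), whereas you telescope the partial-sum terms directly and read the conclusion off the vanishing of each coefficient in front of an independent column; the two are equivalent, and your case split ($t_2 \ge t_1+2$ forces $\delta = 0$; $t_2 = t_1+1$ forces $s_k^{(2)} = (t_2-1)\delta$, contradicting the left-open interval convention) is the same content phrased without the auxiliary map.
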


\begin{proof}

Consider \(\fb: [0, 1]^d \to \RR^m\) written as a sum of coordinate-wise functions (Definition ~\ref{map:definition:grid}), \(\fb(\sb)= \sum^d_{k=1}f_k(s_k) \), where \( \fb_k(s_k) =   \sum^p_{t=1}(\Jb^{(t)}_{:, k}(s_k - (t-1)\delta) + \sum^{t-1}_{i=1}\Jb^{(i)}_{:,k}\delta)1_{s_k \in ((t-1)\delta, t\delta]}\).

In the following proof, we show injectivity of the coordinate-wise functions \(\fb_k: [0, 1] \to \RR^m \; \forall k \in [d]\) and conclude that \(\fb: [0, 1]^d \to \RR^m\) is injective by Lemma ~\ref{direct:sum:subspace:grid} and Lemma ~\ref{inj:coord:to:full}. For a particular \(k\), to show that \(\fb_k: [0, 1] \to \RR^m\) is injective, we need to show the following:

\begin{equation}
    \label{inj:grid}
    \forall s^{(1)}_k, s^{(2)}_k \in [0, 1] : \quad \fb_k(s^{(1)}_k) = \fb_k(s^{(2)}_k) \implies s^{(1)}_k = s^{(2)}_k
\end{equation}

As usual, we show (~\ref{inj:grid}) by contradiction. Let
\begin{equation}
    \label{contradiction:grid}
    \exists s_k^{(1)} \ne s_k^{(2)} \in [0, 1] s.t. \fb_k(s_k^{(1)}) = \fb_k(s_k^{(2)}) 
\end{equation}
Observe that \(\fb_k(s_k) \in \Sbb_k = \text{span}(\Jb^{(1)}_{:,k}, \Jb^{(2)}_{:,k}, ..., \Jb^{(p)}_{:,k})\). We define the coefficient vector for a given \(s_k \in \RR, \ceil{\frac{s_k}{\delta}}{} = t+1\), \(\tb: [0, 1] \to [0, 1]^p, \tb(s_k) \coloneqq \sum^t_{i=1}\delta\eb_i + (s_k - t\delta)\eb_{t+1} \), \(\eb_i\) denotes the \(i\)-th canonical orthonormal basis vector in \(\RR^p\).
Then we get
\begin{align*}
    \fb_k(s_k^{(1)}) &= \fb_k(s_k^{(2)})\\
    [\Jb^{(1)}_{:,k} \; \Jb^{(2)}_{:,k} \; ... \;\Jb^{(p)}_{:,k}]\tb(s_k^{(1)}) &= [\Jb^{(1)}_{:,k} \; \Jb^{(2)}_{:,k} \; ... \;\Jb^{(p)}_{:,k}]\tb(s_k^{(2)})\\
    \implies \tb(s_k^{(1)}) &= \tb(s_k^{(2)})
    \end{align*}
    because \(\Jb^{(1)}_{:,k}, \Jb^{(2)}_{:,k}, ..., \Jb^{(p)}_{:,k}\)  are linearly independent w. p. 1, \text{Lemma ~\ref{linind:sphind}}
    This implies
    \begin{align*}
\ceil{\frac{s^{(1)}_k}{\delta}}{} = \ceil{\frac{s^{(2)}_k}{\delta}}{} = t+1 &, s^{(1)}_k - t\delta = s^{(2)}_k - t\delta \quad \quad \text{ for some }t \in \NN\\
    \implies  s^{(1)}_k = s^{(2)}_k\,.
\end{align*}

Thus, we arrive at a contradiction to (~\ref{contradiction:grid}), and therefore (~\ref{inj:grid}) holds, and \(\fb_k: [0, 1] \to \RR^m\) defined as in ~\ref{map:definition:grid} is injective \(\forall k \in [d]\).

We now show that the injectivity of the coordinate-wise functions \(\fb_k: [0, 1] \to \RR^d\) implies the injectivity of \(\fb: [0, 1]^d \to \RR^m\), \(\fb(\sb)= \sum^d_{k=1}\fb_k(s_k) \). Observe that by definition ~\ref{map:definition:grid}

\begin{enumerate}
    \item \(\Sbb_1\) = span(Im(\(\fb_1\))) = span(\(\Jb^{(1)}_{:,1}, \Jb^{(2)}_{:,1}, ..., \Jb^{(p)}_{:,1}\))
    \item \(\Sbb_2\) = span(Im(\(\fb_2\))) = span(\(\Jb^{(1)}_{:,2}, \Jb^{(2)}_{:,2}, ..., \Jb^{(p)}_{:,2}\))\\
    \vdots
    \item \(\Sbb_d\) = span(Im(\(\fb_d\))) = span(\(\Jb^{(1)}_{:,d}, \Jb^{(2)}_{:,d}, ..., \Jb^{(p)}_{:,d}\))
\end{enumerate}

Consider \(\VV = \text{span}(\text{Im}(\fb))\). By Lemma ~\ref{direct:sum:subspace:grid}, \(\VV = \Sbb_1 \bigoplus \Sbb_2 \bigoplus ... \Sbb_d\). Further, by Lemma ~\ref{inj:coord:to:full}, injectivity of \(\fb_k: [0, 1]^d \to \RR^m \forall k \in [d]\) implies injectivity of \(\fb: [0, 1]^d \to \RR^m\). 

\end{proof}
We proceed to define a smooth approximation to \(\fb: [0, 1]^d \to \RR^m\) defined in ~\ref{map:definition:grid}.
\begin{definition}[Smooth approximation to grid-wise affine maps]
 \label{def:smooth:grid}
    Consider the decomposition of \(\fb: [0, 1]^d \to \RR^m\) as a sum of coordinate-wise functions, \(\fb(\sb)= \sum^d_{k=1}f_k(s_k) \), \(\sb = (s_1, s_2, ..., s_d) \in [0, 1]^d\) where 
    \begin{equation*}
        \fb_k(s_k) =  \sum^p_{t=1}(\Jb^{(t)}_{:, k}(s_k - (t-1)\delta) + \sum^{t-1}_{i=1}\Jb^{(i)}_{:,k}\delta)1_{s_k \in ((t-1)\delta, t\delta]}\,.
    \end{equation*}
    We define the smoothened version of \(\fb: [0, 1]^d \to \RR^m\) as \(\tilde{\fb}_\epsilon(\sb) = \sum_{k=1}^d\tilde{\fb}_{\epsilon, k}(s_k)\) for \(\epsilon > 0\) arbitrarily small where
    \begin{equation*}
        \tilde{\fb}_{\epsilon, k}(s_k) \coloneqq \sum^p_{t=1}\left (\Jb^{(t)}_{:, k}(s_k - (t-1)\delta) + \sum^{t-1}_{i=1}\Jb^{(i)}_{:,k}\delta \right )(\tilde{1}_\epsilon(s_k - (t-1)\delta ) - \tilde{1}_\epsilon(s_k - t\delta))\,.
    \end{equation*}
\end{definition}

\begin{lemma}

\label{lemma:smooth:grid}
Functions \(\tilde{\fb}_{\epsilon}: [0, 1]^d \to \RR^m\) defined in ~\ref{def:smooth:grid} are continuously differentiable in \([0, 1]^d\), in addition to being continuous and injective, with \(\epsilon > 0\) arbitrarily small.

\end{lemma}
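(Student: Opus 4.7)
The plan is to mirror the strategy of Lemmata~\ref{lemma:smooth:2} and~\ref{lemma:smooth:2d}, now carried out for the $p$-piece grid construction. Since $\tilde{\fb}_\epsilon$ decomposes as a sum of coordinate-wise maps $\tilde{\fb}_{\epsilon,k}:[0,1]\to\RR^m$ by Definition~\ref{def:smooth:grid}, I would prove continuity, injectivity, and continuous differentiability for each $\tilde{\fb}_{\epsilon,k}$ individually and then lift the three properties to $\tilde{\fb}_\epsilon(\sb) = \sum_{k=1}^d \tilde{\fb}_{\epsilon,k}(s_k)$: lifting continuity and $C^1$-smoothness uses only that sums of continuous (resp.\ continuously differentiable) functions in disjoint coordinates remain so, whereas lifting injectivity uses Lemma~\ref{direct:sum:subspace:grid} combined with Lemma~\ref{inj:coord:to:full}.

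Continuity is immediate: each $\tilde{\fb}_{\epsilon,k}$ is a finite sum of products of affine maps in $s_k$ and the continuous smooth-step function $\tilde{1}_\epsilon(\cdot)$, so Theorems~4.9--4.10 of~\citep{rudin1964principles} apply verbatim as in the earlier smoothing lemmata. Continuous differentiability is equally routine: $\tilde{1}_\epsilon'$ is continuous and bounded (non-zero only on bands of width $2\epsilon$ around the grid boundaries), so term-wise differentiation of $\tilde{\fb}_{\epsilon,k}$ yields a continuous $\tilde{\fb}_{\epsilon,k}'$ and Theorem~9.21 of~\citep{rudin1964principles} then gives continuous differentiability of $\tilde{\fb}_\epsilon$.

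The main obstacle is injectivity of $\tilde{\fb}_{\epsilon,k}$, which requires more bookkeeping than the $p=2$ case. I would regroup terms in Definition~\ref{def:smooth:grid} to write $\tilde{\fb}_{\epsilon,k}(s_k) = [\Jb^{(1)}_{:,k}\;\cdots\;\Jb^{(p)}_{:,k}]\,\tb_k(s_k)$ and show, using the inner telescoping identity $\sum_{t=i+1}^p(\tilde{1}_\epsilon(s_k-(t-1)\delta) - \tilde{1}_\epsilon(s_k-t\delta)) = \tilde{1}_\epsilon(s_k-i\delta) - \tilde{1}_\epsilon(s_k-p\delta)$, that the $i$-th coordinate of $\tb_k(s_k)$ equals $(s_k-(i-1)\delta)\tilde{1}_\epsilon(s_k-(i-1)\delta) - (s_k-i\delta)\tilde{1}_\epsilon(s_k-i\delta)$. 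Summing over $i$ telescopes a second time to
\begin{equation*}
    \mathbf{1}^\top \tb_k(s_k) = s_k\,\tilde{1}_\epsilon(s_k) - (s_k - p\delta)\,\tilde{1}_\epsilon(s_k - p\delta) = s_k\,\tilde{1}_\epsilon(s_k)\,,
\end{equation*}
where the second term vanishes on $[0,1]$ because $p\delta \geq 1+\delta > 1+\epsilon$ forces $s_k-p\delta < -\epsilon$. The map $s_k \mapsto s_k\,\tilde{1}_\epsilon(s_k)$ is strictly increasing on $[0,1]$ (equal to $s_k$ on $[\epsilon,1]$ and with derivative $\tilde{1}_\epsilon(s_k)+s_k\,\tilde{1}_\epsilon'(s_k) \geq 1/2$ on $[0,\epsilon]$), hence injective. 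Combined with Lemma~\ref{linind:sphind}, which guarantees linear independence of $\Jb^{(1)}_{:,k},\ldots,\Jb^{(p)}_{:,k}$ with probability one for $m \gg d$ (specifically $m \geq pd$ suffices), this chain gives $\tilde{\fb}_{\epsilon,k}(s_k^{(1)}) = \tilde{\fb}_{\epsilon,k}(s_k^{(2)}) \Rightarrow \tb_k(s_k^{(1)}) = \tb_k(s_k^{(2)}) \Rightarrow \mathbf{1}^\top\tb_k(s_k^{(1)}) = \mathbf{1}^\top\tb_k(s_k^{(2)}) \Rightarrow s_k^{(1)} = s_k^{(2)}$, i.e.\ injectivity of $\tilde{\fb}_{\epsilon,k}$. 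Injectivity of $\tilde{\fb}_\epsilon$ then follows from Lemmata~\ref{direct:sum:subspace:grid} and~\ref{inj:coord:to:full}, completing the proof.
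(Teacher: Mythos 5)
Your proposal is correct, and the injectivity argument is a genuinely different (and arguably cleaner) route than the paper's. The paper also writes $\tilde{\fb}_{\epsilon,k}(s_k) = [\Jb^{(1)}_{:,k}\;\cdots\;\Jb^{(p)}_{:,k}]\,\tb(s_k)$ and reduces to injectivity of the coefficient vector $\tb$, but then proceeds by a case analysis: it counts the number of non-zero entries of $\tb(s_k)$ to first localize $s_k^{(1)}, s_k^{(2)}$ to the same grid strip, and then argues that the leading coefficient $(s_k - (r-1)\delta)\tilde{1}_\epsilon(s_k - (r-1)\delta)$ is monotonic within that strip. Your argument instead contracts $\tb$ against $\mathbf{1}^\top$, observes the double telescoping $\mathbf{1}^\top\tb(s_k) = s_k\tilde{1}_\epsilon(s_k)$ (after killing the $\tilde{1}_\epsilon(s_k - p\delta)$ tail using $p\delta \ge 1 + \delta > 1 + \epsilon$), and appeals to strict monotonicity of the single scalar function $s_k \mapsto s_k\tilde{1}_\epsilon(s_k)$ on $[0,1]$. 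This sidesteps the region-by-region bookkeeping entirely and is in fact more robust: the paper's "number of non-zero entries" classification is delicate near strip boundaries (e.g.\ the $r$-th coefficient $(s_k - (r-1)\delta)\tilde{1}_\epsilon(s_k - (r-1)\delta)$ vanishes at $s_k = (r-1)\delta$, which lies inside the paper's claimed $r$-entry strip), whereas your scalar reduction is unconditional. The continuity and $C^1$ arguments match the paper's verbatim (term-wise continuity of affine-times-$\tilde{1}_\epsilon$ products, then Rudin Thm.~9.21), so no differences there. One cosmetic caveat: the formula you state for the $i$-th coordinate of $\tb_k(s_k)$ should technically carry a $-\delta\tilde{1}_\epsilon(s_k - p\delta)$ term for $i < p$; it happens to vanish for the same reason the tail of the outer telescope does, which you already note, but it would be cleaner to say so explicitly when stating the per-coordinate identity rather than only for the summed one.
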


\begin{proof}
We show that \(\tilde{\fb}_\epsilon: [0, 1]^d \to \RR^m\) defined in ~\ref{def:smooth:grid} is continuous, injective and continuously differentiable. \\

\textbf{Continuity of }\(\tilde{\fb}_\epsilon: [0, 1]^d \to \RR^m\)

Consider the coordinate-wise decomposition of \(\tilde{\fb}_\epsilon; \tilde{f}_{\epsilon, j}(\sb) = \sum^d_{k=1}\tilde{f}_{\epsilon, (j,k)}(s_k)\), where \(\tilde{f}_{\epsilon, (j,k)}: [0, 1] \to \RR\) is defined as \[\tilde{f}_{\epsilon, (j,k)}(s_k) \coloneqq  \sum^p_{t=1}\left (\Jb^{(t)}_{j, k}(s_k - (t-1)\delta) + \sum^{t-1}_{i=1}\Jb^{(i)}_{j,k}\delta \right )(\tilde{1}_\epsilon(s_k - (t-1)\delta ) - \tilde{1}_\epsilon(s_k - t\delta))\,. \]

We note that \((\Jb^{(t)}_{j, k}(s_k - (t-1)\delta) + \sum^{t-1}_{i=1}\Jb^{(i)}_{j,k}\delta)\) is continuous in \(s_k \in [0, 1] \; \forall  t \in [p], k \in [d]\) since this is affine. 
Moreover \(\tilde{1}: \RR \to \RR\) is continuous by definition. \(\tilde{1}_{\epsilon}(s_k - (t-1)\delta ), \tilde{1}_{\epsilon}(s_k - t\delta)\) are compositions of a continuous function with affine functions (thereby continuous), and hence are contiinuous (Theorem 4.9, ~\citep{rudin1964principles}). 

\(\tilde{f}_{\epsilon, j}: [0, 1]^d \to \RR\), being a sum of continuous functions, is continuous for all \(j \in [m]\) (Theorem 4.9, ~\citep{rudin1964principles}).

Since the coordinate functions of \(\tilde{\fb}: [0, 1]^d \to \RR^m\), \(\tilde{f}_j: [0, 1]^d \to \RR \; \forall j \in [m]\) are continuous, \(\tilde{\fb}_\epsilon\) is continuous (Theorem 4.10, ~\citep{rudin1964principles}).

\textbf{Injectivity of }\(\tilde{\fb}_\epsilon: [0, 1]^d \to \RR^m\)

We show injectivity of the coordinate-wise functions \(\tilde{\fb}_{\epsilon, k}: [0, 1] \to \RR^m \; \forall k \in [d]\) and conclude that \(\tilde{\fb}_\epsilon: [0, 1]^d \to \RR^m\) is injective by Lemma ~\ref{direct:sum:subspace:grid} and Lemma ~\ref{inj:coord:to:full}. For a particular \(k\), to show that \(\tilde{\fb}_{\epsilon, k}: [0, 1] \to \RR^m\) is injective, we need to show the following:

\begin{equation}
    \label{inj:smooth:grid}
    \forall s^{(1)}_k, s^{(2)}_k \in [0, 1] : \quad \fb_k(s^{(1)}_k) = \fb_k(s^{(2)}_k) \implies s^{(1)}_k = s^{(2)}_k
\end{equation}

As usual, we show (~\ref{inj:smooth:grid}) by contradiction. Let
\begin{equation}
    \label{contradiction:smooth:grid}
    \exists s_k^{(1)} \ne s_k^{(2)} \in [0, 1] s.t. \fb_k(s_k^{(1)}) = \fb_k(s_k^{(2)}) 
\end{equation}
Observe that \(\fb_k(s_k) \in \Sbb_k = \text{span}(\Jb^{(1)}_{:,k}, \Jb^{(2)}_{:,k}, ..., \Jb^{(p)}_{:,k})\). We define the coefficient vector for a given \(s_k \in \RR\), \[\tb: [0, 1] \to [0, 1]^p, \tb(s_k) \coloneqq \sum^p_{t=1} \left ( \sum^{t-1}_{i=1}\delta\eb_i + (s_k - (t-1)\delta)\eb_t \right )(\tilde{1}_\epsilon(s_k - (t-1)\delta ) - \tilde{1}_\epsilon(s_k - t\delta))\,, \] where \(\eb_i\) denotes the \(i\)-th canonical orthonormal basis vector in \(\RR^p\). Then we get
\begin{align*}
    \fb_k(s_k^{(1)}) &= \fb_k(s_k^{(2)})\\
    [\Jb^{(1)}_{:,k} \; \Jb^{(2)}_{:,k} \; ... \;\Jb^{(p)}_{:,k}]\tb(s_k^{(1)}) &= [\Jb^{(1)}_{:,k} \; \Jb^{(2)}_{:,k} \; ... \;\Jb^{(p)}_{:,k}]\tb(s_k^{(2)})\\
    \implies \tb(s_k^{(1)}) &= \tb(s_k^{(2)})\,, 
\end{align*}
because \(\Jb^{(1)}_{:,k}, \Jb^{(2)}_{:,k}, ..., \Jb^{(p)}_{:,k}\)  are linearly independent with probability one (Lemma ~\ref{linind:sphind}).

Observe that the number of non-zero entries in \(\tb(s_k)\) is determined by the value of \(s_k\). The number of nonzero entries in \(\tb(s_k)\) for \(s_k \in [0, 1]\) is as follows:

\begin{itemize}
    \item \(s_k \in \PP^{(1)}_{[0, 1]} \coloneqq [0, \delta - \epsilon)\): No. of nonzero entries in \(\tb(s_k) = 1\)
    \item \(s_k \in \PP^{(2)}_{[0, 1]} \coloneqq  [\delta - \epsilon, 2\delta - \epsilon)\): No. of nonzero entries in \(\tb(s_k) = 2\)\\
    \(\vdots\)
    \item \(s_k \in \PP^{(p-1)}_{[0, 1]} \coloneqq  [(p-2)\delta - \epsilon, (p-1)\delta - \epsilon)\): No. of nonzero entries in \(\tb(s_k) = p-1\)
    \item \(s_k \PP^{(p)}_{[0, 1]} \coloneqq  \in [(p-1)\delta - \epsilon, 1]\): No. of nonzero entries in \(\tb(s_k) = p\)
\end{itemize}

Therefore, \(\tb(s_k^{(1)}) = \tb(s_k^{(2)}) \implies s_k^{(1)}, s_k^{(2)} \in \PP^{(r)}_{[0, 1]}\) for \(r \in [p]\). 

\begin{enumerate}
    \item Consider \(s_k^{(1)}, s_k^{(2)} \in \PP^{(r)}_{[0, 1]}\) for \(r \in \{2, 3, ..., p\}\). 
    
    For \(s_k \in \PP^{(r)}_{[0, 1]}, r \in \{2, 3, ..., p\}, \tb(s_k) = \sum^r_{t=r-1} \left ( \sum^{t-1}_{i=1}\delta\eb_i + (s_k - (t-1)\delta)\eb_t \right )(\tilde{1}_\epsilon(s_k - (t-1)\delta ) - \tilde{1}_\epsilon(s_k - t\delta))\). Observe that the coefficient of \(\eb_r\) is equal to \(t_r(s_k) = (s_k - (r-1)\delta)\tilde{1}_\epsilon(s_k - (r-1)\delta)\). \(t_r(s_k)\) is montonic in \(s_k\).

    \(\tb(s_k^{(1)}) = \tb(s_k^{(2)}) \implies t_r(s_k^{(1)}) = t_r(s_k^{(2)})\). Thus, \(s_k^{(1)} = s_k^{(2)}\), since \(t_r(.)\) is monotonic.
    
    \item Consider \(s_k^{(1)}, s_k^{(2)} \in \PP^{(1)}_{[0, 1]}\). 
    
     For \(s_k \in \PP^{(1)}_{[0, 1]}\), \(\tb(s_k) = s_k\eb_1\). Thus, \(\tb(s_k^{(1)}) = \tb(s_k^{(2)}) \implies s_k^{(1)} = s_k^{(2)}\).
    
\end{enumerate}

Thus, we see that, \(\tb(s_k^{(1)}) = \tb(s_k^{(2)}) \implies s_k^{(1)} = s_k^{(2)}\), \(\fb_k(s_k^{(1)}) = \fb_k(s_k^{(2)}) \implies s_k^{(1)} = s_k^{(2)}\), thereby, \(\fb_k: [0, 1] \to \RR^m\) is injective \(\forall k \in [d]\).

We now show that the injectivity of the coordinate-wise functions \(\tilde{\fb}_{\epsilon, k}: [0, 1] \to \RR^d\) implies the injectivity of \(\tilde{\fb}_\epsilon: [0, 1]^d \to \RR^m\), \(\tilde{\fb}_\epsilon(\sb)= \sum^d_{k=1}\tilde{\fb}_{\epsilon, k}(s_k) \). Observe that by definition ~\ref{def:smooth:grid}

\begin{enumerate}
    \item \(\Sbb_1\) = span(Im(\(\tilde{\fb}_{\epsilon, 1}\))) = span(\(\Jb^{(1)}_{:,1}, \Jb^{(2)}_{:,1}, ..., \Jb^{(p)}_{:,1}\))
    \item \(\Sbb_2\) = span(Im(\(\tilde{\fb}_{\epsilon, 2}\))) = span(\(\Jb^{(1)}_{:,2}, \Jb^{(2)}_{:,2}, ..., \Jb^{(p)}_{:,2}\))\\
    \vdots
    \item \(\Sbb_d\) = span(Im(\(\tilde{\fb}_{\epsilon, d}\))) = span(\(\Jb^{(1)}_{:,d}, \Jb^{(2)}_{:,d}, ..., \Jb^{(p)}_{:,d}\))
\end{enumerate}

Consider \(\VV = \text{span}(\text{Im}(\tilde{\fb}_\epsilon))\). By Lemma ~\ref{direct:sum:subspace:grid}, \(\VV = \Sbb_1 \bigoplus \Sbb_2 \bigoplus ... \Sbb_d\). Further, by Lemma ~\ref{inj:coord:to:full}, injectivity of \(\tilde{\fb}_{\epsilon, k}: [0, 1]^d \to \RR^m \forall k \in [d]\) implies injectivity of \(\tilde{\fb}_\epsilon: [0, 1]^d \to \RR^m\).

\textbf{Continuity of derivatives of }\(\tilde{\fb}_\epsilon: [0, 1]^d \to \RR^m\)

Consider the derivatives of \(\tilde{\fb}_\epsilon(\sb)\) with respect to the coordinates of \(\sb = (s_1, s_2, ..., s_d)\). Since \(\tilde{\fb}_{\epsilon}(\sb)= \sum^d_{k=1}\tilde{\fb}_{\epsilon, k}(s_k)\), \(\frac{\partial \tilde{\fb}_{\epsilon}(\sb)}{\partial s_k} = \frac{d \tilde{\fb}_{\epsilon, k}(s_k)}{d s_k } = \tilde{\fb'}_{\epsilon, k}(s_k)\). By Definition ~\ref{def:smooth:grid},

\begin{align}
    \tilde{\fb}_{\epsilon, k}(s_k) &=\sum^p_{t=1}\left (\Jb^{(t)}_{:, k}(s_k - (t-1)\delta) + \sum^{t-1}_{i=1}\Jb^{(i)}_{:,k}\delta \right )(\tilde{1}_\epsilon(s_k - (t-1)\delta ) - \tilde{1}_\epsilon(s_k - t\delta)) \nonumber  \\
    \tilde{\fb'}_{\epsilon, k}(s_k) &= \sum^p_{t=1}\Jb^{(t)}_{:, k}(\tilde{1}_\epsilon(s_k - (t-1)\delta ) - \tilde{1}_\epsilon(s_k - t\delta)) \nonumber \\
    &+ \left (\Jb^{(t)}_{:, k}(s_k - (t-1)\delta) + \sum^{t-1}_{i=1}\Jb^{(i)}_{:,k}\delta \right )(\tilde{1'}_\epsilon(s_k - (t-1)\delta ) - \tilde{1'}_\epsilon(s_k - t\delta)) \label{jaco:col:grid}
\end{align}

 where by definition ~\ref{smooth:step} \(\tilde{1'}_\epsilon(s) = \begin{cases} 0 & s \le -\epsilon\\
                                          \frac{1}{2}\cos \left (\frac{\pi s}{2\epsilon} \right )\frac{\pi}{2\epsilon}   & -\epsilon < s \le \epsilon\\
                                          0 & s > \epsilon\end{cases}\). 
  Notice that \(\tilde{1'}_\epsilon: \RR \to \RR\) is continuous in \(\RR.\) \(\tilde{\fb'}_{\epsilon, k}(s_k)\) is continuous since it is composed by a sum and product of continuous functions (Theorem 4.9, ~\citep{rudin1964principles}). 
  
  To show that the derivative, \(\tilde{\fb'}_{\epsilon, k}(s_k)\), is well-defined for \(s_k \in [0, 1]\), we remark on terms containing \(\tilde{1'}_\epsilon(s_k - t\delta) \; \forall t \in [p-1]\) since \(\tilde{1'}_\epsilon(.)\) can be very large for small \(\epsilon\). Notice that the term \(\tilde{1'}_\epsilon(s_k - p\delta)\) is always equal to zero for \(s_k \in [0, 1]\) since \(p\delta > 1, \epsilon > 0\) is arbitrarily small; and \(\tilde{1'}_\epsilon(s_k - t\delta)\) is nonzero for \(t\delta - \epsilon < s_k \le t\delta + \epsilon\). The coefficient multiplied to \(\tilde{1'}_\epsilon(s_k - t\delta)\) is equal to \(\left (\Jb^{(t+1)}_{:, k}(s_k - t\delta) + \sum^{t}_{i=1}\Jb^{(i)}_{:,k}\delta - \Jb^{(t)}_{:, k}(s_k - (t-1)\delta) - \sum^{t-1}_{i=1}\Jb^{(i)}_{:,k}\delta \right ) = (\Jb^{(t+1)}_{:, k} - \Jb^{t}_{:, k})(s_k - t\delta)\). Thus for \(t\delta - \epsilon < s_k \le t\delta + \epsilon\), the term \((\Jb^{(t+1)}_{:, k} - \Jb^{t}_{:, k})(s_k - t\delta)\tilde{1'}_\epsilon(s_k - t\delta) = (\Jb^{(t+1)}_{:, k} - \Jb^{t}_{:, k})(s_k - t\delta)\frac{1}{2}\cos \left (\frac{\pi s}{2\epsilon} \right )\) is well-defined since \((s_k - t\delta) = \Theta(\epsilon)\)\footnoteref{theta-notation}.

  We have shown that \(\tilde{\fb'}_{\epsilon, k}(s_k)\) is continuous and well-defined \(\forall k \in [d]\). Thus, the derivatives \(\frac{\partial \tilde{f}_{\epsilon, i}}{\partial s_k} \forall i \in [m], k \in [d]\) are continuous for \(\epsilon > 0\) arbitrarily small. 
  
  Since all the partial derivatives of \(\tilde{\fb}_\epsilon: [0, 1]^d \to \RR^m\) are continuous, \(\tilde{\fb}\) is continuously differentiable (Theorem 9.21, ~\citep{rudin1964principles}).

 \end{proof}

We now present the theorem that introduces a bound on the global IMA contrast for non-affine maps, \(\tilde{\fb}_\epsilon: \RR^d \to \RR^m, m \gg d\), defined as a smooth approximation to grid-wise affine maps ~\ref{def:smooth:grid}. 

\begin{theorem}

\label{global:ima:thm:grid}
Consider the map \(\tilde{\fb}_\epsilon: [0, 1]^d \to \RR^m\) sampled randomly from the procedure ~\ref{def:smooth:grid}. 

Then, the map \(\tilde{\fb}_\epsilon: \RR^d \to \RR^m\), for \(\epsilon > 0\) arbitrarily small and any finite probability density, \(p_\sb\), defined over \([0, 1]^d\) satisfies the following bound on the global IMA contrast \(C_{\textsc{ima}}(\tilde{\fb}_\epsilon, p_\sb)\), \(C_{\textsc{ima}}(\tilde{\fb}_\epsilon, p_\sb) \le \delta \) with (high) probability \(\ge 1 - \min \left\{1, \exp(2log d - \kappa(m -1)\frac{\delta^2}{d^2})\right \}\) for \(m \gg d\) where \(\delta < \frac{1}{2}\) is arbitrarily small.

\end{theorem}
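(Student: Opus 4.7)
The plan is to mirror the strategy used in Theorems~\ref{global:ima:thm:2} and~\ref{global:ima:thm:2d}, lifting the \emph{linear} bound of Theorem~\ref{local:ima:thm} to the grid-wise construction by a partition-of-the-domain argument. First I would split \([0,1]^d\) into two disjoint regions: the \emph{interior} \(\mathrm{int}(\PP^{(\bb)})\) of each grid cell \(\bb\in[p]^d\), defined as the set of \(\sb\in\PP^{(\bb)}\) for which no coordinate \(s_k\) lies in the \(\epsilon\)-neighborhood \((t\delta-\epsilon,\,t\delta+\epsilon]\) of a partition hyperplane, and the \emph{boundary strip} \(\BB=[0,1]^d\setminus\bigcup_{\bb}\mathrm{int}(\PP^{(\bb)})\).

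On each interior region \(\mathrm{int}(\PP^{(\bb)})\), the smooth step function \(\tilde 1_\epsilon\) is locally constant (either \(0\) or \(1\)) along every coordinate, so, by Definition~\ref{def:smooth:grid}, \(\Jb_{\tilde\fb_\epsilon}(\sb)=\Jb^{(\bb)}\) with columns \(\Jb^{(\bb)}_{:,k}=\Jb^{(b_k)}_{:,k}\overset{\text{i.i.d.}}{\sim}p_\rb\). This is exactly the situation of Theorem~\ref{local:ima:thm}, so the local contrast satisfies \(c_{\IMA}(\tilde\fb_\epsilon,\sb)\leq\delta\) uniformly over \(\bigcup_{\bb}\mathrm{int}(\PP^{(\bb)})\) with probability at least \(1-\min\{1,\exp(2\log d-\kappa(m-1)\delta^2/d^2)\}\), using a union bound over the finitely many matrices \(\Jb^{(1)},\dots,\Jb^{(p)}\) (which only rescales constants we can absorb, since the bound is already in the tail regime \(m\gg d\)).

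On the boundary strip \(\BB\), I would verify two things. First, \(c_{\IMA}(\tilde\fb_\epsilon,\sb)\) is \emph{finite}: by Lemma~\ref{lemma:smooth:grid} the map is continuously differentiable and, as the computation of \(\Jb_{\tilde\fb_\epsilon,:,k}\) in~\eqref{jaco:col:grid} shows, each column is a strictly positive combination of the i.i.d. isotropic vectors \(\Jb^{(t)}_{:,k}\), hence nonzero and, by Lemma~\ref{linind:sphind}, the full Jacobian is column-rank-\(d\) almost surely; the log-determinant and log-norm terms in \(c_{\IMA}\) are therefore well-defined and bounded on the compact closure of \(\BB\). Second, \(\BB\) is contained in a finite union of \(\epsilon\)-neighborhoods of the \((d-1)\)-dimensional hyperplanes \(\{s_k=t\delta\}\), so its Lebesgue measure is \(\Theta(\epsilon)\); since \(p_\sb\) is finite, \(\int_\BB p_\sb(\sb)d\sb=\Theta(\epsilon)\) can be driven below any prescribed tolerance by choosing \(\epsilon\) small enough, independently of the probabilistic bound derived on the interior.

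Putting the pieces together,
\[
C_{\IMA}(\tilde\fb_\epsilon,p_\sb)=\!\!\!\sum_{\bb\in[p]^d}\!\!\int_{\mathrm{int}(\PP^{(\bb)})}\!c_{\IMA}(\tilde\fb_\epsilon,\sb)\,p_\sb\,d\sb+\int_\BB c_{\IMA}(\tilde\fb_\epsilon,\sb)\,p_\sb\,d\sb\leq \delta\cdot 1+\Theta(\epsilon),
\]
and letting \(\epsilon\to 0\) yields \(C_{\IMA}(\tilde\fb_\epsilon,p_\sb)\leq\delta\) with the stated probability. The main obstacle I anticipate is purely bookkeeping: keeping Theorem~\ref{local:ima:thm}'s union-bound constants uniform when it is invoked across all \(p^d\) cells, so that the final probability still takes the clean form \(1-\min\{1,\exp(2\log d-\kappa(m-1)\delta^2/d^2)\}\); this is resolved by observing that within any single cell only \(d\) independent columns are involved, so the \(d^2\) off-diagonal union bound in the proof of Theorem~\ref{local:ima:thm} already dominates, and the \(\Theta(\epsilon)\) boundary contribution does not enter the probability estimate at all.
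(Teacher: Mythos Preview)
Your proposal follows essentially the same route as the paper: split $[0,1]^d$ into the cell interiors (where the Jacobian equals some $\Jb^{(\bb)}$ with i.i.d.\ isotropic columns, so Theorem~\ref{local:ima:thm} applies pointwise) and an $\epsilon$-boundary strip of vanishing measure on which $c_{\IMA}$ is shown to be finite via full column rank of $\Jb_{\tilde\fb_\epsilon}$, then combine. The paper actually handles the union-bound bookkeeping you flag even more loosely than you do---it simply writes $\max_{\sb} c_{\IMA}(\tilde\fb_\epsilon,\sb)\le\delta$ and cites Theorem~\ref{local:ima:thm} without tracking the number of distinct cell Jacobians---so your sketch is, if anything, slightly more careful on that point (though note your ``strictly positive combination'' claim on $\BB$ is a bit imprecise; the paper instead checks via the auxiliary function $q(s)=\tilde 1_\epsilon(s)+s\,\tilde 1'_\epsilon(s)$ that the two coefficients of $\Jb^{(t)}_{:,k}$ and $\Jb^{(t+1)}_{:,k}$ cannot vanish simultaneously).
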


\begin{proof}
We show that the condition of Theorem ~\ref{local:ima:thm}, the columns of the Jacobian of \(\tilde{\fb}_\epsilon\) defined in ~\ref{def:smooth:2d} are locally sampled isotropically i.e. , is still satisfied for the domain of \(\tilde{\fb}_\epsilon\), i.e. \(\forall \sb \in [0, 1]^d\) almost surely w.r.t finite probability measure, \(p_\sb\) over \([0, 1]^d\).

Following from Definition ~\ref{map:definition:grid} and Definition ~\ref{def:smooth:grid}, consider the partition of the domain of \(\tilde{\fb}_\epsilon: [0, 1]^d \to \RR^m\), \([0, 1]^d\) into the following regions, 

\begin{enumerate}
    \item \(\II \coloneqq \{s_k - t\delta \ne (-\epsilon, \epsilon] \; \forall t \in [p-1], k \in [d]\}\)

    Notice that by Definition ~\ref{map:definition:2d} and Definition ~\ref{def:smooth:2d}, \(\forall \sb \in \II, \bb \in \{0, 1\}^d\), such that \(\bb(\sb) = \bb\), \(\Jb_{\tilde{\fb}_\epsilon}(\sb) = \Jb^{(\bb)}\). Recall that \(\bb(\sb)\) is defined such that, \(b_k(\sb) = \ceil{\frac{s_k}{\delta}}{} \; \forall k \in [d]\). The columns of \(\Jb^{(\bb)}\) are sampled independently from a spherically invariant distribution in \(\RR^m\), i.e. \(\Jb^{(\bb)}_{1}, \Jb^{(\bb)}_{2}, ..., \Jb^{(\bb)}_{d} \overset{i.i.d}{\sim} p_\rb\). Thus, the condition of Theorem ~\ref{local:ima:thm}, the columns of the Jacobian of \(\tilde{\fb}_\epsilon\) are locally sampled isotropically, is still satisfied for these regions.

    \item \(\BB \coloneqq [0, 1]^d - \II\)
    
    As in the case where the map, \(\tilde{\fb}_\epsilon: \RR^d \to \RR^m\) was defined as the smooth connection of \textit{two} affine maps (Theorem ~\ref{global:ima:thm:2}), the region \(\BB\) sandwiching the boundary of the partitions has arbitrarily small probability measure since: 
    
    \begin{enumerate}
    \item \(\BB\) is an \(\epsilon\)-sandwich of a \((d - 1)\)-dimensional region of a \(d\)-dimensional domain. The Lebesgue measure on \(\BB\) is equal to the volumne element associated with \(\BB\) (3.3, ~\citep{ccinlar2011probability}), thus, \(\lambda(\BB) = \Theta(\epsilon)\)\footnoteref{theta-notation} where \(\lambda(.)\) denotes the Lebesgue measure.
    
    \item \(p_\sb\) is finite at all points.
\end{enumerate}
    Hence, \(p(\BB) = \int_\BB p_\sb\lambda(\sb) = \Theta(\epsilon)\), is arbitrarily small for suitably chosen \(\epsilon\).
    
    Like in Theorem ~\ref{global:ima:thm:2}, to derive a bound on the global IMA contrast of \(\tilde{\fb_\epsilon}\), \(c_{\textsc{ima}}(\tilde{\fb_\epsilon}, p_\sb)\), we need that in region, \(\forall \sb \in \BB\), the value of the local IMA contrast  \(c_{\textsc{ima}}(\tilde{\fb_\epsilon}, \sb)\) is finite. This is equivalent to showing that the Jacobian, \(\Jb_{\tilde{\fb}_\epsilon}\) is full column-rank for all \(\sb \in \BB\). Consider the definition of \(\tilde{\fb}_\epsilon: \RR^d \to \RR^m\) (Definition ~\ref{def:smooth:grid}) in terms of coordinate-wise functions, \(\fb_{\epsilon, k}: \RR \to \RR^m, \forall k \in [d]\).
    
    \begin{equation*}
        \tilde{\fb}_{\epsilon, k}(s_k) \coloneqq \sum^p_{t=1}\left (\Jb^{(t)}_{:, k}(s_k - (t-1)\delta) + \sum^{t-1}_{i=1}\Jb^{(i)}_{:,k}\delta \right )(\tilde{1}_\epsilon(s_k - (t-1)\delta ) - \tilde{1}_\epsilon(s_k - t\delta))
    \end{equation*}
    
    By (\ref{jaco:col:grid}), the \(k\)-th column of \(\Jb_{\tilde{\fb}_\epsilon}\) for any \(k \in [d]\) is given by:
    
    \begin{align*}
        \Jb_{\tilde{\fb}_\epsilon, :, k}  &= \sum^p_{t=1}\Jb^{(t)}_{:, k}(\tilde{1}_\epsilon(s_k - (t-1)\delta ) - \tilde{1}_\epsilon(s_k - t\delta))  \\
    &+ \left (\Jb^{(t)}_{:, k}(s_k - (t-1)\delta) + \sum^{t-1}_{i=1}\Jb^{(i)}_{:,k}\delta \right )(\tilde{1'}_\epsilon(s_k - (t-1)\delta ) - \tilde{1'}_\epsilon(s_k - t\delta))
    \end{align*}
    
    For \(s_k \in (t - \epsilon, t + \epsilon]\), 
    
    \begin{align*}
        \Jb_{\tilde{\fb}_\epsilon, :, k}  &= \Jb^{(t)}_{:, k}(\tilde{1}_\epsilon(s_k - (t-1)\delta ) - \tilde{1}_\epsilon(s_k - t\delta)) + \Jb^{(t+1)}_{:, k}(\tilde{1}_\epsilon(s_k - t\delta ) - \tilde{1}_\epsilon(s_k - (t + 1)\delta))\\
        &+ (\Jb^{(t+1)}_{:, k} - \Jb^{t}_{:, k})(s_k - t\delta)\tilde{1'}_\epsilon(s_k - t\delta)\\
        &= \Jb^{(t)}_{:, k}(1 - \tilde{1}_\epsilon(s_k - t\delta) - (s_k - t\delta)\tilde{1'}_\epsilon(s_k - t\delta)) + \Jb^{(t+1)}_{:, k}(\tilde{1}_\epsilon(s_k - t\delta ) + (s_k - t\delta)\tilde{1'}_\epsilon(s_k - t\delta))\\
        & \quad \because \tilde{1}_\epsilon(s_k - (t-1)\delta ) = 1, \tilde{1}_\epsilon(s_k - (t + 1)\delta) = 0\\
        &= \Jb^{(t)}_{:, k}(\tilde{1}_\epsilon(t\delta - s_k) + (t\delta - s_k)\tilde{1'}_\epsilon(t\delta - s_k)) + \Jb^{(t+1)}_{:, k}(\tilde{1}_\epsilon(s_k - t\delta ) + (s_k - t\delta)\tilde{1'}_\epsilon(s_k - t\delta))\\
        & \quad \because \tilde{1}_\epsilon(s) + \tilde{1}_\epsilon(-s) = 1, \tilde{1'}_\epsilon(s) = \tilde{1'}_\epsilon(-s)
    \end{align*}
    
     Observe that \(\Jb_{\tilde{\fb}_\epsilon, :, k}\) is a linear combination of \(\Jb^{(1)}_{:, k}, \Jb^{(2)}_{:, k}, ..., \Jb^{(p)}_{:, k} \quad \forall k \in [d]\). Since by Lemma ~\ref{linind:sphind}, \(\bigcup^p_{i=1}\text{cols}(\Jb^{(i)})\) are all nonzero and linearly independent with respect to each other with probability 1, the columns of \(\Jb_{\tilde{\fb}_\epsilon}\) are linearly independent as long as they are all non-zero. Hence, the only possibility for \(\Jb_{\tilde{\fb}_\epsilon}\) to not be full column-rank is for \(k \in [d]\),

   \begin{align*}
        &\Jb_{\tilde{\fb}_\epsilon, :, k} = \mathbf{0}\\
        &\implies (\tilde{1}_\epsilon(t\delta - s_k ) + ( t\delta - s_k)\tilde{1'}_\epsilon(t\delta - s_k)) = 0, (\tilde{1}_\epsilon(s_k - t\delta ) + (s_k - t\delta)\tilde{1'}_\epsilon(s_k - t\delta)) = 0
        \\
        & \quad \because \Jb^{(t)}_{:, k}, \Jb^{(t+1)}_{:, k} \text{ are linearly independent.}
    \end{align*}
    
    As in Theorem~\ref{global:ima:thm:2d}, consider the function, \(q: \RR \to \RR\) such that \(q(s) = \tilde{1}_\epsilon(s) + s\tilde{1'}_\epsilon(s)\). Observe that \(q(s) \ge 0\) for \(s \ge 0\). Thus, for \( (\tilde{1}_\epsilon(t\delta - s_k ) + ( t\delta - s_k)\tilde{1'}_\epsilon(t\delta - s_k)) = (\tilde{1}_\epsilon(s_k - t\delta ) + (s_k - t\delta)\tilde{1'}_\epsilon(s_k - t\delta)) = 0\), we need that \(s_k = t\delta\).  At \(s_k = t\delta\), \(q(s_k - t\delta) = q(t\delta - s_k) = \frac{1}{2} \ne 0\). Hence, we have shown that \(\Jb_{\tilde{\fb}_\epsilon, :, k} \ne \mathbf{0} \forall k \in [d]\), thereby \(\Jb_{\tilde{\fb}_\epsilon}\) is full column-rank and \(c_{\textsc{ima}}(\tilde{\fb}_\epsilon, \sb)\) is finite for all \(\sb \in \BB\).
    
\end{enumerate}

Hence, 
\begin{align*}
    C_{\textsc{ima}}(\tilde{\fb}_\epsilon, p_\sb) &= \int_{\RR^d}c_{\textsc{ima}}\;p_\sb d\sb \\
    &= \int_{\II}c_{\textsc{ima}}(\tilde{\fb}_\epsilon, \sb)\;p_\sb d\sb + \int_{\BB}c_{\textsc{ima}}(\tilde{\fb}_\epsilon, \sb)\;p_\sb d\sb\\
    &= \int_{\II}c_{\textsc{ima}}(\tilde{\fb}_\epsilon, \sb)\;p_\sb d\sb + \Theta(\epsilon)\\
    &\approx \int_{\II}c_{\textsc{ima}}(\tilde{\fb}_\epsilon, \sb)\;p_\sb d\sb \quad \text{for }\epsilon\text{ arbitrarily small.}\\
    &\le \max_{\sb \in \RR^d}c_{\textsc{ima}}(\tilde{\fb}_\epsilon, \sb)\int_{\RR^d}p_\sb d\sb\\
    &\le \max_{\sb \in \RR^d}c_{\textsc{ima}}(\tilde{\fb}_\epsilon, \sb) \le \delta \,,
\end{align*}
with probability at least \(1 - \min \left\{1, \exp(2\log d - \kappa(m -1)\frac{\delta^2}{d^2})\right \}\) by Theorem~\ref{local:ima:thm}. 
Thus, \(C_{\textsc{ima}}(\tilde{\fb}_\epsilon, p_\sb) \le \delta \) for \(\tilde{\fb}_\epsilon: \RR^d \to \RR^m\) defined in \ref{def:smooth:grid} with (high) probability \(\ge 1 - \min \left\{1, \exp(2log d - \kappa(m -1)\frac{\delta^2}{d^2})\right \}\) for \(m \gg d\) where \(\delta < \frac{1}{2}\) is arbitrarily small.

\end{proof}

We have shown that for smoothened grid-wise affine maps, \(\tilde{\fb}_\epsilon: \RR^d \to \RR^m\) defined in ~\ref{def:smooth:grid} where locally the columns of the Jacobian, \(\Jb_{\tilde{\fb}_\epsilon}(\sb)\), are sampled independently from a spherically invariant distribution (statistical notion of independent influences), \(\Jb_{\tilde{\fb}_\epsilon, 1}(\sb), \Jb_{\tilde{\fb}_\epsilon, 2}(\sb), ..., \Jb_{\tilde{\fb}_\epsilon, d}(\sb) \sim p_\sb\), the IMA function class which formalizes the non-statistical notion of independent influences is "typical", i.e. the columns of \(\Jb_{\tilde{\fb}_\epsilon}(\sb)\) are close to orthogonal with high probability.

\end{document}